\newtheorem{theorem}{Theorem}
\theoremstyle{definition}
\newtheorem{lemma}{Lemma}
\def\BibTeX{{\rm B\kern-.05em{\sc i\kern-.025em b}\kern-.08em
    T\kern-.1667em\lower.7ex\hbox{E}\kern-.125emX}}
\begin{document}
\title{Granular Ball Twin Support Vector Machine}
\author{A. Quadir\,\orcidlink{0009-0002-0516-316X}, ~\IEEEmembership{Graduate Student Member,~IEEE}, M. Sajid\,\orcidlink{0009-0008-0465-5211}, ~\IEEEmembership{Graduate Student Member,~IEEE}, M. Tanveer{$^*$}\,\orcidlink{0000-0002-5727-3697}, ~\IEEEmembership{Senior Member,~IEEE} 
        % <-this % stops a space
\thanks{ \noindent $^*$Corresponding Author\\
Manuscript submitted to IEEE TRANSACTIONS ON NEURAL NETWORKS AND LEARNING SYSTEMS: 19 September 2023; revised 13 February 2024 and
14 July 2024; accepted 05 October 2024. \\ 
 A. Quadir, M. Sajid, and M. Tanveer are with the Department of Mathematics, Indian Institute of Technology Indore, Simrol, Indore, 453552, India (e-mail: mscphd2207141002@iiti.ac.in, phd2101241003@iiti.ac.in and mtanveer@iiti.ac.in). \\
 Digital Object Identifier 10.1109/TNNLS.2024.3476391 
 }}

\maketitle
\begin{abstract}
Twin support vector machine (TSVM) is an emerging machine learning model with versatile applicability in classification and regression endeavors. Nevertheless, TSVM confronts noteworthy challenges: $(i)$ the imperative demand for matrix inversions presents formidable obstacles to its efficiency and applicability on large-scale datasets; $(ii)$ the omission of the structural risk minimization (SRM) principle in its primal formulation heightens the vulnerability to overfitting risks; and $(iii)$ the TSVM exhibits a high susceptibility to noise and outliers, and also demonstrates instability when subjected to resampling. In view of the aforementioned challenges, we propose the granular ball twin support vector machine (GBTSVM). GBTSVM takes granular balls, rather than individual data points, as inputs to construct a classifier. These granular balls, characterized by their coarser granularity, exhibit robustness to resampling and reduced susceptibility to the impact of noise and outliers. We further propose a novel large-scale granular ball twin support vector machine (LS-GBTSVM). LS-GBTSVM's optimization formulation ensures two critical facets: $(i)$ it eliminates the need for matrix inversions, streamlining the LS-GBTSVM's computational efficiency, and $(ii)$ it incorporates the SRM principle through the incorporation of regularization terms, effectively addressing the issue of overfitting. The proposed LS-GBTSVM exemplifies efficiency, scalability for large datasets, and robustness against noise and outliers. We conduct a comprehensive evaluation of the GBTSVM and LS-GBTSVM models on benchmark datasets from UCI and KEEL, both with and without the addition of label noise, and compared with existing baseline models. Furthermore, we extend our assessment to the large-scale NDC datasets to establish the practicality of the proposed models in such contexts. Our experimental findings and rigorous statistical analyses affirm the superior generalization prowess of the proposed GBTSVM and LS-GBTSVM models compared to the baseline models. The source code of the proposed GBTSVM and LS-GBTSVM models are available at \url{https://github.com/mtanveer1/GBTSVM}.
\end{abstract}
\begin{IEEEkeywords}
 Granular ball, Granular computing, Structural risk minimization (SRM) principle, Support vector machine, Large-scale dataset, Twin support vector machine (TSVM).
\end{IEEEkeywords}
\section{Introduction}
\IEEEPARstart{S}{upport} vector machines (SVMs) \cite{cortes1995support} are advanced kernel-based machine learning models that maximize the margin between two classes in a classification problem, aiming to find the optimal hyperplane between two parallel supporting hyperplanes. SVM has proven its remarkable utility across diverse domains such as web mining \cite{bollegala2010web}, Alzheimer's disease diagnosis \cite{richhariya2020diagnosis}, and so on. SVM integrates the structural risk minimization (SRM) principle within its optimization framework, thereby enhancing its generalization capabilities by minimizing an upper bound of the generalization error. SVM solves one large quadratic programming problem (QPP), resulting in escalated computational complexity, which renders it less suitable for large-scale datasets. To alleviate the computational complexity of SVM, \citet{mangasarian2005multisurface} and \citet{khemchandani2007twin} proposed the generalized eigenvalue proximal SVM (GEPSVM) and twin SVM (TSVM), respectively. On the one hand, GEPSVM solves the generalized eigenvalue problem rather than dealing with a large QPP. On the other hand, TSVM solves two smaller-sized QPPs instead of a single large QPP, making TSVM four times faster than the standard SVM \cite{khemchandani2007twin, tanveer2022comprehensive} and firmly establishing TSVM as a standout and superior choice. TSVM generates a pair of non-parallel hyperplanes, with each hyperplane deliberately situated in close proximity to the data points belonging to one class while ensuring a minimum separation distance of at least one unit from the data points of the other class. However, TSVM encountered two notable challenges: the necessity for matrix inversions and the absence of the SRM principle in its formulation, which presented significant obstacles to its efficacy. \citet{shao2011improvements} and \citet{tian2014improved} proposed twin bounded SVM (TBSVM) and improved TSVM (ITSVM), respectively. Both the models incorporated a regularization term in their formulation, allowing the principle of SRM to be employed. In recent years, several variants of TSVM have been proposed for both small and large datasets such as pinball TSVMs \cite{tanveer2019general, xu2016novel}, least square TSVM (LTSVM) \cite{kumar2009least}, robust energy-based LTSVM (RELTSVM) \cite{tanveer2016robust} and many more.

While TSVM and its variants effectively address the computational complexity challenges posed by SVM, both SVM and TSVM encounter difficulties when faced with noisy or outlier-laden datasets. In scenarios where noise perturbs support vectors, the SVM's capacity to discern an optimal hyperplane is impeded, resulting in suboptimal outcomes. To address this issue, fuzzy SVM (FSVM) was introduced in \cite{lin2002fuzzy}, employing a degree of membership function for each training sample. Furthermore, intuitionistic fuzzy TSVM (IFTSVM) \cite{rezvani2019intuitionistic} is proposed and offers a substantial reduction in the adverse effects of noise and outliers by leveraging a set of membership and non-membership values to each training sample. Several other variants, such as \cite{liang2022intuitionistic}, have been proposed to mitigate the detrimental impact of noise and outliers; they often come at the price of increased computational complexity.

Human cognition follows the principle of ``large scope first", and the visual system is especially attuned to perceiving global topological features, processing information from larger to smaller scales or from coarse-grained to fine-grained \cite{xia2022efficient}. Unlike human cognition, most existing machine learning classifiers take inputs in the form of pixels or points because their training process consistently commences at the smallest level of granularity. This lacks the scalability and efficiency of the model. Being inspired by human cognition, \citet{xia2019granular} introduced granular ball classifiers, which utilize hyper-balls to divide the dataset into various sizes of granular balls \cite{xia2020fast}. Within the framework of granular computing, it is observed that larger granularity sizes present itself as a scalable, efficient, and robust approach that closely resembles the cognitive processes of the human brain \cite{xia2019granular, xia2021granular}. Moreover, a transition towards larger granularity entails an increased risk of reduced detail and compromised ACC. Conversely, opting for finer granularity allows for heightened detail focus, albeit potentially at the cost of efficiency and resilience in noisy environments. Hence, achieving a judicious balance in granular size is of paramount importance. Over the past decades, scholars have continually engaged in research \cite{zhang2021double,pedrycz2007development}, focusing on breaking down large volumes of information and knowledge into different granularities according to certain tasks. Choosing different granularities based on specific situations can enhance the effectiveness of multi-granularity learning approaches and efficiently tackle real-world challenges \cite{pedrycz1984identification, song2021integrating}. 

Again, to address noise and outliers, an efficient granular ball SVM (GBSVM) \cite{xia2022gbsvm} is proposed by integrating the granular ball concept into SVM. GBSVM takes inputs as granular balls generated from the dataset rather than the individual data points. GBSVM exhibits better resilience in contrast to standard SVM. Motivated by the merits of employing a granular approach for addressing noise and outliers alongside the efficiency exhibited by TSVM, the prospect of integrating these two concepts appears not only intriguing but also promising. Therefore, we propose a granular ball twin support vector machine (GBTSVM). GBTSVM takes granular balls as inputs, resulting in improved computational efficiency and a heightened ability to withstand noise and outliers. We further propose large-scale GBTSVM (LS-GBTSVM), which efficiently manages large-scale data by incorporating a regularization term in its primal form, eliminating the need for matrix inversions and reducing overfitting risk.

\vspace{0.2cm}
The main highlights of this study are outlined as follows:
\begin{itemize}
    \item We propose a novel granular ball twin support vector machine (GBTSVM). The proposed GBTSVM utilizes granular balls as inputs for classifier construction, offering enhanced robustness, resilience to resampling, and computational efficiency compared to SVM and GBSVM.    
    \item We propose a novel large-scale granular ball twin support vector machine (LS-GBTSVM) to overcome the challenges associated with handling large-scale datasets encountered by the proposed GBTSVM. The LS-GBTSVM aims to reduce the structural risk inherent in its formulation by incorporating the SRM principle through the addition of a regularization term in its primal formulation. The proposed LS-GBTSVM exhibits efficiency, scalability for large datasets, proficient handling of overfitting, robustness, and improved generalization performance compared to standard SVM, TSVM, and GBSVM. 
    \item We provide rigorous mathematical frameworks for both GBTSVM and LS-GBTSVM, covering linear and non-linear kernel spaces. 
    % Training in the kernel space elevates the proposed models' performance by effectively capturing intricate data patterns and complex relationships through non-linear transformations. 
    Further, we derive the violation tolerance upper bound (VTUB) for the proposed GBTSVM.
    \item We performed experiments on $36$ real-world UCI \cite{dua2017uci} and KEEL \cite{derrac2015keel} datasets; and 10k to 5m large-scale NDC\cite{ndc} datasets. Numerical experiments and statistical analyses confirm the superiority of the proposed GBTSVM and LS-GBTSVM models compared to the baseline models. 
    \item The proposed GBTSVM and LS-GBTSVM models are subjected to rigorous testing by adding noise to datasets. Testing under noisy conditions shows that the proposed GBTSVM and LS-GBTSVM models are insensitive to noise and stable to resampling.
\end{itemize}
The remaining structure of the paper is organized as follows. We discuss related works in Section \ref{Related Works}. In Section \ref{Granular-ball Twin Support Vector Machine (GBTSVM)} and \ref{large}, we derive the mathematical formulation of the proposed GBTSVM and LS-GBTSVM models, respectively. The time complexity of the proposed GBTSVM model is given in Section \ref{Time Complexity}. We discuss the violation tolerance upper bound of the proposed GBTSVM model in Section \ref{Violation tolerance upper bound}. In Section \ref{experimental result}, the discussion of the experimental results is made.  Finally, the conclusions and potential future research directions are given in Section \ref{conclusions}.
\section{Related Works}
\label{Related Works}
In this section, we go through the granular ball computing method. The mathematical formulation of GBSVM and TSVM is discussed in Section S.I of the supplementary material. 

\subsection{Notations}
Let $D=\{(x_i, t_i), i=1,2, \ldots, n\},$ be the traning dataset, where $t_i \in \{+1,-1\} $  represents the label of $x_i \in \mathbb{R}^{1 \times m}$. Let us consider the input matrices $A \in \mathbb{R}^{n_1 \times m}$ and $B \in \mathbb{R}^{n_2 \times m}$, where $n_1$ ($n_2$) is the number of data samples belonging to $+1$ ($-1$) class such that the total number of data samples is $n = n_1 + n_2$. The set of generated granular balls is denoted as $S=\{GB_i, \hspace{0.2cm} i=1,2, \ldots, p\} = \{((c_i, r_i), y_i),\hspace{0.2cm} i=1,2, \ldots, p\},$ where $c_i$ signifies the center, $r_i$ indicates the radius and $y_i$ is the label of the $i^{th}$ granular ball. Matrices $C_1 \in \mathbb{R}^{p_1 \times m}$ ($C_2 \in \mathbb{R}^{p_2 \times m}$) represent the centers associated with the class of $+1$ ($-1$), where $p_1+p_2 = p$. The center is computed as $c_i^\pm=\frac{1}{l_i^{\pm}}\sum_{s=1}^{l_i^\pm} x_{s}^\pm$, where $c_i^\pm$, $x_s^\pm$ $(s=1,2, \hdots, l_i^\pm)$, and $l_i^\pm$ denotes the center, data samples, and the total number of data samples inside the granular ball $GB_i$, respectively. $R_1 \in \mathbb{R}^{p_1 \times 1}$ ($R_2 \in \mathbb{R}^{p_2 \times 1}$) denote the vector encompassing the radius of the generated granular balls for $+1$ ($-1$) class and is calculated as $r_i^\pm=\frac{1}{l_i^\pm}\sum_{s=1}^{l_i^\pm}\lvert x_s^\pm - c_i^\pm\rvert$, where $r_i^\pm$ represents the radius of each granular balls of $+1$ ($-1$) class.

\subsection{Granular Ball Computing}
Granular ball computing is a substantial data processing technique introduced by \citet{xia2019granular} to address the scalability challenges associated with high-dimensional data. A notable advantage of this approach is that it requires only the radius and center to characterize a granular ball, rather than all the data points (samples) contained within that granular ball. Let ``$c$" represent the center of gravity of the data points contained inside a granular ball. The radius ``$r$" of the granular ball depicts the average distance between the center $c$ and the remaining samples contained within a granular ball. The granular ball's label is determined by selecting the label of the samples that appear most frequently among the samples enclosed within the granular ball. To perform a quantitative assessment of the divided granular ball's mass, the concept of a ``purity threshold" is introduced. This threshold refers to the percentage of samples within a granular ball that shares the same label, specifically the majority labels. 

The idea of the granular ball generation is illustrated in Figure \ref{Process of the granular-ball generation} and the optimization objective of generation of granular balls $GB_j$ $(j=1,2,\hdots,p)$ can be formulated as follows:
\begin{align}
\label{eq:1}
    & min \hspace{0.2cm} \vartheta_1 \times \frac{n}{\sum_{i=1}^{p}\lvert GB_i \rvert} + \vartheta_2 \times p \nonumber \\
    s.t. & \hspace{0.2cm} purity(GB_j) \geq T, \hspace{0.2cm} j=1,2, \ldots, p,
\end{align}
where $\vartheta_1$ and $\vartheta_2$ represent the weight coefficients, and $T$ denotes the purity threshold. The iterative process to generate the granular balls is demonstrated in Figure \ref{gb-generation}. At the initial stage, the entire dataset can be conceptualized as a single granular ball, as illustrated in Figure \ref{fig:1a}. The granular ball will undergo division, increasing its purity, as depicted in Figures \ref{fig:1b}-\ref{fig:1d}. Once the purity level of all the granular balls satisfies the specified threshold, the algorithm reaches convergence, as demonstrated in Figure \ref{fig:1e}. The granular balls obtained are visualized in Figure \ref{fig:1f}, illustrating the effectiveness of granular ball computing in capturing the underlying data distribution. 
\begin{figure}[ht]
    \centering       
% Define block styles
\tikzstyle{decision} = [diamond, draw, fill=blue!20, 
    text width=7.0em, text badly centered, node distance=1.2cm, inner sep=0pt]
\tikzstyle{block} = [rectangle, draw, fill=blue!20, 
    text width=7em, text centered, rounded corners, minimum height=3em]
\tikzstyle{line} = [draw, -latex']
\tikzstyle{cloud} = [draw, ellipse,fill=red!20, node distance=2cm,
    minimum height=4em,text width=11em]
\begin{tikzpicture}[node distance = 2.5cm, auto]
    % Place nodes
    \node [cloud] (init) {Consider the whole dataset as a granular ball};
    \node [block, below of=init] (identify) {Split the current granular ball into k sub-balls using k-means};
    \node [block, left of=identify, node distance=4.0cm] (update) {Continue to split those balls whose qualities can not meet the requirements};
    \node [decision, below of=identify,node distance=3.5cm] (decide) {The quality of each granular ball meets the requirements};
    \node [block, below of=decide, node distance=3.2cm] (stop) {Stop splitting and converge};
    % Draw edges
    \path [line] (init) -- (identify);
    \path [line] (identify) -- (decide);
    \path [line] (decide) -| node [near start] {No} (update);
    \path [line] (update) -- (identify);
    \path [line] (decide) -- node {Yes}(stop);
\end{tikzpicture}
\caption{Process of the granular ball generation}
    \label{Process of the granular-ball generation}
\end{figure}
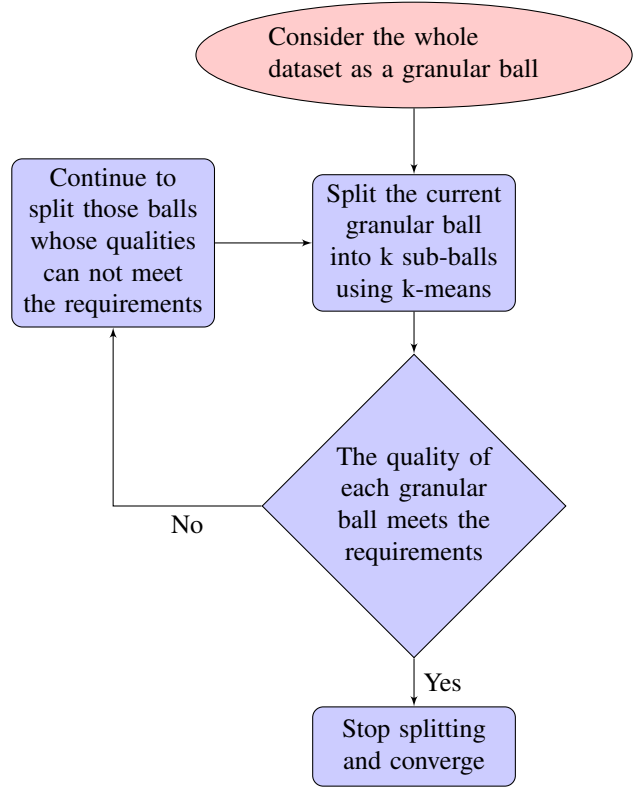
%%%%%%%%%%%%%%
\begin{figure*}[ht]
\begin{minipage}{.333\linewidth}
\centering
\subfloat[The original dataset.]{\label{fig:1a}\includegraphics[scale=0.45]{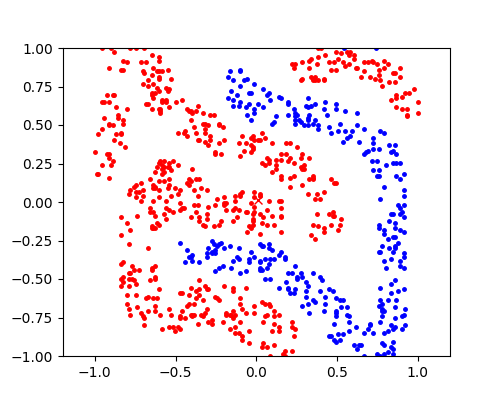}}
\end{minipage}
% \par\medskip
% \par\medskip
\begin{minipage}{.333\linewidth}
\centering
\subfloat[Generated granular balls in the first iteration]{\label{fig:1b}\includegraphics[scale=0.45]{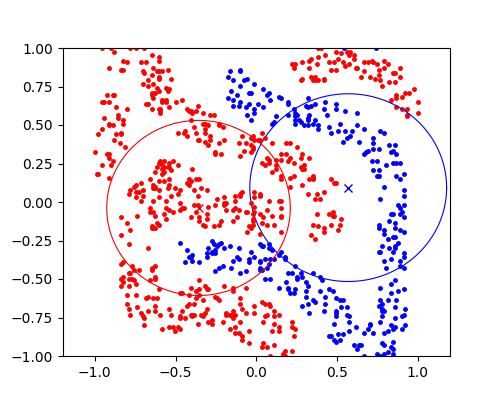}}
\end{minipage}
% \par\medskip
% \par\medskip
\begin{minipage}{.333\linewidth}
\centering
\subfloat[Generated granular balls in the second iteration]{\label{fig:1c}\includegraphics[scale=0.45]{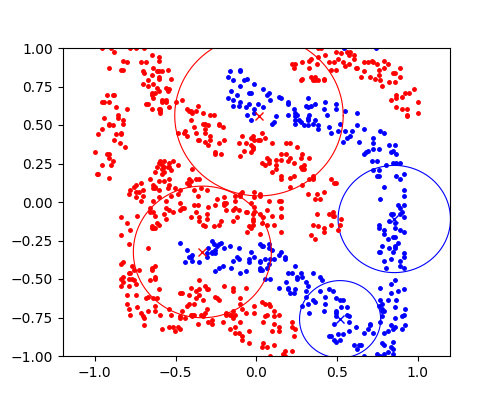}}
\end{minipage}
\par\medskip
\begin{minipage}{.333\linewidth}
\centering
\subfloat[Generated granular balls in the middle iteration]{\label{fig:1d}\includegraphics[scale=0.45]{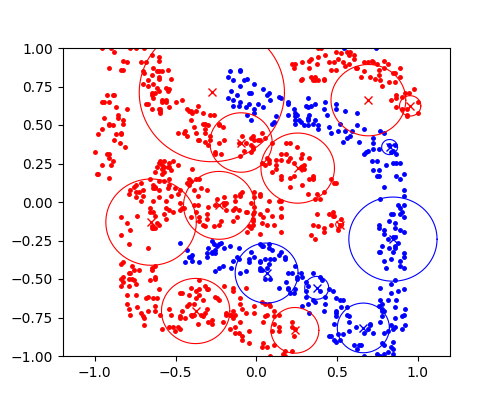}}
\end{minipage}
% \par\medskip
% \par\medskip
\begin{minipage}{.333\linewidth}
\centering
\subfloat[Results after stop splitting]{\label{fig:1e}\includegraphics[scale=0.45]{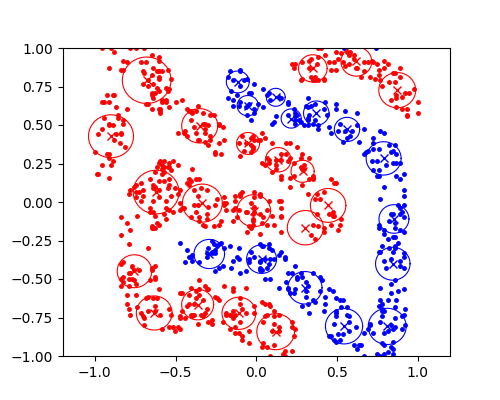}}
\end{minipage}
% \par\medskip
% \par\medskip
\begin{minipage}{.333\linewidth}
\centering
\subfloat[Extracted granular balls]{\label{fig:1f}\includegraphics[scale=0.45]{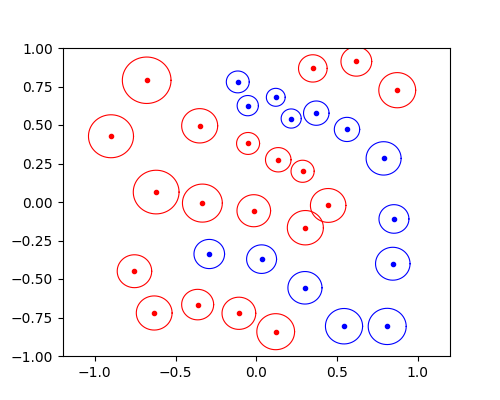}}
\end{minipage}
\caption{The existing method for generation of granular ball and splitting on the ``fourclass" dataset. Both the red granular ball and the red points bear the label ``$+1$”, while both the blue granular balls and the blue points are labeled as ``$-1$”.}
\label{gb-generation}
\end{figure*}

\section{The proposed Granular ball Twin Support Vector Machine (GBTSVM)}
\label{Granular-ball Twin Support Vector Machine (GBTSVM)}
In this section, we provide a detailed mathematical formulation of the proposed GBTSVM model tailored for linear case. Utilizing the granular ball as input offers two primary advantages. Firstly, it reduces the number of input samples, thereby enhancing training efficiency. Secondly, the encapsulation of data points within granular balls enhances the model's resilience to noise and outliers, as the influence of individual noisy points is mitigated within the localized context of a granular ball, thus leading to more reliable and accurate results.
The formulation of linear GBTSVM is given as follows:
\begin{figure}
    \centering
    \includegraphics[width=0.35\textwidth,height=4.5cm]{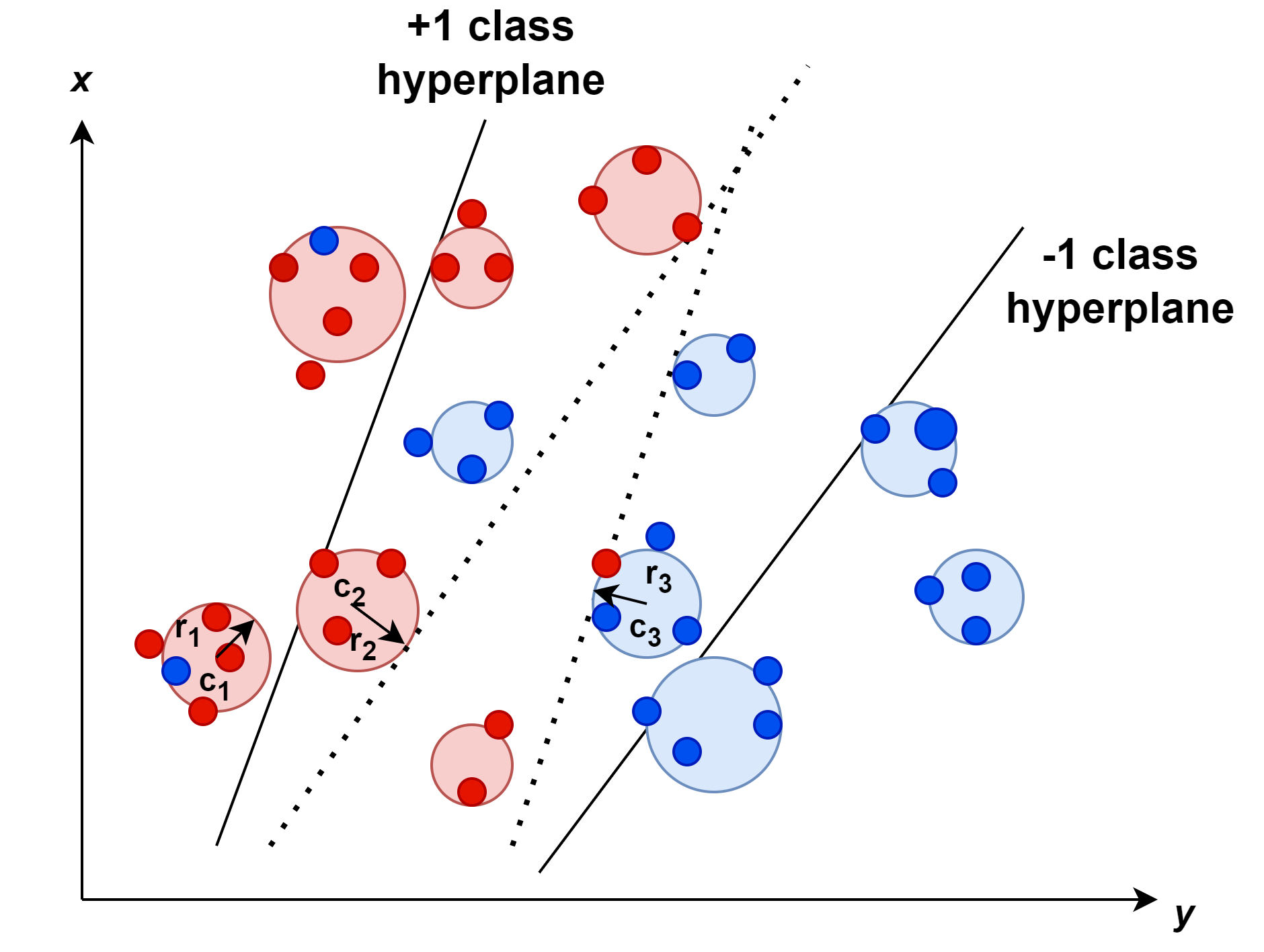}
    \caption{Geometrical depiction of GBTSVM: Samples in the red color symbolize instances of the $+1$ class, while those in the blue color signify instances of the $-1$ class. Additionally, $c_i$ and $r_i$ denote the center and radius of individual granular balls.}
    \label{Geometric representation of GBTSVM}
\end{figure}
\begin{align}
\label{eq:7}
      & \underset{w_1,b_1}{min} \hspace{0.2cm} \frac{1}{2} \|C_1w_1 + e_1b_1\|^2 + d_1e_2^T\xi_2 \nonumber \\
     & s.t. \hspace{0.2cm} -(C_2w_1 + e_2b_1) + \xi_2 \geq e_2+R_2, \nonumber \\
     & \hspace{1.2cm} \xi_2 \geq 0,
\end{align}
and 
\begin{align}
\label{eq:8}
      & \underset{w_2,b_2}{min} \hspace{0.2cm} \frac{1}{2} \|C_2w_2 + e_2b_2\|^2 + d_2e_1^T\xi_1 \nonumber \\
     & s.t. \hspace{0.2cm} (C_1w_2 + e_1b_2) + \xi_1 \geq e_1+R_1, \nonumber \\
     & \hspace{0.8cm} \xi_1 \geq 0,
\end{align}
where $d_1, d_2 ~(> 0)$ are tunable parameters, $\xi_1$ and $\xi_2$ represents the slack variables, and $e_1$ and $e_2$ represents the vector of ones with appropriate dimensions. The first term in the objective function of \eqref{eq:7} is the sum of squared distances between the hyperplane and the centers of granular balls associated with data points belonging to the $+1$ class. Therefore, minimizing it tends to keep the hyperplane close to the granular balls of one class. The constraints of the optimization problem \eqref{eq:7} mandate that the hyperplane be at least a unit distance from the tangent of the granular balls of another class. The tolerance parameter, $\xi_2$ introduced in \eqref{eq:7} to quantify discrepancies arising when the hyperplane deviates from a minimum unit distance threshold. The second term of the objective function of \eqref{eq:7} minimizes the sum of error variables, thus attempting to minimize misclassification due to points belonging to $-1$ class. Similarly, all the components of \eqref{eq:8} are defined for $-1$ class. Figure \ref{Geometric representation of GBTSVM} displays the geometric representation of the proposed GBTSVM model. 
The Lagrangian corresponding to the problem \eqref{eq:7} is given by
\begin{align}
\label{eq:9}
    L&=\frac{1}{2}\|C_1w_1 + e_1b_1\|^2 + d_1e_2^T\xi_2 - \alpha^T(-(C_2w_1 + e_2b_1) + \nonumber\\& ~~~~\xi_2 - e_2 -R_2) - \beta^T\xi_2,
\end{align}
where $\alpha \in \mathbb{R}^{p_2 \times 1}$ and $\beta \in \mathbb{R}^{p_1 \times 1}$ are the vectors of Lagrangian multipliers.\\
Using the Karush-Kuhn-Tucker (K.K.T.) conditions, we have
\begin{align}
    &C_1^T(C_1w_1 + e_1b_1) + C_2^T\alpha = 0, \label{eq:10}\\
    &e_1^T(C_1w_1 + e_1b_1) + e_2^T\alpha = 0, \label{eq:11}\\
    &e_2d_1 - \alpha - \beta =0, \label{eq:12}\\
    &-(C_2w_1 + e_2b_1) + \xi_2 \geq e_2+R_2, \hspace{0.3cm} \xi_2 \geq 0, \\
    &\alpha^T(-(C_2w_1 + e_2b_1) + \xi_2 - e_2 -R_2) = 0, \hspace{0.3cm} \beta^T\xi_2 = 0, \label{EQQ:9} \\
    &\alpha \geq 0, \hspace{0.3cm} \beta \geq 0.
\end{align}
Combining \eqref{eq:10} and \eqref{eq:11} leads to
\begin{align}
\label{eq:16}
    \binom{C_1^T}{e_1^T} \left(C_1 \hspace{0.2cm} e_1\right)\binom{w_1}{b_1} + \binom{C_2^T}{e_2^T}\alpha = 0.
\end{align}
Let $H=\left(C_1 \hspace{0.2cm} e_1\right),$ $G=\left(C_2 \hspace{0.2cm} e_2\right)$ and $u_1=\binom{w_1}{b_1}$
then, \eqref{eq:16} can be reformulated as:
\begin{align}
    &H^THu + G^T\alpha = 0, \nonumber \\
    i.e., \hspace{0.2cm} & u_1 = -(H^TH)^{-1}G^T\alpha \label{eq:17}.
\end{align}
Computing the inverse of $H^TH$ presents a formidable challenge. However, this difficulty can be effectively addressed by incorporating a regularization term denoted as $\delta I$ in \eqref{eq:17}, where $I$ denotes an identity matrix of appropriate dimensions. Thus,
\begin{align}
\label{eq:18}
    u_1 = -(H^TH+\delta I)^{-1}G^T\alpha.
\end{align}
Using equation \eqref{eq:18} and the above K.K.T. conditions, we can obtain the dual of \eqref{eq:7} as follows:
\begin{align}
\label{eq:19}
     \underset{\alpha}{max} & \hspace{0.2cm} \alpha^T(e_2 + R_2) - \frac{1}{2} \alpha^T G(H^TH + \delta I)^{-1}G^T \alpha \nonumber \\
     s.t. & \hspace{0.2cm} 0 \leq \alpha \leq d_1e_2.
\end{align}
Likewise, the Wolfe dual for \eqref{eq:8} can be obtained as:
\begin{align}
\label{eq:20}
     \underset{\gamma}{max} & \hspace{0.2cm} \gamma^T(e_1 + R_1) - \frac{1}{2} \gamma^T H(G^TG + \delta I)^{-1}H^T \gamma \nonumber \\
     s.t. & \hspace{0.2cm} 0 \leq \gamma \leq d_2e_1.
\end{align}
Analogously, $u_2=\binom{w_2}{b_2}$ corresponding to the $-1$ class can be calculated by the subsequent equation:
\begin{align}
\label{eq:21}
    u_2=(G^TG+\delta I)^{-1}H^T\gamma.
\end{align}
Once the optimal values of $u_1$ and $u_2$ are calculated. The classification of a new input data point $x$ into either the $1$ ($+1$ class) or $2$ ($-1$ class) class can be determined as follows:
\begin{align}
\label{eq:22}
    \text{class}(x) =  \underset{i \in \{1, 2\}}{\arg\min} \frac{\lvert w_i^Tx + b_i\rvert}{\|w_i\|}.
\end{align}
The formulation of the proposed GBTSVM model for the non-linear case is given in subsection S.II.B in the Supplementary material.
\section{Large Scale Granular ball Twin Support Vector Machine (LS-GBTSVM)}
\label{large}
The GBTSVM formulation exhibits certain limitations: $(i)$ performing a matrix inversion computation within the Wolfe dual formulation becomes costly when dealing with large datasets. $(ii)$ GBTSVM does not incorporate the SRM principle in its formulation, which leads to an elevated risk of overfitting. Our goal is to tailor our model for large-scale datasets by reformulating the optimization problem of GBTSVM to remove the need for computing extensive matrix inverses in equations \eqref{eq:18} - \eqref{eq:21}. To achieve this, we introduce a regularization term similar to TBSVM \cite{shao2011improvements} and incorporate an additional equality constraint into the optimization problem. The optimization problem of linear LS-GBTSVM are given as:
\begin{align}
\label{eq:23}
       \underset{w_1,b_1,\eta_1, \xi_2}{min} & \hspace{0.2cm} \frac{1}{2} d_3(\|w_1\|^2+b_1^2) + \frac{1}{2}\eta_1^T\eta_1 + d_1e_2^T\xi_2 \nonumber \\
       s.t. & \hspace{0.2cm} C_1w_1+e_1b_1 = \eta_1, \nonumber \\
     & \hspace{0.2cm} -(C_2w_1 + e_2b_1) + \xi_2 \geq e_2+R_2, \nonumber \\
     & \hspace{0.2cm} \xi_2 \geq 0,
\end{align}
and
\begin{align}
\label{eq:24}
       \underset{w_2,b_2,\eta_2,\xi_1}{min} & \hspace{0.2cm} \frac{1}{2}d_4 (\|w_2\|^2 + b_2^2) + \frac{1}{2}\eta_2^T\eta_2 + d_2e_1^T\xi_1 \nonumber \\
       s.t. & \hspace{0.2cm} C_2w_2 + e_2b_2 = \eta_2, \nonumber \\
     &\hspace{0.2cm} (C_1w_2 + e_1b_2) + \xi_1 \geq e_1+R_1, \nonumber \\
     & \hspace{0.2cm} \xi_1 \geq 0,
\end{align}
where $d_1, d_2, d_3, d_4 \hspace{0.1cm} (>0)$ are tunable parameters, $\xi_1, \xi_2$ are the slack variables and $e_1, e_2$ are the vector of ones with appropriate dimensions. The optimization problem \eqref{eq:23} and \eqref{eq:24}, incorporates regularization terms $\frac{1}{2} d_3(\|w_1\|^2+b_1^2)$ and $\frac{1}{2}d_4 (\|w_2\|^2 + b_2^2)$. The inclusion of the regularization terms in the LS-GBTSVM model contributes to SRM. The modified Lagrangian is designed in a way that bypasses the need to calculate the large matrix inverse. The dual of the \eqref{eq:23} is obtained as:
\begin{align}
\label{eq:35}
    \underset{\alpha_1, \beta_1}{min} \hspace{0.2cm} & \frac{1}{2}\begin{pmatrix}
\alpha_1^T & \beta_1^T
\end{pmatrix} \tilde{P} \begin{pmatrix}
\alpha_1 \\
\beta_1 
\end{pmatrix} - d_3 \beta_1^T(e_2 + R_2) \nonumber \\
s.t. \hspace{0.2cm} & 0 \leq \beta_1 \leq d_1 e_2, \nonumber \\
\text{where} \hspace{0.2cm} & \tilde{P} = \begin{pmatrix}
C_1C_1^T + d_3 I & C_1C_2^T\\
C_2C_1^T  & C_2C_2^T
\end{pmatrix} + E.
\end{align}
Here, matrix $E$ consists entirely of ones, while $I$ denotes the identity matrix of the suitable dimension.

The corresponding dual problems \eqref{eq:24} can be obtained as follows:
\begin{align}
\label{eq:37}
    \underset{\alpha_2, \beta_2}{min} \hspace{0.2cm} & \frac{1}{2}\begin{pmatrix}
\alpha_2^T & \beta_2^T
\end{pmatrix} \tilde{Q} \begin{pmatrix}
\alpha_2 \\
\beta_2 
\end{pmatrix} - d_4 \beta_2^T(e_1 + R_1) \nonumber \\
s.t. \hspace{0.2cm} & 0 \leq \beta_2 \leq d_2 e_1, \nonumber \\
\text{where} \hspace{0.2cm} & \tilde{Q} = \begin{pmatrix}
C_2C_2^T + d_4 I & C_2C_1^T\\
C_1C_2^T  & C_1C_1^T
\end{pmatrix} + E.
\end{align}
The optimal values of $\alpha_1, \alpha_2, \beta_1$ and $\beta_2$, are used to determine using equations of non-parallel hyperplanes $w_1^Tx + b_1 = 0$ and $w_2^Tx + b_2 = 0$ and is given as:
\begin{align}
    & w_1^*=-\frac{1}{d_3}(C_1^T\alpha_1 + C_2^T\beta_1),  \hspace{0.4cm}   b_1^*=-\frac{1}{d_3}(e_1^T\alpha_1 + e_2^T\beta_1), \nonumber \\
   & w_2^*=\frac{1}{d_4}(C_2^T\alpha_2 + C_1^T\beta_2),  \hspace{0.7cm}  b_2^*=\frac{1}{d_4}(e_1^T\alpha_2 + e_2^T\beta_2), \nonumber
\end{align}
here $*$ represents the optimal values for the corresponding entities. Once the optimal planes are determined, the labeling of the test sample $x$ into either the $1$ ($+1$ class) or $2$ ($-1$ class) class can be determined as follows:
\begin{align}
    \text{class}(x) =  \underset{i \in \{1, 2\} }{\arg\min} \frac{\lvert w_i^{*T}x + b_i^*\rvert}{\|w_i^*\|}.
\end{align}
The detailed mathematical formulation of the proposed LS-GBTSVM model for the linear and non-linear case are given in subsections S.II.A and S.II.C of the Supplementary material.
\section{Time Complexity and Algorithm}
\label{Time Complexity}
Let $D$ represent the training dataset having $n$ samples. The time complexity of the $k$-means algorithm is $\mathcal{O}(nkt)$ \cite{zhou2009novel}, where the parameter $k$ represents the number of clusters, and $t$ denotes the number of iterations. Also, the time complexity of standard TSVM \cite{khemchandani2007twin} is $\mathcal{O}(\frac{n^3}{4})$. We consider the training dataset $D$ as the initial granular ball ($GB$) set. The set $GB$ is divided into two granular balls using the $2$-means clustering. During the initial phase of splitting, the computational complexity is $\mathcal{O}(2n)$. In the second phase, the two generated granular balls are further divided into four granular balls (if both the granular balls are impure),  with a maximum computational complexity of $\mathcal{O}(2n)$, and so on. If there is a total $t$ number of iterations, then the overall computational complexity of generating granular balls is (or less than) $\mathcal{O}(2nt)$. The complexity of the proposed GBTSVM model is (or less than) $\mathcal{O}(\frac{p^3}{4}) + \mathcal{O}(2nt)$, where $p$ represents the total number of generated granular balls. Also, $\mathcal{O}(p^3) \ll \mathcal{O}(n^3)$ as $p \ll n$. Therefore, we compare our proposed GBTSVM model with the standard TSVM model given as follows: $\mathcal{O}(\frac{p^3}{4}) + \mathcal{O}(n) \ll \mathcal{O}(\frac{n^3}{4})$. The time complexity of the proposed GBTSVM model is much lower than TSVM, and hence, the proposed GBTSVM model is more efficient than the baseline models. The algorithm of the proposed GBTSVM is briefly described in Algorithm \ref{Algorithm for GBTSVM.}.
%%%%%%%%%%%%%%%%%%
\begin{algorithm}
\caption{Algorithm of GBTSVM model.}
\label{Algorithm for GBTSVM.}
\textbf{Input:} Traning dataset $D$, and the threshold purity $T$. \\
\textbf{Output:} GBTSVM classifier.
\begin{algorithmic}[1]
\STATE Initialize the entire dataset $D$ as a granular ball $GB$ and set of granular balls $S$ to be empty set, i.e., $GB=D$ and $S=\{\hspace{0.1cm}\}$. 
\STATE $Dummy =\{GB\}$.
\STATE $for$ $i = 1:\lvert Dummy \rvert$ 
\STATE $if$ $pur(GB_i) < T$ 
\STATE Split $GB_i$ into $GB_{i1}$ and $GB_{i2}$, using $2$-means clustering algorithm. 
\STATE $Dummy \leftarrow GB_{i1}, \hspace{0.05cm} GB_{i2}$. 
\STATE $end$ $if$. 
\STATE $else$ $pur(GB_i)\geq T$ 
\STATE Calculate the center $c_i = \frac{1}{n_i} \sum_{j=1}^{n_i} x_j$ of $GB_i$, where $x_j \in GB_i$, $j=1, 2, \ldots, n_i$, and $n_i$ is the number of training sample in $GB_i$.  
\STATE Calculate the radius $ r_i = \frac{1}{n_i}\sum_{j=1}^{n_i}\left | x_j - c_i \right | $ of $GB_i$.
\STATE Calculate the label $y_i$ of $GB_i$, where $y_i$ is assigned the label of majority class samples within $GB_i$.
\STATE Put $GB_i = \{((c_i, r_i),y_i)\}$ in $S$. 
\STATE $end$ $else.$ 
\STATE $end$ $for.$  
\STATE $if$ $Dummy \neq \{\hspace{0.1cm}\}$ 
\STATE Go to step 3 (for further splitting). 
\STATE $end$ $if.$ 
\STATE Set $S=\{GB_i, \hspace{0.2cm} i=1,2, \ldots, p\} = \{((c_i, r_i), y_i),\hspace{0.2cm} i=1,2, \ldots, p\},$ where $c_i$ signifies the center, $r_i$ indicates the radius, $y_i$ is the label of $GB_i$ and $p$ is the number of generated granular balls. 
\STATE Solve \eqref{eq:19} and \eqref{eq:20} to obtain $\alpha$ and $\gamma$, where $\alpha$ and $\gamma$ are the Lagrange multipliers. 
\STATE Using \eqref{eq:18} and \eqref{eq:21}, find optimal values of $w_1$, $b_1$, $w_2$, and $b_2$. 
\STATE Testing sample is classified into $+1$ or $-1$ class using\eqref{eq:22}.
\end{algorithmic}
\end{algorithm}
%%%%%%%%%%%%%%%%%%%%%%%%%%%
\section{Violation tolerance upper bound}
\label{Violation tolerance upper bound}
In this section, we discuss the violation tolerance upper bound (VTUB) for the proposed GBTSVM. For GBTSVM, the slack variables can be interpreted as the tolerance for violations from the boundary hyperplane. A naive but reasonable perspective is that the closer two samples are, the more similar their violation tolerances will be \cite{qi2021elastic}. 

Let $S$ be a set of \(p = p_1 + p_2\) granular balls, where $p_1$ and $p_2$ represent the number of $+1$ and $-1$ granular balls, respectively. Let $H=\left(C_1, e_1\right)$ and $G=\left(C_2, e_2\right)$, where $C_1$ and $C_2$ are matrices of $+1$ and $-1$ GB centers, respectively. First, we discuss some preliminary lemmas to establish the proof of our theorems.
\begin{lemma}
    (Woodbury Formula \cite{press1988numerical}) Let \(A \in \mathbb{R}^{m_1 \times m_1}\), \(U \in \mathbb{R}^{m_1 \times m_2}\), \(C \in \mathbb{R}^{m_2 \times m_2}\), and \(V \in \mathbb{R}^{m_2 \times m_1}\). Given the existence of \(A^{-1}\), \(C^{-1}\), and \((A + UCV)^{-1}\), the following identity holds:
     \begin{align}
         (A + UCV)^{-1} = A^{-1} - A^{-1} U (C^{-1} + V A^{-1} U)^{-1} V A^{-1}.
     \end{align}
\end{lemma}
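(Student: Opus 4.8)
The plan is to establish the identity by direct verification: I will exhibit the right-hand side as a genuine inverse of $(A + UCV)$ by multiplying the two together and obtaining the identity matrix. Because all the relevant inverses $A^{-1}$, $C^{-1}$, and $(A+UCV)^{-1}$ are assumed to exist, and the candidate formula already contains the central factor $(C^{-1} + VA^{-1}U)^{-1}$, it suffices to verify the product in one order only: a square matrix admitting a one-sided inverse of the correct dimensions is invertible, with that factor as its genuine two-sided inverse.

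First I would abbreviate the repeated central factor by writing $W = (C^{-1} + VA^{-1}U)^{-1}$, so that the candidate inverse reads $M = A^{-1} - A^{-1}UWVA^{-1}$. Then I would expand the product $(A + UCV)M$ term by term, using $AA^{-1}=I$, which yields $I - UWVA^{-1} + UCVA^{-1} - UCVA^{-1}UWVA^{-1}$. I would next factor the final three summands on the left by $U$ and on the right by $VA^{-1}$, thereby reducing the whole verification to showing that the inner bracket $-W + C - C(VA^{-1}U)W$ vanishes identically.

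The crux of the argument, and the step I expect to require the most care, is exactly this cancellation inside the bracket. The clean way to handle it is to exploit the defining relation $W^{-1} = C^{-1} + VA^{-1}U$, which gives $CW^{-1} = I + C(VA^{-1}U)$ and hence $C = (I + C(VA^{-1}U))W$. Substituting this expression for $C$ into the bracket makes the two copies of $C(VA^{-1}U)W$ cancel against one another and leaves $W$, which cancels with $-W$; the bracket is therefore exactly zero. This collapses the product to $(A+UCV)M = I$, which completes the proof.

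An alternative route I could take, if a more structural derivation is preferred, is to form the block matrix $\begin{pmatrix} A & -U \\ V & C^{-1} \end{pmatrix}$ and compute its inverse in two different ways via Schur complements, eliminating $A$ first versus eliminating $C^{-1}$ first, and then equate the resulting $(1,1)$ blocks. This avoids having to guess the factorization of $C$, but trades that convenience for careful bookkeeping of two block-inversion formulas; for brevity and transparency I would favor the direct verification described above.
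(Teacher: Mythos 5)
Your proof is correct, and it is worth noting at the outset that the paper itself offers no proof of this lemma at all: the Woodbury formula is quoted there as a classical result with a citation to Press et al., so your direct verification supplies a derivation the paper omits. Your algebra checks out: writing $W=(C^{-1}+VA^{-1}U)^{-1}$ and $M=A^{-1}-A^{-1}UWVA^{-1}$, expanding $(A+UCV)M$ gives $I + U\left[-W + C - CVA^{-1}UW\right]VA^{-1}$, and the relation $C = CW^{-1}W = (I + CVA^{-1}U)W = W + CVA^{-1}UW$ makes the bracket collapse to zero, so $(A+UCV)M=I$; since $A+UCV$ is square (or, even more simply, since $(A+UCV)^{-1}$ is assumed to exist and can be applied on the left), $M$ is its genuine two-sided inverse. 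One small point you should make explicit: your argument uses the invertibility of $C^{-1}+VA^{-1}U$, which you treat as presupposed by the statement (reasonably, since the right-hand side is otherwise meaningless); in fact it follows from the stated hypotheses, because if $(C^{-1}+VA^{-1}U)x=0$ then $CVA^{-1}Ux=-x$, hence $(A+UCV)A^{-1}Ux = Ux + U(CVA^{-1}Ux) = 0$, so invertibility of $A+UCV$ forces $Ux=0$ and then $C^{-1}x=0$, i.e.\ $x=0$. Adding that one line would make the proof fully self-contained. The block-matrix Schur-complement route you sketch as an alternative is equally standard, but the direct verification you chose is shorter and is the argument most references give.
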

\begin{lemma}
 (Weyl Theorem \cite{horn2012matrix}) Let \(A\) and \(B\) be \(m_1 \times m_1\) Hermite matrices. Then, we have:
\begin{align}
    \lambda_i(A) + \lambda_{m_1}(B) \leq \lambda_i(A + B) \leq & \lambda_i(A) + \lambda_1(B), \nonumber \\
    & \quad i = 1, \ldots, m_1,
\end{align}
where \(\lambda_1(\cdot) \geq \lambda_2(\cdot) \geq \lambda_3(\cdot) \geq \ldots \geq \lambda_{m_1}(\cdot) \).
\end{lemma}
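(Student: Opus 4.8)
The plan is to establish Weyl's inequalities through the Courant--Fischer variational (min--max) characterization of the eigenvalues of a Hermitian matrix, which expresses each $\lambda_i$ as an extremum of the Rayleigh quotient $\mathcal{R}_M(x) = \frac{x^* M x}{x^* x}$ over subspaces of prescribed dimension. Concretely, for any $m_1 \times m_1$ Hermitian matrix $M$ I would invoke the two equivalent forms
\[
\lambda_i(M) = \max_{\dim V = i}\ \min_{0 \neq x \in V} \mathcal{R}_M(x) = \min_{\dim W = m_1 - i + 1}\ \max_{0 \neq x \in W} \mathcal{R}_M(x),
\]
together with the boundary cases $\lambda_1(M) = \max_{x \neq 0} \mathcal{R}_M(x)$ and $\lambda_{m_1}(M) = \min_{x \neq 0} \mathcal{R}_M(x)$, all of which follow from the spectral theorem.

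First I would prove the upper bound $\lambda_i(A+B) \leq \lambda_i(A) + \lambda_1(B)$. Choose the subspace $W^\star$ of dimension $m_1 - i + 1$ that attains the minimum in the second characterization applied to $A$, so that $\max_{0 \neq x \in W^\star}\mathcal{R}_A(x) = \lambda_i(A)$. For every nonzero $x$, additivity of the Rayleigh quotient gives $\mathcal{R}_{A+B}(x) = \mathcal{R}_A(x) + \mathcal{R}_B(x) \leq \mathcal{R}_A(x) + \lambda_1(B)$, since $\mathcal{R}_B(x) \leq \lambda_1(B)$. Applying the min--max form to $A+B$ and using this particular $W^\star$ as a (not necessarily optimal) competitor yields $\lambda_i(A+B) \leq \max_{0 \neq x \in W^\star}\mathcal{R}_{A+B}(x) \leq \lambda_i(A) + \lambda_1(B)$.

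Next I would derive the lower bound $\lambda_i(A) + \lambda_{m_1}(B) \leq \lambda_i(A+B)$ by a negation trick rather than a fresh argument: apply the already-proved upper bound to the pair $(A+B,\,-B)$ in place of $(A,\,B)$. Since $(A+B) + (-B) = A$ and the spectrum of $-B$ satisfies $\lambda_1(-B) = -\lambda_{m_1}(B)$, this gives $\lambda_i(A) \leq \lambda_i(A+B) + \lambda_1(-B) = \lambda_i(A+B) - \lambda_{m_1}(B)$, which rearranges to the desired inequality. The two bounds then combine into the two-sided statement for every $i = 1, \dots, m_1$.

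The step I expect to require the most care is the upper bound, specifically the justification that a subspace optimal for $A$ may legitimately be substituted into the min--max expression for $A+B$: the minimum over all $(m_1-i+1)$-dimensional subspaces is bounded above by the value attained at any single admissible subspace, and it is this inequality---not an equality---that does the work. The remaining ingredients (additivity of the Rayleigh quotient, the Rayleigh bounds $\lambda_{m_1}(B) \leq \mathcal{R}_B(x) \leq \lambda_1(B)$, and the spectral identity $\lambda_1(-B) = -\lambda_{m_1}(B)$) are routine consequences of the spectral theorem for Hermitian matrices. Since Weyl's theorem is classical, an alternative is simply to cite \cite{horn2012matrix}, but the variational argument above is self-contained and matches the Hermitian setting stated in the lemma.
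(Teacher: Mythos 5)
Your proof is correct, and both halves are sound: the upper bound follows by substituting the $A$-optimal subspace $W^\star$ of dimension $m_1-i+1$ into the min--max characterization of $\lambda_i(A+B)$ (legitimate, as you note, because a minimum over admissible subspaces is bounded above by its value at any single one), and the lower bound then follows from the upper bound applied to the pair $(A+B,\,-B)$ together with the identity $\lambda_1(-B)=-\lambda_{m_1}(B)$. The comparison with the paper is asymmetric, however: the paper does not prove this lemma at all. It is stated as a preliminary result, attributed directly to Horn and Johnson \cite{horn2012matrix}, and then invoked as a black box to bound $\lambda_{\max}(J)$ and $\lambda_{\max}(L)$ in the violation-tolerance theorems; of the three preliminary lemmas, only Lemma \ref{lemma3} receives a proof in the paper. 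Your Courant--Fischer argument is the classical textbook proof---essentially the one found in the cited reference---so the two treatments do not diverge mathematically; the difference is purely expository. Your version buys self-containedness, at the cost of taking the variational characterization of eigenvalues (itself a consequence of the spectral theorem) as a prerequisite; the paper's citation-only treatment is the usual economy for a classical tool whose proof is orthogonal to the paper's actual contribution.
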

\begin{lemma}
\label{lemma3}
   Let $A \in \mathbb{R}^{m \times n}$ any matrix and $M \in \mathbb{R}^{n \times n}$ is a positive definite matrix. Then the matrix \( AMA^T\) is positive semi-definite.
\end{lemma}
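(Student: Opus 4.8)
The plan is to verify positive semi-definiteness directly from the definition, namely by showing that the quadratic form associated with $AMA^T$ is nonnegative on all of $\mathbb{R}^m$. First I would record the symmetry of the matrix: since $M$ is positive definite it is in particular symmetric, so $(AMA^T)^T = (A^T)^T M^T A^T = A M A^T$. This confirms that $AMA^T$ is a symmetric (real) matrix, which is a prerequisite for the notion of positive semi-definiteness to apply.

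The core step is the quadratic-form computation. I would take an arbitrary vector $x \in \mathbb{R}^m$ and introduce the substitution $y = A^T x \in \mathbb{R}^n$. Then
\begin{align}
    x^T \left( A M A^T \right) x = \left( A^T x \right)^T M \left( A^T x \right) = y^T M y.
\end{align}
Because $M$ is positive definite, $y^T M y \geq 0$ holds for every $y \in \mathbb{R}^n$ (with strict inequality whenever $y \neq 0$). Consequently $x^T \left( A M A^T \right) x \geq 0$ for all $x \in \mathbb{R}^m$, and together with the symmetry established above this is exactly the statement that $AMA^T$ is positive semi-definite.

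There is essentially no hard obstacle here; the argument is elementary linear algebra. The only conceptual point worth flagging is why the conclusion is merely semi-definiteness rather than full positive definiteness: even though $M$ is positive definite, the map $x \mapsto y = A^T x$ may have a nontrivial kernel when $A$ does not have full column rank, so $y$ can vanish for some $x \neq 0$, yielding $x^T A M A^T x = 0$. Thus I would not attempt to strengthen the conclusion, and the proof terminates as soon as the chain of equalities above is combined with the positive definiteness of $M$.
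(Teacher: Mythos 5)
Your proof is correct and follows essentially the same route as the paper: both verify nonnegativity of the quadratic form $x^T(AMA^T)x = y^TMy$ via the substitution $y = A^Tx$ (the paper writes it with row vectors, $y = xA$, and splits into the cases $y\neq 0$ and $y=0$, which is the same content). Your explicit check of symmetry and your remark on why the conclusion cannot be strengthened to positive definiteness are minor additions the paper omits, but they do not change the argument.
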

\begin{proof}
    Let $x \in \mathbb{R}^{1 \times m} $ be a non-zero row vector. Then
    \begin{align}
        & xAMA^Tx^T =  yMy^T,   ~~~~~~~~~  \text{for}~~y=xA \nonumber \\
       & \text{If} ~~~~ y \neq 0, ~~~\implies ~~  yMy^T \geq 0,  ~~( \because M ~ \text{is positive definite}) \nonumber \\
       & \text{If} ~~~~ y = 0, ~~~\implies ~~  yMy^T = 0.  \nonumber 
    \end{align}
    Hence, $AMA^T$ is a positive semi-definite matrix.
\end{proof}
\begin{theorem}
    \label{TH1}
    If $(\hat{w}_1^T, \hat{b}_1^T, \hat{\xi}_2^T)$ is the optimal solution of the GBTSVM \eqref{eq:19}. Then, for any two positive GBs, \(((c_i, r_i), y_i)\) and \(((c_j, r_j), y_j)\), the estimations of the corresponding slack variables \(\xi_2^i\) and \(\xi_2^j\) satisfy
    \begin{align}
    \label{THHHH1}
        \lvert \hat{\xi}_2^i - \hat{\xi}_2^j \rvert \leq  \Delta^2(\delta + \tau_1)(\delta + \tau_1 + \tau_2)\sqrt{\kappa}\|G\|_F\cdot d_{ij}^3,
    \end{align}
    where \(d_{ij} = \|c_i - c_j\|\) is the distance between GB centers, \(c_i\) and \(c_j\), \(\|G\|_F\) is the Frobenius norm of the matrix \(G\), $\Delta$ $(>0)$ and $\delta$ $(>0)$ are very small real numbers, $\kappa$ is a positive real number, \(\tau_1\) and \(\tau_2\) correspond to the largest eigenvalues of \( H^TH\) and $G^TG$, respectively. 
\end{theorem}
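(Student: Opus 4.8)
The plan is to read the estimated slack variable off the primal constraints at optimality and then exploit the $1$-Lipschitz continuity of the hinge operation. Writing $g_i = (c_i^T,\,1)$ for the augmented center attached to the $i$-th granular ball (the $i$-th row of $G$) and $u_1 = (\hat w_1^T,\,\hat b_1)^T$, the constraint in \eqref{eq:7} forces the optimal tolerance to be $\hat\xi_2^i = \max\{0,\,1 + r_i + g_i u_1\}$. First I would subtract the two estimates and use that $t \mapsto \max\{0,t\}$ is non-expansive, giving $|\hat\xi_2^i - \hat\xi_2^j| \le |(r_i - r_j) + (g_i - g_j)u_1|$. Since the appended ones cancel, $\|g_i - g_j\| = \|c_i - c_j\| = d_{ij}$, so the explicit factor $d_{ij}$ in the bound is already visible at this stage.

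Next I would substitute the closed form $u_1 = -(H^TH + \delta I)^{-1}G^T\hat\alpha$ from \eqref{eq:18} and split the right-hand side into the radius gap $|r_i - r_j|$, which I would carry along with the center perturbation under the natural assumption that radii vary no faster than centers, and the term $|(g_i - g_j)(H^TH+\delta I)^{-1}G^T\hat\alpha|$. Bounding the latter by Cauchy--Schwarz and submultiplicativity of the spectral and Frobenius norms reduces the problem to controlling $\|(H^TH+\delta I)^{-1}\|$, $\|G\|_F$, and $\|\hat\alpha\|$; the dual feasibility $0 \le \hat\alpha \le d_1 e_2$ from \eqref{eq:19} keeps $\|\hat\alpha\|$ bounded, which is what the constant $\sqrt{\kappa}$ is meant to absorb, while $\|G\|_F$ appears verbatim.

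The remaining and genuinely delicate task is to produce the two extra powers of $d_{ij}$ and the spectral product $(\delta + \tau_1)(\delta + \tau_1 + \tau_2)$. I would obtain these by a perturbation argument: regard the passage from $c_i$ to $c_j$ as a low-rank modification of $G$, and hence of the dual Hessian $G(H^TH+\delta I)^{-1}G^T$, expand the perturbed inverse with the Woodbury identity (Lemma 1), and bound the resulting eigenvalues with Weyl's inequality (Lemma 2): the spectrum of $H^TH + \delta I$ lies below $\delta + \tau_1$, and that of $H^TH + \delta I + G^TG$ below $\delta + \tau_1 + \tau_2$, which supplies exactly the two factors. Lemma \ref{lemma3} would be used to certify that the intermediate matrices such as $G(H^TH+\delta I)^{-1}G^T$ are positive semi-definite, so that these one-sided spectral bounds apply. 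Combining the direct $d_{ij}$ from $\|g_i - g_j\|$ with the $d_{ij}^2$-order change of $u_1$ (equivalently of $\hat\alpha$) induced by the center perturbation yields the cubic dependence $d_{ij}^3$.

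The main obstacle I anticipate is precisely this last step: the dual optimum $\hat\alpha$ is only implicitly defined by the QP \eqref{eq:19}, so one cannot simply differentiate it. The hard part will be to show that $u_1$, and hence $\hat\xi_2$, varies Lipschitz-continuously in the granular-ball center with the stated spectral constants, quantifying the sensitivity of the $\arg\min$ through the Woodbury expansion of $(H^TH+\delta I)^{-1}$ and the Weyl bounds, and verifying that no reciprocal eigenvalue factor survives so that the clean product $(\delta+\tau_1)(\delta+\tau_1+\tau_2)$ --- rather than a condition-number-type quantity --- is what appears. I would also track the small parameters $\Delta$ and $\delta$, which seem to enter as a uniform scale for the center coordinates and as the ridge regularizer, respectively.
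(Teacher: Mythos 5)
Your proposal sets up the easy part of the estimate but leaves a genuine gap exactly where the theorem's content lies: you never actually produce the $d_{ij}^3$ scaling or the product $(\delta+\tau_1)(\delta+\tau_1+\tau_2)$, and you concede as much when you call the sensitivity of the dual optimum ``the hard part.'' The paper performs no sensitivity or perturbation analysis of the QP at all. Instead, it writes the KKT conditions of the dual \eqref{eq:19}, restricts to the support-vector index set $s$ (where the multipliers $q$ and $\sigma$ vanish), and solves the resulting linear system explicitly: after adding a second small ridge $\vartheta I$ and applying the Woodbury identity, the dual variables on $s$ become an explicit linear expression in the data,
\begin{align}
\alpha^s = \Delta I - \Delta^2 G^{ss}\left((H^{ss^T}H^{ss}+\delta I)+\Delta G^{ss^T}G^{ss}\right)^{-1}G^{ss^T}(e_2+R_2)^s, \nonumber
\end{align}
with $\Delta = \vartheta^{-1}$ (Eqs. \eqref{A9}--\eqref{A10}). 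Subtracting the $i$-th and $j$-th components of this one fixed solution vector gives $\lvert \alpha^{i}-\alpha^{j}\rvert \le \Delta^2\lambda_{\max}(J)\sqrt{\kappa}\,\|G\|_F\, d_{ij}$ by Cauchy--Schwarz and Rayleigh--Ritz; combining with the complementary-slackness expression $\xi_2 = R_2 + e_2 - G(H^TH+\delta I)^{-1}G^T\alpha$ (valid since $\alpha>0$ on $s$, cf. \eqref{EQQ:9}) and a second Rayleigh--Ritz step contributes the factor $\lambda_{\max}(L)\,d_{ij}^2$, whence the cubic power. Your plan of treating the passage from $c_i$ to $c_j$ as a low-rank modification of $G$ answers a different question (stability of the solution under a change of the dataset) rather than the one required (comparing two components of a single solution), and you supply no mechanism to close it.

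Two further mismatches confirm the gap. First, the constants: bounding $\|u_1\| \le \|(H^TH+\delta I)^{-1}\|\,\|G\|_F\,\|\hat\alpha\|$ as you propose introduces $\|(H^TH+\delta I)^{-1}\| = 1/(\delta+\lambda_{\min}(H^TH))$, a reciprocal-smallest-eigenvalue (condition-number-type) factor; the clean factors $(\delta+\tau_1)$ and $(\delta+\tau_1+\tau_2)$ in the statement arise in the paper only through its Weyl-type bounds on the matrices $L$ and $J$ attached to the explicit formula above, and no argument along your lines recovers them --- you flag this risk yourself but do not resolve it. Second, the roles of the constants are misidentified: $\Delta$ is not a scale for the center coordinates but the reciprocal $\vartheta^{-1}$ of the second ridge parameter introduced before the Woodbury step, and $\kappa$ does not absorb $\|\hat\alpha\|$ via dual feasibility but is simply $\kappa = \|(e_2+R_2)^s\|^2 = \sum_i (1+r_i)^2$. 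Your opening step (reading $\hat\xi_2^i$ off the hinge and using non-expansiveness of $t\mapsto\max\{0,t\}$) is fine and even covers non-support balls, and your worry about the radius gap $\lvert r_i - r_j\rvert$ is legitimate (the paper silently drops it), but neither observation substitutes for the missing closed-form resolution of the dual KKT system.
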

\begin{proof}
   By introducing Lagrangian multipliers \(q\) and \(\sigma\) in QPP \eqref{eq:19}, we can derive its Lagrangian function as follows:
    \begin{align}
    \label{A1}
    L_1&=\alpha^T(e_2 + R_2) - \frac{1}{2} \alpha^T G(H^TH + \delta I)^{-1}G^T \alpha - q^T\alpha \nonumber \\
    & - \sigma^T(d_1e_2 - \alpha),
\end{align}
Using the K.K.T. conditions, we obtained
\begin{align}
    & \frac{\partial L_1}{\partial \alpha} = (e_2 + R_2) - G(H^TH + \delta I)^{-1}G^T \alpha - q + \sigma = 0, \label{A2}\\
    & q^T\alpha = 0, \label{A3} \\
    & \sigma^T(d_1e_2 - \alpha) = 0. \label{A4}
\end{align}
If \(\alpha^{i} > 0\), then the corresponding GB centre \(c_i\) is a support vector. We denote \(s = \{i \mid \alpha^{i} > 0, i = 1, \ldots, p_2\}\) as the corresponding positive index set with the cardinality \(|s| = d\). \\
From Eq. \eqref{A2}, we have
    \begin{align}
        \frac{\partial L_1}{\partial \alpha^s} = & (e_2 + R_2)^s - G^{ss}(H^{ss^T}H^{ss} + \delta I)^{-1}G^{ss^T} \alpha^s  \nonumber \\
        &~~~ - q^s + \sigma^s = 0,
    \end{align}
where \((\cdot)^s\) (support vectors) denotes the subvector of vector \((\cdot)\) and its elements are the elements in vector \((\cdot)\) corresponding to the index set \(s\); \((\cdot)^{ss}\)  denotes the submatrix of the matrix \((\cdot)\) and its elements are the crossing elements of rows and columns in matrix \((\cdot)\) corresponding to the index set \(s\). \\
If \(\alpha_i > 0\), then from Eqs. \eqref{A3} and \eqref{A4}, we can conclude that \(q = 0\) and $\sigma = 0$. Therefore, 
\begin{align}
    (e_2 + R_2)^s - G^{ss}(H^{ss^T}H^{ss} + \delta I)^{-1}G^{ss^T} \alpha^s = 0.
\end{align}
Then,
\begin{align}
\label{A8}
      \alpha^s = \left( G^{ss}(H^{ss^T}H^{ss} + \delta I)^{-1}G^{ss^T} \right)^{-1} (e_2 + R_2)^s.
\end{align}
Calculating the inverse of $\left( G^{ss}(H^{ss^T}H^{ss} + \delta I)^{-1}G^{ss^T} \right)$ possesses a significant challenge due to its positive semidefinite nature by lemma \ref{lemma3}. However, this obstacle can be effectively managed by adding a very small quantity, denoted as \(\vartheta I\) in Eq. \eqref{A8}, where $\vartheta >0$ is a very small quantity and \(I\) represents an identity matrix of suitable dimensions.
\begin{align}
\label{A9}
      \alpha^s = \left( G^{ss}(H^{ss^T}H^{ss} + \delta I)^{-1}G^{ss^T} + \vartheta I \right)^{-1} (e_2 + R_2)^s.
\end{align}
According to the Woodbury formula, the following equation holds:
\begin{align}
      \alpha^s =& \vartheta^{-1} I - \vartheta^{-2} G^{ss} \left( (H^{ss^T}H^{ss} + \delta I) + \vartheta^{-1} G^{ss^T}G^{ss} \right)^{-1}  \nonumber \\
      & \cdot G^{ss^T}(e_2 + R_2)^s.
\end{align}
Let $\Delta = \vartheta^{-1} \in \mathbb{R}$, then
\begin{align}
\label{A10}
      \alpha^s =& \Delta I - \Delta^2 G^{ss} \left( (H^{ss^T}H^{ss} + \delta I) + \Delta G^{ss^T}G^{ss} \right)^{-1}  \nonumber \\
      & \cdot G^{ss^T}(e_2 + R_2)^s.
\end{align}
Consequently, we can deduce that
\begin{align}
    \lvert \alpha^{i}  - \alpha^{j} \rvert  & = \Delta^2 \left| (\tilde{c}_i - \tilde{c}_j)^T \left( (H^{ss^T}H^{ss} + \delta I) \right. \right. \nonumber \\
     & \left. \left. + G^{ss^T}G^{ss} \right)^{-1} G^{ss^T}(e_2 + R_2)^s \right|, \label{A11} 
\end{align}
where $\tilde{c} = (c,~1)$. Let $J = \left( (H^{ss^T}H^{ss} + \delta I) + G^{ss^T}G^{ss} \right)^{-1}$. Since \(J\) is a Hermite matrix. Then, by the Cauchy-Schwarz inequality, it follows
\begin{align}
\label{A12}
    & \lvert (\tilde{c}_i - \tilde{c}_j)^T J G^{ss^T}(e_2 + R_2)^s \rvert^2  \leq (\tilde{c}_i - \tilde{c}_j)^T J (\tilde{c}_i - \tilde{c}_j) \cdot   \nonumber \\
     & ~~~~~~~~~~~~ (e_2 + R_2)^{s^T} G^{ss}JG^{ss^T} (e_2 + R_2)^s \nonumber \\
    & ~~~~~~~~~~~~~~~~~~~~~~~~~~ = h_1 \cdot h_2,
\end{align}
where $h_1 = (\tilde{c}_i - \tilde{c}_j)^T J (\tilde{c}_i - \tilde{c}_j)$ and $h_2 = (e_2 + R_2)^{s^T} G^{ss}JG^{ss^T} (e_2 + R_2)^s$. \\
For \(h_1\), according to the Rayleigh-Ritz Theorem \cite{horn2012matrix}, we have
\begin{align}
\label{A13}
    (\tilde{c}_i - \tilde{c}_j)^T J (\tilde{c}_i - \tilde{c}_j) \leq \lambda_{\max}(J)\cdot d_{ij}^2, 
\end{align}
where \(d_{ij} = \| \tilde{c}_i - \tilde{c}_j \| = \| c_i - c_j \|\) is the distance between \(c_i\) and \(c_j\). \(\lambda_{\max}(J)\) is the largest eigenvalue of matrix \(J\). \\
Similarly for \(h_2\), we have
\begin{align}
\label{A14}
    (e_2 + R_2)^{s^T} G^{ss}JG^{ss^T} (e_2 + R_2)^s \leq & \lambda_{\max}(J) (e_2 + R_2)^{s^T}  \nonumber \\ 
    & G^{ss}G^{ss^T} \cdot (e_2 + R_2)^s.
\end{align}   
Since  $(e_2 + R_2)^{s^T}(e_2 + R_2)^s = ((1+r_1)^2+ (1+r_2)^2 + \ldots + (1+r_{p_2})^2) = \kappa \in \mathbb{R}$ (say), here $r_i \in \mathbb{R}$ is the radius of the granular ball. Then Eq. \eqref{A14} reduced to
\begin{align}    
\label{A15}
 (e_2 + R_2)^{s^T} G^{ss}JG^{ss^T} (e_2 + R_2)^s \leq \lambda_{\max}(J)\cdot \kappa\|G^{ss}\|_F^2,
\end{align}
where \(\|G^{ss}\|_F\) is the Frobenius norm of the matrix \(G^{ss}\).  \\
Combining \eqref{A11}, \eqref{A12}, \eqref{A13}, and \eqref{A15}, we obtain 
\begin{align}
\label{Th19}
    \lvert \alpha^{i}  - \alpha^{j} \rvert & \leq \Delta^2\lambda_{\max}(J)\sqrt{\kappa}\|G^{ss}\|_F\cdot d_{ij} \nonumber \\
        &   \leq \Delta^2\lambda_{\max}(J)\sqrt{\kappa}\|G\|_F\cdot d_{ij}.
\end{align}
Moreover, $H^{ss^T}H^{ss}$ and $G^{ss^T}G^{ss}$ are Hermite matrices, according to the Weyl Theorem \cite{horn2012matrix}, we can obtain
\begin{align}
\label{Th20}
    \lambda_{\max}(J) & = \lambda_{\max} \left( (H^{ss^T}H^{ss} + \delta I) + G^{ss^T}G^{ss} \right) \nonumber \\
    & \leq \lambda_{\max} (H^{ss^T}H^{ss} + \delta I) + \lambda_{\max}(G^{ss^T}G^{ss}) \nonumber \\
    & \leq \delta + \lambda_{\max}  (H^{ss^T}H^{ss}) + \lambda_{\max}(G^{ss^T}G^{ss}) \nonumber \\
    & \leq \delta + \lambda_{\max}  (H^{T}H) + \lambda_{\max}(G^{T}G).
\end{align} 
Since $\alpha > 0$, from Eq. \eqref{EQQ:9}, we obtained 
\begin{align}
    & \xi_2 = R_2 + e_2 +(C_2w_1 + e_2b_1) \nonumber \\
   & \xi_2 = R_2 + e_2 + Gu_1    ~( \because G=(C_2, ~ e_2) ~ \text{and} ~ u_1=(w_1^T, ~b_1)^T) \nonumber \\
   & \xi_2 = R_2 + e_2 - G(H^TH+\delta I)^{-1}G^T\alpha ~~~ (\text{From}~ \eqref{eq:18}). 
\end{align}
Now, 
\begin{align}
    \lvert  \xi_2^i - \xi_2^j \rvert = \lvert  (\tilde{c}_i - \tilde{c}_j) L (\tilde{c}_i - \tilde{c}_j)^T (\alpha_i - \alpha_j) \rvert,
\end{align}
where \(L=(H^{ss^T}H^{ss} + \delta I)^{-1}\) is a Hermite matrix. Then, by the Rayleigh-Ritz Theorem, it follows
\begin{align}
\label{Th23}
     \lvert  \xi_2^i - \xi_2^j  \rvert \leq \lambda_{max}(L) d_{ij}^2  \lvert  \alpha_i - \alpha_j \rvert.
\end{align}
Since, $H^{ss^T}H^{ss}$ is a Hermite matrix. Then, according to the Weyl Theorem, we have
\begin{align}
\label{Th24}
    \lambda_{max}(L) & = \lambda_{max}(H^{ss^T}H^{ss} + \delta I) \nonumber \\
    & \leq \delta + \lambda_{max}(H^{ss^T}H^{ss}) \nonumber \\
    & \leq \delta + \lambda_{max}(H^{T}H).
\end{align}
Suppose $\tau_1 = \lambda_{\max}(H^TH)$ and $\tau_2 = \lambda_{\max}(G^TG)$, by using Eqs. \eqref{Th19}, \eqref{Th20}, \eqref{Th23} and \eqref{Th24} we can conclude that
\begin{align}
    \lvert \hat{\xi}_2^i - \hat{\xi}_2^j \rvert \leq  \Delta^2(\delta + \tau_1)(\delta + \tau_1 + \tau_2)\sqrt{\kappa}\|G\|_F\cdot d_{ij}^3.
\end{align}
\end{proof}
%%%%%%%%%%%%%%%%%%
\begin{theorem}
\label{TH2}
    If $(\hat{w}_2^T, \hat{b}_2^T, \hat{\xi}_1^T)$ is the optimal solution of the GBTSVM \eqref{eq:20}. Then, for any two negative GBs \(((c_i, r_i), y_i)\) and \(((c_j, r_j), y_j)\), the estimations of the corresponding slack variables \(\xi_1^i\) and \(\xi_1^j\) satisfy
    \begin{align}
    \label{T1}
        \lvert \hat{\xi}_1^i - \hat{\xi}_1^j \rvert \leq  \Delta^2(\delta + \tau_2)(\delta + \tau_1 + \tau_2)\sqrt{\kappa}\|H\|_F\cdot d_{ij}^3.
    \end{align}
\end{theorem}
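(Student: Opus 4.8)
The plan is to mirror the argument of Theorem \ref{TH1} line by line, exploiting the fact that the dual \eqref{eq:20} is the image of \eqref{eq:19} under the class interchange $H \leftrightarrow G$, $(e_2+R_2) \leftrightarrow (e_1+R_1)$, $\alpha \leftrightarrow \gamma$, and hence $\tau_1 = \lambda_{\max}(H^TH) \leftrightarrow \tau_2 = \lambda_{\max}(G^TG)$. First I would introduce box-constraint multipliers in \eqref{eq:20} and write the KKT stationarity condition in $\gamma$, exactly as in \eqref{A2}--\eqref{A4}. Restricting to the support index set $s = \{i : \gamma^i > 0\}$ and eliminating the active multipliers by complementary slackness, the stationarity relation solves to
\begin{align}
\gamma^s = \left( H^{ss}(G^{ss^T}G^{ss} + \delta I)^{-1} H^{ss^T} \right)^{-1} (e_1 + R_1)^s. \nonumber
\end{align}

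Since the inverted matrix is only positive semi-definite by Lemma \ref{lemma3}, I would regularize it with $\vartheta I$ and apply the Woodbury formula (writing $\Delta = \vartheta^{-1}$) to obtain the analogue of \eqref{A10},
\begin{align}
\gamma^s = \Delta I - \Delta^2 H^{ss}\left( (G^{ss^T}G^{ss} + \delta I) + \Delta H^{ss^T}H^{ss} \right)^{-1} H^{ss^T}(e_1 + R_1)^s. \nonumber
\end{align}
Subtracting the $i$-th and $j$-th entries cancels the constant term and leaves a form in $\tilde{c}_i - \tilde{c}_j$, whose rows now come from $H^{ss}$ (the augmented positive-class centers). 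Applying the Cauchy-Schwarz inequality with the Hermite matrix $J = \left( (G^{ss^T}G^{ss} + \delta I) + H^{ss^T}H^{ss} \right)^{-1}$ and then the Rayleigh-Ritz Theorem to each factor gives $\lvert \gamma^i - \gamma^j \rvert \le \Delta^2 \lambda_{\max}(J)\sqrt{\kappa}\,\|H\|_F\, d_{ij}$, where $\kappa = (e_1+R_1)^{s^T}(e_1+R_1)^s$ and the Frobenius norm is that of $H$, because the quadratic form in \eqref{eq:20} is built around $H$ rather than $G$. The Weyl Theorem then yields $\lambda_{\max}(J) \le \delta + \tau_1 + \tau_2$, which is symmetric in the two classes and therefore identical to the bound in \eqref{Th20}.

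Finally I would recover the slack variable from the active constraint of the primal \eqref{eq:8}: at support vectors $\xi_1 = e_1 + R_1 - (C_1 w_2 + e_1 b_2) = e_1 + R_1 - H(G^TG + \delta I)^{-1} H^T \gamma$, using \eqref{eq:21}. The same manipulation as in \eqref{Th23}, with $L = (G^{ss^T}G^{ss} + \delta I)^{-1}$, gives $\lvert \xi_1^i - \xi_1^j \rvert \le \lambda_{\max}(L)\, d_{ij}^2\, \lvert \gamma^i - \gamma^j \rvert$, and the Weyl Theorem bounds $\lambda_{\max}(L) \le \delta + \tau_2$ (the analogue of \eqref{Th24}). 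Chaining these estimates produces \eqref{T1}. I do not expect a genuinely new obstacle, since every step is the mirror image of Theorem \ref{TH1}; the one point demanding care is the asymmetry in the final product: the factor $(\delta + \tau_2)$ must be traced to $L = (G^{ss^T}G^{ss} + \delta I)^{-1}$, i.e.\ the regularized Gram matrix of the \emph{opposite} class to the one appearing in Theorem \ref{TH1}, whereas the symmetric factor $(\delta + \tau_1 + \tau_2)$ and the Frobenius norm $\|H\|_F$ must be carried consistently through the Woodbury and Cauchy-Schwarz steps.
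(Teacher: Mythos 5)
Your proposal is correct and is exactly the argument the paper intends: the paper gives no separate proof of Theorem \ref{TH2}, stating only that ``a similar approach'' to Theorem \ref{TH1} applies, and your line-by-line mirroring under the swap $H \leftrightarrow G$, $(e_2+R_2) \leftrightarrow (e_1+R_1)$, $\alpha \leftrightarrow \gamma$ is that approach. You also correctly trace the two asymmetric ingredients --- the factor $(\delta+\tau_2)$ arising from $L=(G^{ss^T}G^{ss}+\delta I)^{-1}$ via the slack recovery $\xi_1 = e_1 + R_1 - H(G^TG+\delta I)^{-1}H^T\gamma$, and the Frobenius norm $\|H\|_F$ from the Cauchy--Schwarz step --- which is the only place the mirror image could have gone wrong.
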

A similar approach can be applied to give the proof of Theorem \ref{TH2}.
\par
%%%%%%%
The derived VTUB, as established in Theorems \ref{TH1} and \ref{TH2}, is directly proportional to the distance between two training samples in the GBTSVM. This means that the closer the two training samples are, the more similar their corresponding tolerances from the GBTSVM will be. Consequently, if two training samples are identical, their corresponding tolerances will also be identical. These findings align with our intuitive understanding. 
%%%%%%%%%%%%%%%%%%%%%
\section{ Experimental results}
\label{experimental result}
To test the efficiency of proposed models, i.e., GBTSVM and LS-GBTSVM, we compare them to baseline models on publicly available UCI \cite{dua2017uci} and KEEL \cite{derrac2015keel} benchmark datasets under different noise levels. Furthermore, we conduct experiments on datasets generated by the NDC Data Generator \cite{ndc}. In the Supplementary material, we conduct a series of sensitivity analyses on various aspects of the proposed models. This includes investigating the impact of granular ball parameters $num$ and $pur$ on the proposed models in subsection S.IV.A, assessing different levels of label noise on both proposed and baseline models in subsection S.IV.B, and conducting sensitivity analyses on hyperparameters $\sigma$ and $pur$ in subsection S.IV.C. Additionally, sensitivity analyses on hyperparameters $d_1$ and $d_2$ are presented in subsection S.IV.D, and the relationship between the number of granular balls generated and the performance of the proposed GBTSVM model with different purities is analyzed in subsection S.IV.E.
\subsection{Experimental Setup}
The hardware environment for the experiment consists of a PC equipped with an Intel(R) Xeon(R) Gold $6226$R CPU $@$ $2.90$GHz  CPU and $128$ GB RAM running on Windows $11$ possessing Python $3.11$. The dual QPPs arising in GBTSVM, LS-GBTSVM and baseline models are solved using the ``qpsolvers" function in the CVXOPT package. Also, we use the SOR algorithm \cite{luo1993error, mangasarian1999successive} for solving LS-GBTSVM's dual problems, and the proposed model is named LS-GBTSVM (SOR). 
The dataset is randomly divided into training and testing sets in a $70:30$ ratio, respectively. We use five-fold cross-validation and grid search approach to optimize the models' hyperparameters from the following ranges: $d_i = \{10^{-5}, 10^{-4}, \ldots, 10^5\}$ for $i=1,2,3,4.$ For the nonlinear case we use Gaussian kernel and is given by $K(x_i,x_j) = e^{\frac{-1}{2\sigma^2}\|x_i - x_j\|^2}.$ Gaussian kernel parameter $\sigma$ is selected from the range $\{2^{-5}, 2^{-4}\ldots, 2^{5}\}$. In LS-GBTSVM, we adopt equal penalty parameters, \textit{i.e.,} $d_1 = d_2$ and $d_3 = d_4$, for both linear and non-linear cases. Further, we compare the results on NDC-based large-scale datasets. We assign a penalty parameter equals $1$ to the proposed and the baseline models, \textit{i.e.,} $(d_1 = d_2 = d_3 = d_4 = 1)$.
\subsection{Experiments on Real World UCI and KEEL Datasets}
\label{Experments on real-world UCI and KEEL datasets}
In this subsection, we present an intricate analysis involving a comparison of the proposed GBTSVM, LS-GBTSVM and LS-GBTSVM (SOR) with SVM \cite{cortes1995support}, GBSVM \cite{xia2022gbsvm} and TSVM \cite{khemchandani2007twin} models on $36$ UCI and KEEL  benchmark datasets. The optimization problem of GBSVM is solved by PSO algorithm \cite{xia2022gbsvm} and named GBSVM (PSO). The detailed experimental results are presented in Table S.III and Table S.IV of the Supplementary material. All the experimental results discussed in this subsection are obtained at a $0\%$ noise level for both linear and non-linear cases. The comparison in terms of accuracy (ACC) indicates that our proposed GBTSVM, LS-GBTSVM, and LS-GBTSVM (SOR) models yield better performance than the baseline SVM, GBSVM (PSO), and TSVM models on most of the datasets. From Table \ref{Classification performance in Linear Case.}, the average ACC of proposed GBTSVM, LS-GBTSVM, and LS-GBTSVM (SOR) models are $85.30\%$, $85.32\%$ and $81.34\%$, respectively. whereas the average ACC of SVM, GBSVM (PSO), and TSVM models are $82.21\%$, $72.46\%$ and $65.47\%$, respectively. The average ACC of the proposed GBTSVM, LS-GBTSVM, and LS-GBTSVM (SOR) models surpasses that of the baseline models. As the average ACC can be influenced by exceptional performance in one dataset that compensates for losses across multiple datasets, it might be a biased measure. Therefore, we employ the ranking method to gauge the effectiveness and appraise the performance of the models. Here, each classifier is assigned a ranking, with the superior-performing model receiving a lower rank, while the model with inferior performance is given a higher rank.
For evaluation of $q$ models across $N$ datasets, the rank of the $j^{th}$ model on the $i^{th}$ dataset can be denoted as $\mathfrak{R}_{j}^i$. Then the model's average rank is given by $\mathfrak{R}_j = \frac{1}{N} \sum_{i=1}^{N}\mathfrak{R}_j^i.$ The average rank of SVM, GBSVM (PSO), TSVM, GBTSVM, LS-GBTSVM, and LS-GBTSVM (SOR) are $3.46$, $4.79$, $5.69$, $1.97$, $1.94$ and $3.14$, respectively. It is evident that the proposed GBTSVM, LS-GBTSVM, and LS-GBTSVM (SOR) have the best average rank. Hence, the generalization ability of the proposed GBTSVM and LS-GBTSVM is superior compared to the baseline models. Now, we conduct the Friedman test \cite{demvsar2006statistical} to determine whether the models have significant differences. Under the null hypothesis of the Friedman test, it is presumed that all the models exhibit an equal average rank, signifying equal performance. The Friedman test adheres to the chi-squared distribution ($\chi_F^2$) with $(q-1)$ degree of freedom (d.o.f) and its calculation involves: $ \chi^2_F = \frac{12N}{q(q+1)}\left[\sum_{j}\mathfrak{R}_j^2 - \frac{q(q+1)^2}{4}\right].$ The $F_F$ statistic is calculated as: $F_F = \frac{(N - 1)\chi_F^2}{N(q-1) - \chi_F^2}$, where $F$- distribution has $(q-1)$ and $(N-1)\times (q-1)$ degrees of freedom. For $q=6$ and $N=36$, we get $\chi_F^2 = 116.19$ and $F_F = 63.70$ at $5\%$ level of significance. Referring to the statistical $F$-distribution table, $F_F(5, 175) = 2.2657$. Since $63.70 > 2.2657$, we reject the null hypothesis. As a result, there exists a statistical distinction among the models being compared. Next, we employ the Nemenyi post hoc test to examine the pairwise distinctions between the models. The value of the critical difference $(C.D.)$ is evaluated as $C.D. = q_\alpha \sqrt{\frac{q(q+1)}{6N}}$, where $q_\alpha$ represents the critical value from the distribution table for the two-tailed Nemenyi test. According to statistical $F$-distribution table, $q_\alpha = 2.850$ at $5\%$ significance level, the $C.D.$ is calculated to be $1.256$. The average rank disparities between the proposed (GBTSVM, LS-GBTSVM, LS-GBTSVM (SOR)) models with SVM, GBSVM (PSO), and TSVM are $(1.49, 1.52, 0.32)$, $(2.82, 2.85, 1.65)$ and $(3.72, 3.75, 2.55)$ respectively. According to Nemenyi post hoc test, the proposed models GBTSVM, LS-GBTSVM, and LS-GBTSVM (SOR) significantly differ from the baseline models except for LS-GBTSVM (SOR) with SVM. It is evident that the proposed GBTSVM and LS-GBTSVM show better performance compared to baseline models. LS-GBTSVM (SOR) is significantly different among the models (except SVM) and LS-GBTSVM (SOR) surpasses SVM in terms of average rank.
\begin{table*}[]
\centering
    \caption{Average accuracy (ACC) and average rank of the proposed GBTSVM and LS-GBTSVM along with the baseline models over for UCI and KEEL datasets with linear kernel.}
    \label{Classification performance in Linear Case.}
    \resizebox{0.9\linewidth}{!}{
\begin{tabular}{cccccccc}
\hline
 & Noise & SVM \cite{cortes1995support} & GBSVM (PSO) \cite{xia2022gbsvm} & TSVM \cite{khemchandani2007twin} & GBTSVM$^{\dagger}$ & LS-GBTSVM$^{\dagger}$ & LS-GBTSVM (SOR)$^{\dagger}$ \\ \hline
Average ACC (Rank) & 0 \% & 82.21 (3.46) & 72.46 (4.79) & 65.47 (5.69) & 85.30 (1.97) & \textbf{85.32} (1.94) & 81.34 (3.14) \\
 & 5 \% & 82.11 (3.74) & 75.92 (4.78) & 82.27 (3.88)& \textbf{83.75} (2.83) & 82.96 (2.68)& 82.49 (3.10) \\
 & 10 \% & 83.06 (3.65)& 75.36 (4.97) & 84.64 (3.10)& \textbf{85.82} (2.42) & 82.11 (3.28) & 82.25 (3.58) \\
 & 15 \% & 81.25 (3.44) & 74.44 (4.94) & 81.11 (3.53) & \textbf{85.30} (2.26) & 81.50 (3.44) & 80.67 (3.38)\\
 & 20 \% & 82.41 (3.35)& 74.89 (4.90) & 82.46 (3.40)& \textbf{84.86} (2.99)& 83.11 (3.19) & 82.75 (3.17) \\ \hline
 \multicolumn{8}{l}{$^{\dagger}$ represents the proposed models. Bold text denotes the model with the highest average ACC.}
\end{tabular}}
\end{table*}
%%%%%%%%%%%%
\begin{table}[]
\centering
    \caption{Pairwise win-tie-loss test of all the compared models with linear kernel.}
    \label{win tie loss sign test}
    % \resizebox{0.5\textwidth}{!}{
    \resizebox{\columnwidth}{!}{%
\begin{tabular}{lccccc}
\hline
\textbf{}                         & SVM \cite{cortes1995support}         & GBSVM (PSO) \cite{xia2022gbsvm}       & TSVM \cite{khemchandani2007twin}          & GBTSVM$^{\dagger}$  & LS-GBTSVM$^{\dagger}$ \\ \hline
GBSVM (PSO) \cite{xia2022gbsvm}                    & $[4,     4,    28]$  &                       &                       &                          &                             \\
TSVM \cite{khemchandani2007twin}                     & $[0,    0,    36]$   & $[9,     0,    27]$   &                       &                          &                             \\
GBTSVM$^{\dagger}$          & $[33,     2,     1]$ & $[ 35,     0,     1]$ & $[ 36,     0,     0]$ &                          &                             \\
LS-GBTSVM$^{\dagger}$       & $[28,     2,     6]$ & $[33,     1,     2]$  & $[36,     0,     0]$  & $[18,     2,    16]$     &                             \\
LS-GBTSVM (SOR)$^{\dagger}$ & $[19,     1,    16]$ & $[29,     0,     7]$  & $[34,     0,     2]$  & $[12,     2,    22]$     & $[5,     5,    26]$ \\ \hline 
\multicolumn{6}{l}{$^{\dagger}$ represents the proposed models.}
\end{tabular}}
\end{table}
\begin{table}[ht!]
\centering
    \caption{Pairwise win-tie-loss test of all the compared models with the non-linear kernel.}
    \label{win tie loss sign test for non linear}
    % \resizebox{1.0\textwidth}{!}{
     \resizebox{\columnwidth}{!}{%
\begin{tabular}{lccccc}
\hline
\multicolumn{1}{c}{} & \multicolumn{1}{c}{SVM \cite{cortes1995support}} & \multicolumn{1}{c}{GBSVM (PSO) \cite{xia2022gbsvm}} & \multicolumn{1}{c}{TSVM \cite{khemchandani2007twin}} & \multicolumn{1}{c}{GBTSVM$^{\dagger}$} & \multicolumn{1}{c}{LS-GBTSVM$^{\dagger}$} \\ \hline
GBSVM (PSO) \cite{xia2022gbsvm}  & [19,    3,    14] &  &  &  &  \\
TSVM \cite{khemchandani2007twin} & [24,     2,    10] & [18,     5,    13] &  &  &  \\
GBTSVM$^{\dagger}$ & [27,     1,     8] & [27,     1,     8] & [26,     2,     8]&  &  \\
LS-GBTSVM$^{\dagger}$ & [20,    10,     6] & [22,     1,    13] & [19,     2,    15] & [8,     2,    26] &  \\
LS-GBTSVM (SOR)$^{\dagger}$ & [19,    12,     5] & [22,     2,    12] & [15,     3,    18] & [6,     2,    28] & [3,    25,     8] \\ \hline
\multicolumn{6}{l}{$^{\dagger}$ represents the proposed models.}
\end{tabular}}
\end{table}
%%%%%%%%%%%%%%%%%%%%%%
\begin{table*}[ht!]
\centering
    \caption{Average accuracy (ACC) and average rank of the proposed GBTSVM and LS-GBTSVM along with the baseline models over for UCI and KEEL datasets with non-linear kernel.}
    \label{Classification performance in nonLinear Case.}
    \resizebox{0.9\linewidth}{!}{
\begin{tabular}{cccccccc}
\hline
 & Noise & SVM \cite{cortes1995support} & GBSVM (PSO) \cite{xia2022gbsvm} & TSVM \cite{khemchandani2007twin} & GBTSVM$^{\dagger}$ & LS-GBTSVM$^{\dagger}$ & LS-GBTSVM (SOR)$^{\dagger}$ \\ \hline
{Average ACC (Rank)} & 0 \% & 76.27 (4.42) & 79.43 (4.03) & 84.83 (3.5) & \textbf{88.74} (2.17) & 85.85 (3.31) & 84.93 (3.58) \\
 & 5 \% & 77.86 (4.53) & 80.7 (4.38) & 85.13 (3.56) & \textbf{88.61} (2.21) & 86.95 (2.86) & 84.57 (3.47) \\
 & 10 \% & 77.19 (4.43) & 84.05 (3.9) & 84.51 (3.81) & \textbf{89.25} (2.1) & 84.79 (3.33) & 84.68 (3.43) \\
 & 15 \% & 79.3 (4.24) & 83.24 (4.5) & 85.37 (3.28) & \textbf{88.49} (2.43)& 85.32 (3.21) & 85.48 (3.35) \\
 & 20 \% & 81.08 (4.18) & 81.76 (4.47)& 84.6 (3.44) & \textbf{86.97} (2.54) & 85.39 (3.1) & 85.2 (3.26) \\ \hline
  \multicolumn{8}{l}{$^{\dagger}$ represents the proposed models. Bold text denotes the model with the highest average ACC.}
\end{tabular}}
\end{table*}
%%%%%%%%%%%%%%%%%%%%%%%
\par
Furthermore, to analyze the models, we use pairwise win-tie-loss sign test. As per the win-tie-loss sign test, under the null hypothesis, it is assumed that two models perform equivalently and are expected to win in $N/2$ datasets, where $N$ represents the dataset count. If the classification model win on approximately $\frac{N}{2} + 1.96 \frac{\sqrt{N}}{2}$, then the model is significantly better. Also, if there is an even count of ties between the two models, these ties are evenly divided between them. However, if the number of ties is odd, we disregard one tie and allocate the remaining ties among the specified classifiers. In this case, when $N=36$ if one of the models wins is at least $23.88$, then there is a significant difference between the models. Table \ref{win tie loss sign test} illustrates the comparative performance of the proposed GBTSVM, LS-GBTSVM, and LS-GBTSVM (SOR) models along with the baseline models, presenting their outcomes in terms of pairwise wins, ties, and losses using UCI and KEEL datasets. In Table \ref{win tie loss sign test}, the entry $[x, y, z]$ indicates that the model mentioned in the row wins $x$ times, ties $y$ times, and loses $z$ times in comparison to the model mentioned in the respective column. Table \ref{win tie loss sign test} clearly indicates that the proposed GBTSVM and LS-GBTSVM model exhibits significant superiority compared to the baseline models. Moreover, the proposed LS-GBTSVM (SOR) model achieves a statistically significant difference from GBSVM (PSO) and TSVM. Demonstrating a significant level of performance, the LS-GBTSVM (SOR) model succeeds in $19$ out of $36$ datasets. Therefore, the proposed GBTSVM, LS-GBTSVM, and LS-GBTSVM (SOR) models are significantly superior compared to the existing models.
\par
For the non-linear case, the average ACC and rank values are shown for the proposed GBTSVM, LS-GBTSVM, and LS-GBTSVM (SOR) with SVM, GBSVM, and TSVM in Table \ref{Classification performance in nonLinear Case.}. From the Table, it is evident that the proposed GBTSVM, LS-GBTSVM, and LS-GBTSVM (SOR) outperform the baseline models on the majority of the datasets. The average ACC of proposed GBTSVM, LS-GBTSVM, LS-GBTSVM (SOR), SVM, GBTSVM, and TSVM are $88.74\%$, $85.85\%$, $84.93\%$, $76.27\%$, $79.43\%$ and $84.83\%$, respectively. This demonstrates a clear performance improvement, with our proposed models securing the top position compared to the baseline models. It can be noted that among all the models, our proposed GBTSVM holds the lowest average rank. Furthermore, we conduct the Friedman statistical test along with Nemenyi post hoc tests. We compute $\chi^2_F = 30.94$ and $F_F = 7.26$ and $F_F(5, 175) = 2.2657$ at $5\%$ level of significance. Since $F_F(5, 175)<F_F$, therefore we reject the null hypothesis. Moreover, the Nemenyi post-hoc test is employed to identify significant differences among the pairwise comparisons. We compute $C.D. = 1.256$ and the average rankings of the models listed in Table \ref{Classification performance in nonLinear Case.} should have a minimum difference by $1.256$. The average rank disparities between the proposed (GBTSVM, LS-GBTSVM, LS-GBTSVM (SOR)) models with SVM, GBSVM (PSO), and TSVM are $(2.25, 1.11, 0.84)$, $(1.86, 0.72, 0.45)$ and $(1.33, 0.19, 0.08)$ respectively. Hence, the proposed GBTSVM exhibits significant superiority over the baseline models. The Friedman test did not show the statistical difference between the proposed LS-GBTSVM and LS-GBTSVM (SOR) with baseline SVM, GBTSVM, and TSVM models as well as GBTSVM with SVM. However, Table \ref{Classification performance in nonLinear Case.} unequivocally demonstrates that both the proposed LS-GBTSVM and LS-GBTSVM (SOR) consistently achieve the lowest average rank and higher average ACC in comparison to the baseline classifiers. Also, the proposed GBTSVM has a lower rank when compared to SVM. As a result, the proposed GBTSVM, LS-GBTSVM, and LS-GBTSVM (SOR) outperformed the baseline models.
\begin{table*}[ht!]
\centering
    \caption{Testing accuracy (ACC) and training time of classifiers on NDC datasets with linear kernel.}
    \label{The average learning results of classifiers on NDC}
    \resizebox{0.9\linewidth}{!}{
\begin{tabular}{lcccccc}
\hline
NDC datasets & SVM \cite{cortes1995support} & GBSVM (PSO) \cite{xia2022gbsvm}  & TSVM \cite{khemchandani2007twin} & GBTSVM$^{\dagger}$  & LS-GBTSVM$^{\dagger}$  & LS-GBTSVM (SOR)$^{\dagger}$  \\  
& ACC(\%) (Time(s)) &  ACC(\%) (Time(s)) & ACC(\%) (Time(s)) & ACC(\%) (Time(s)) & ACC(\%) (Time(s)) & ACC(\%) (Time(s)) \\ \hline
NDC-10k & 80.64 (309.0300) & 52.43 (1044.2187) & \textbf{86.59} (209.606) & 81.44 (0.1562) & 83.89 (0.5472) & 79.59 (0.2343) \\
NDC-50k & 79.42 (809.5466) & 53.41 (2478.1415) & \textbf{86.21} (715.689) & 80.84 (0.578) & 83.42 (2.5467) & 79.9 (1.2856) \\
NDC-1l & b & a & b & \textbf{85.77} (0.3562) & 84.17 (3.6237) & 74.6 (1.5005) \\
NDC-3l &  b& a & b & 80.52 (0.906) & \textbf{82.97} (3.7371) & 73.41 (4.1415) \\
NDC-5l &  b& a & b & 81.12 (1.5326) & \textbf{82.64} (5.7185) & 75.65 (5.8356) \\
NDC-1m &  b& a & b & 79.42 (2.7863) & 78.9 (6.9668) & \textbf{79.98} (12.3445) \\
NDC-3m & b & a & b & 78.04 (8.1229) & \textbf{84.73} (8.1191) & 79.54 (14.6126) \\
NDC-5m & b & a & b & \textbf{78.68} (12.8752) & 77.54 (24.0667) & 73.65 (32.146) \\ \hline
\multicolumn{7}{l}{\begin{tabular}[c]{@{}l@{}}$^a$ Experiment is terminated because of the out of bound issue shown by PSO algorithm. $^b$ Terminated because of out of memory.\\
$^{\dagger}$ represents the proposed models. Bold text denotes the model with the highest average ACC.
\end{tabular}}
\end{tabular}}
\end{table*}
We also conduct a win-tie-loss sign test for non-linear cases. Table \ref{win tie loss sign test for non linear} shows the pairwise win-tie-loss of the compared models on UCI and KEEL datasets. In our case, if any of the two models wins on at least $23.88$ datasets, the two models are statistically different. It is evident from Table \ref{win tie loss sign test for non linear} that the proposed GBTSVM model statistically outperforms the baseline SVM, GBSVM (PSO), and TSVM models. The proposed LS-GBTSVM is statistically better than SVM. In general, from Table \ref{win tie loss sign test for non linear} and the aforementioned analysis; GBTSVM, LS-GBTSVM, and LS-GBTSVM (SOR) models with different performance metrics and statistical tests, it becomes clear that the proposed GBTSVM, LS-GBTSVM, and LS-GBTSVM (SOR) models exhibit competitive or even superior performance when compared with the baseline models. 
\subsection{Experiments on UCI and KEEL Datasets with Label Noise}
To validate the effectiveness and noise resilience of the proposed GBTSVM and LS-GBTSVM models, we contaminate the label noise, including $5\%$, $10\%$, $15\%$, and $20\%$ on each dataset. The comparative experimental results of the proposed GBTSVM and LS-GBTSVM models along with the baseline models with linear and non-linear cases are shown in Table S.III and Table S.IV in Section S.V of the Supplementary material. The average ACC and average rank of the models with the linear kernel are shown in Table \ref{Classification performance in Linear Case.} with different percentages of noise labels. The classification ACC of proposed GBTSVM and LS-GBTSVM models is better than the baseline models. It indicates that the GBTSVM and LS-GBTSVM show better robustness among the compared models because a granular ball possesses a coarser granularity, which can mitigate the impact of label noise within it. The label assigned to a granular ball is primarily determined by the predominant label contained within it, and the presence of label noise with minority labels does not significantly influence the determination of the granular ball. We observe that using the ``qpsolvers" function to solve the dual problem of LS-GBTSVM gets more stable classification results than using the SOR algorithm in LS-GBTSVM on most of the dataset. The average ACC of LS-GBTSVM is higher than the LS-GBTSVM (SOR). Furthermore, the average rank of LS-GBTSVM under $0\%$ and $5\%$ label noise is $1.94$ and $2.68$, respectively, which is the lowest rank and GBTSVM is $2.97$ and $2.83$, respectively, are the second lowest. Similarly, GBTSVM has the lowest rank under $10\%$, $15\%$, and $20\%$ label noise. In conclusion, GBTSVM and LS-GBTSVM consistently outperform the compared models across various levels of label noise. The sensitivity analysis of the proposed GBTSVM model, considering different levels of label noise is presented in subsection S.IV.B of the Supplementary material.
\subsection{Experiment on Artificial NDC Datasets.}
To showcase the superiority of the proposed GBTSVM, LS-GBTSVM, and LS-GBTSVM (SOR) models in terms of training speed and scalability, we perform experiments using the NDC datasets \cite{ndc}. In this experiment, the NDC datasets are generated with varying sizes, ranging from $10\text{k}$ to $5\text{m}$ while keeping the number of features constant to $32$. Table \ref{The average learning results of classifiers on NDC} presents the ACC and training time of the compared models on the NDC datasets. The results show that the proposed GBTSVM, LS-GBTSVM, and LS-GBTSVM (SOR) are more efficient among the baseline models. The following issues arise in the baseline models while handling large-scale datasets: $(i)$ training of TSVM demands substantial memory consumption to compute matrix inversion. As a result, an ``out-of-memory" issue occurs when the scale reaches $1\text{l}$, $(ii)$ training SVM requires solving a QPP, which requires significant computational resources. As the dataset size increases, the time required for training and prediction can become prohibitive, and $(iii)$ the PSO algorithm in the GBSVM (PSO) is halted due to the emergence of an out-of-bounds issue. The results of the experiments demonstrate that the proposed GBTSVM, LS-GBTSVM, and LS-GBTSVM (SOR) exhibit efficiency several hundreds or even thousands of times faster than the compared models. This is due to the fact that the count of generated granular balls on a dataset is significantly lower compared to the total number of samples.
\section{Conclusions}
\label{conclusions}
In this paper, we proposed a novel granular ball twin support vector machine (GBTSVM) as a solution to the challenges faced by TSVM. GBTSVM utilized the coarse granularity of granular balls for input, leading to two nonparallel hyperplanes for sample classification. The proposed GBTSVM mitigates the impact of noise and outliers while also eliminating the overhead of higher computational costs typically associated with standard SVM, TSVM, and their variants.
We again proposed a novel large-scale GBTSVM (LS-GBTSVM) by incorporating a regularization term in the primal optimization formulation to implement the SRM principle. LS-GBTSVM's key advantage lies in avoiding matrix inversions, which made it suitable for large-scale problems; and effectively addressing overfitting concerns.

To demonstrate the effectiveness, robustness, scalability, and efficiency of the proposed GBTSVM and LS-GBTSVM models, we conducted a series of rigorous experiments and subjected them to comprehensive statistical analyses. Our experimental results, encompassing 36 UCI and KEEL datasets (with and without label noise), were subjected to a series of statistical tests. The experimental results, along with the statistical analyses, indicate that the proposed linear and non-linear GBTSVM and LS-GBTSVM models beat baseline approaches in efficiency and generalization performance. Here are the key findings: $(i)$ The proposed models exhibit an average ACC improvement of up to $20\%$ in comparison to the baseline models for the linear case. $(ii)$ 
% We assessed the robustness of the proposed GBTSVM and LS-GBTSVM models by introducing label noise into UCI and KEEL datasets.
Our models have demonstrated an up to $12\%$ increase in average ACC compared to the baseline models in noisy conditions, showcasing exceptional resilience when contrasted with the baseline models for nonlinear cases. $(iii)$ We tested the models on large-scale NDC datasets from 10k to 5m samples. Baseline models faced memory issues beyond NDC-50k, but our proposed models excelled, demonstrating scalability and efficiency on large-scale datasets. $(iv)$ We conducted a series of sensitivity analyses to understand the behavior of hyperparameters of the proposed models. The key hyperparameters under investigation include the granular ball parameters $num$, $pur$, $\sigma$, different levels of label noise, $d_1$, and $d_2$ with different-different combinations. While our proposed models have showcased outstanding performance in binary classification problems. An essential avenue for future research would involve adapting and extending these models to address the complexities associated with multi-class classification scenarios. 
% The source code link of the proposed GBTSVM and LS-GBTSVM models are available at $https://github.com/mtanveer1/GBTSVM$.
%%%%%%%%%%%%%%%%%%%%%%%%%%%%%%%%%%
\vspace{-0.2cm}
\section*{Acknowledgment}
This project received funding from the Indian government's Department of Science and Technology (DST) and the Ministry of Electronics and Information Technology (MeitY) through the MTR/2021/000787 grant as part of the Mathematical Research Impact-Centric Support (MATRICS) scheme. Md Sajid's fellowship is provided by the Council of Scientific and Industrial Research (CSIR), New Delhi, under the grants 09/1022(13847)/2022-EMR-I.
% The facilities and assistance offered by IIT Indore are appreciated by the authors.
 %%%%%%%%%%%%%%%%%%%%%%%%%%%%%%%%%
\bibliographystyle{IEEEtranN}
\bibliography{refs.bib}

% Generated by IEEEtranN.bst, version: 1.14 (2015/08/26)
\begin{thebibliography}{34}
\providecommand{\natexlab}[1]{#1}
\providecommand{\url}[1]{#1}
\csname url@samestyle\endcsname
\providecommand{\newblock}{\relax}
\providecommand{\bibinfo}[2]{#2}
\providecommand{\BIBentrySTDinterwordspacing}{\spaceskip=0pt\relax}
\providecommand{\BIBentryALTinterwordstretchfactor}{4}
\providecommand{\BIBentryALTinterwordspacing}{\spaceskip=\fontdimen2\font plus
\BIBentryALTinterwordstretchfactor\fontdimen3\font minus \fontdimen4\font\relax}
\providecommand{\BIBforeignlanguage}[2]{{%
\expandafter\ifx\csname l@#1\endcsname\relax
\typeout{** WARNING: IEEEtranN.bst: No hyphenation pattern has been}%
\typeout{** loaded for the language `#1'. Using the pattern for}%
\typeout{** the default language instead.}%
\else
\language=\csname l@#1\endcsname
\fi
#2}}
\providecommand{\BIBdecl}{\relax}
\BIBdecl

\bibitem[Cortes and Vapnik(1995)]{cortes1995support}
C.~Cortes and V.~Vapnik, ``Support-vector networks,'' \emph{Machine Learning}, vol.~20, pp. 273--297, 1995.

\bibitem[Bollegala et~al.(2010)Bollegala, Matsuo, and Ishizuka]{bollegala2010web}
D.~Bollegala, Y.~Matsuo, and M.~Ishizuka, ``A web search engine-based approach to measure semantic similarity between words,'' \emph{IEEE Transactions on Knowledge and Data Engineering}, vol.~23, no.~7, pp. 977--990, 2010.

\bibitem[Richhariya et~al.(2020)Richhariya, Tanveer, Rashid, and {ADNI}]{richhariya2020diagnosis}
B.~Richhariya, M.~Tanveer, A.~H. Rashid, and {ADNI}, ``Diagnosis of {A}lzheimer's disease using universum support vector machine based recursive feature elimination ({USVM-RFE}),'' \emph{Biomedical Signal Processing and Control}, vol.~59, p. 101903, 2020.

\bibitem[Mangasarian and Wild(2005)]{mangasarian2005multisurface}
O.~L. Mangasarian and E.~W. Wild, ``Multisurface proximal support vector machine classification via generalized eigenvalues,'' \emph{IEEE Transactions on Pattern Analysis and Machine Intelligence}, vol.~28, no.~1, pp. 69--74, 2005.

\bibitem[Jayadeva et~al.(2007)Jayadeva, Khemchandani, and Chandra]{khemchandani2007twin}
Jayadeva, R.~Khemchandani, and S.~Chandra, ``Twin support vector machines for pattern classification,'' \emph{IEEE Transactions on Pattern Analysis and Machine Intelligence}, vol.~29, no.~5, pp. 905--910, 2007.

\bibitem[Tanveer et~al.(2022)Tanveer, Rajani, Rastogi, Shao, and Ganaie]{tanveer2022comprehensive}
M.~Tanveer, T.~Rajani, R.~Rastogi, Y.-H. Shao, and M.~A. Ganaie, ``Comprehensive review on twin support vector machines,'' \emph{Annals of Operations Research}, pp. 1--46, 2022.

\bibitem[Shao et~al.(2011)Shao, Zhang, Wang, and Deng]{shao2011improvements}
Y.-H. Shao, C.-H. Zhang, X.-B. Wang, and N.-Y. Deng, ``Improvements on twin support vector machines,'' \emph{IEEE Transactions on Neural Networks}, vol.~22, no.~6, pp. 962--968, 2011.

\bibitem[Tian et~al.(2014)Tian, Ju, Qi, and Shi]{tian2014improved}
Y.~Tian, X.~Ju, Z.~Qi, and Y.~Shi, ``Improved twin support vector machine,'' \emph{Science China Mathematics}, vol.~57, pp. 417--432, 2014.

\bibitem[Tanveer et~al.(2019)Tanveer, Sharma, and Suganthan]{tanveer2019general}
M.~Tanveer, A.~Sharma, and P.~N. Suganthan, ``General twin support vector machine with pinball loss function,'' \emph{Information Sciences}, vol. 494, pp. 311--327, 2019.

\bibitem[Xu et~al.(2016)Xu, Yang, and Pan]{xu2016novel}
Y.~Xu, Z.~Yang, and X.~Pan, ``A novel twin support-vector machine with pinball loss,'' \emph{IEEE Transactions on Neural Networks and Learning Systems}, vol.~28, no.~2, pp. 359--370, 2016.

\bibitem[Kumar and Gopal(2009)]{kumar2009least}
M.~A. Kumar and M.~Gopal, ``Least squares twin support vector machines for pattern classification,'' \emph{Expert Systems with Applications}, vol.~36, no.~4, pp. 7535--7543, 2009.

\bibitem[Tanveer et~al.(2016)Tanveer, Khan, and Ho]{tanveer2016robust}
M.~Tanveer, M.~A. Khan, and S.-S. Ho, ``Robust energy-based least squares twin support vector machines,'' \emph{Applied Intelligence}, vol.~45, pp. 174--186, 2016.

\bibitem[Lin and Wang(2002)]{lin2002fuzzy}
C.-F. Lin and S.-D. Wang, ``Fuzzy support vector machines,'' \emph{IEEE Transactions on Neural Networks}, vol.~13, no.~2, pp. 464--471, 2002.

\bibitem[Rezvani et~al.(2019)Rezvani, Wang, and Pourpanah]{rezvani2019intuitionistic}
S.~Rezvani, X.~Wang, and F.~Pourpanah, ``Intuitionistic fuzzy twin support vector machines,'' \emph{IEEE Transactions on Fuzzy Systems}, vol.~27, no.~11, pp. 2140--2151, 2019.

\bibitem[Liang and Zhang(2022)]{liang2022intuitionistic}
Z.~Liang and L.~Zhang, ``Intuitionistic fuzzy twin support vector machines with the insensitive pinball loss,'' \emph{Applied Soft Computing}, vol. 115, p. 108231, 2022.

\bibitem[Xia et~al.(2022)Xia, Dai, Wang, Gao, and Giem]{xia2022efficient}
S.~Xia, X.~Dai, G.~Wang, X.~Gao, and E.~Giem, ``An efficient and adaptive granular-ball generation method in classification problem,'' \emph{IEEE Transactions on Neural Networks and Learning Systems}, 2022.

\bibitem[Xia et~al.(2019)Xia, Liu, Ding, Wang, Yu, and Luo]{xia2019granular}
S.~Xia, Y.~Liu, X.~Ding, G.~Wang, H.~Yu, and Y.~Luo, ``Granular ball computing classifiers for efficient, scalable and robust learning,'' \emph{Information Sciences}, vol. 483, pp. 136--152, 2019.

\bibitem[Xia et~al.(2020)Xia, Peng, Meng, Zhang, Wang, Giem, Wei, and Chen]{xia2020fast}
S.~Xia, D.~Peng, D.~Meng, C.~Zhang, G.~Wang, E.~Giem, W.~Wei, and Z.~Chen, ``A fast adaptive k-means with no bounds,'' \emph{IEEE Transactions on Pattern Analysis and Machine Intelligence}, 2020.

\bibitem[Xia et~al.(2021)Xia, Zheng, Wang, Gao, and Wang]{xia2021granular}
S.~Xia, S.~Zheng, G.~Wang, X.~Gao, and B.~Wang, ``Granular ball sampling for noisy label classification or imbalanced classification,'' \emph{IEEE Transactions on Neural Networks and Learning Systems}, 2021.

\bibitem[Zhang et~al.(2021)Zhang, Gou, Lv, and Miao]{zhang2021double}
X.~Zhang, H.~Gou, Z.~Lv, and D.~Miao, ``Double-quantitative distance measurement and classification learning based on the tri-level granular structure of neighborhood system,'' \emph{Knowledge Based Systems}, vol. 217, p. 106799, 2021.

\bibitem[Pedrycz and Kwak(2007)]{pedrycz2007development}
W.~Pedrycz and K.-C. Kwak, ``The development of incremental models,'' \emph{IEEE Transactions on Fuzzy Systems}, vol.~15, no.~3, pp. 507--518, 2007.

\bibitem[Pedrycz(1984)]{pedrycz1984identification}
W.~Pedrycz, ``Identification in fuzzy systems,'' \emph{IEEE Transactions on Systems, Man, and Cybernetics}, no.~2, pp. 361--366, 1984.

\bibitem[Song et~al.(2021)Song, Wu, Pedrycz, and Wang]{song2021integrating}
A.~Song, G.~Wu, W.~Pedrycz, and L.~Wang, ``Integrating variable reduction strategy with evolutionary algorithms for solving nonlinear equations systems,'' \emph{IEEE/CAA Journal of Automatica Sinica}, vol.~9, no.~1, pp. 75--89, 2021.

\bibitem[Xia et~al.(2024, 10.1109/TNNLS.2024.3417433)Xia, Lian, Wang, Gao, Chen, and Peng]{xia2022gbsvm}
S.~Xia, X.~Lian, G.~Wang, X.~Gao, J.~Chen, and X.~Peng, ``{GBSVM}: {A}n efficient and robust support vector machine framework via granular-ball computing,'' \emph{IEEE Transactions on Neural Networks and Learning Systems}, 2024, 10.1109/TNNLS.2024.3417433.

\bibitem[Dua and Graff(2017)]{dua2017uci}
D.~Dua and C.~Graff, ``{UCI} machine learning repository.'' \emph{Available: http://archive.ics.uci.edu/ml}, 2017.

\bibitem[Derrac et~al.(2015)Derrac, Garcia, Sanchez, and Herrera]{derrac2015keel}
J.~Derrac, S.~Garcia, L.~Sanchez, and F.~Herrera, ``K{EEL} data-mining software tool: Data set repository, integration of algorithms and experimental analysis framework,'' \emph{J. Mult. Valued Log. Soft Comput}, vol.~17, pp. 255--287, 2015.

\bibitem[Musicant(1998)]{ndc}
D.~R. Musicant, ``{NDC:} normally distributed clustered datasets,'' 1998, www.cs.wisc.edu/dmi/svm/ndc/.

\bibitem[Zhou et~al.(2009)Zhou, Yu, and Cai]{zhou2009novel}
Y.~Zhou, H.~Yu, and X.~Cai, ``A novel k-means algorithm for clustering and outlier detection,'' in \emph{2009 Second International Conference on Future Information Technology and Management Engineering}.\hskip 1em plus 0.5em minus 0.4em\relax IEEE, 2009, pp. 476--480.

\bibitem[Qi and Yang(2021)]{qi2021elastic}
K.~Qi and H.~Yang, ``Elastic net nonparallel hyperplane support vector machine and its geometrical rationality,'' \emph{IEEE Transactions on Neural Networks and Learning Systems}, vol.~33, no.~12, pp. 7199--7209, 2021.

\bibitem[Press et~al.(1988)Press, Vetterling, Teukolsky, and Flannery]{press1988numerical}
W.~H. Press, W.~T. Vetterling, S.~A. Teukolsky, and B.~P. Flannery, \emph{Numerical recipes}.\hskip 1em plus 0.5em minus 0.4em\relax Citeseer, 1988.

\bibitem[Horn and Johnson(2012)]{horn2012matrix}
R.~A. Horn and C.~R. Johnson, \emph{Matrix analysis}.\hskip 1em plus 0.5em minus 0.4em\relax Cambridge University Press, 2012.

\bibitem[Luo and Tseng(1993)]{luo1993error}
Z.-Q. Luo and P.~Tseng, ``Error bounds and convergence analysis of feasible descent methods: a general approach,'' \emph{Annals of Operations Research}, vol.~46, no.~1, pp. 157--178, 1993.

\bibitem[Mangasarian and Musicant(1999)]{mangasarian1999successive}
O.~L. Mangasarian and D.~R. Musicant, ``Successive overrelaxation for support vector machines,'' \emph{IEEE Transactions on Neural Networks}, vol.~10, no.~5, pp. 1032--1037, 1999.

\bibitem[Dem{\v{s}}ar(2006)]{demvsar2006statistical}
J.~Dem{\v{s}}ar, ``Statistical comparisons of classifiers over multiple data sets,'' \emph{The Journal of Machine Learning Research}, vol.~7, pp. 1--30, 2006.

\end{thebibliography}

\clearpage
\section*{Supplementary Material}
\renewcommand{\thesection}{S.I}
\section{Mathematical Formulation of GBSVM and TSVM}
In this section, we go through the mathematical formulation of GBSVM and TSVM.
\subsection{Granular Ball Support Vector Machine (GBSVM)}
The GBSVM \cite{xia2022gbsvm} model initiates by partitioning the input data points into granular balls of different sizes. These balls, characterized by their centers and radii, are then fed into the classifier.
In Figure \ref{Schematics of GBSVM}, the red and blue colors represent the granular balls of the $+1$ class and the $-1$ class, respectively. The optimization problem of GBSVM is given as follows: 
\begin{align}
\label{Seq:2}
    & min \hspace{0.2cm} \frac{1}{2} \|w\|^2  + C\sum_{i=1}^{p} \xi_i\nonumber \\
    & s.t. \hspace{0.2cm}  y_i(wc_i+b) - \|w\|r_i \geq 1 - \xi_i, \nonumber \\
    & \hspace{0.7cm} \xi_i \geq 0  ,\hspace{0.2cm} i=1,2 \ldots, p,
\end{align}
where $w$ and $b$ denote the normal vector and bias of the decision plane; $\xi$ represent the slack variable along with penalty coefficient $C$.
The dual of \eqref{Seq:2} is given as:
\begin{align}
    & max -\frac{1}{2} \|w\|^2  +  \sum_{i=1}^{p} \alpha_i \nonumber \\
    & \text{s.t.} \hspace{0.2cm} \sum_{i=1}^{p} \alpha_iy_i = 0, \nonumber \\
    & \hspace{0.6cm} 0 \leq \alpha_i \leq C, \hspace{0.2cm} i=1, 2, \ldots p, 
\end{align}
where $\alpha_i$'s are Lagrangian multipliers.
\subsection{Twin Support Vector Machine (TSVM)}
In TSVM \cite{khemchandani2007twin}, two non-parallel hyperplanes are generated, with each plane passing through the corresponding samples of the respective classes and maximizing the distance of the hyperplanes from samples of the other class. The optimization problem of TSVM can be written as:
\begin{align}
\label{Seq:3}
      & \underset{w_1,b_1}{min} \hspace{0.2cm} \frac{1}{2} \|Aw_1 + e_1b_1\|^2 + d_1e_2^T\xi_2 \nonumber \\
     & s.t. \hspace{0.2cm} -(Bw_1 + e_2b_1) + \xi_2 \geq e_2, \nonumber \\
     & \hspace{0.8cm} \xi_2 \geq 0,
\end{align}
and
\begin{align}
\label{eq:4}
    & \underset{w_2,b_2}{min} \hspace{0.2cm} \frac{1}{2} \|Bw_2 + e_2b_2\|^2 + d_2e_1^T\xi_1 \nonumber \\
     & s.t. \hspace{0.2cm} (Aw_2 + e_1b_2) + \xi_1 \geq e_1, \nonumber \\
     & \hspace{0.8cm} \xi_1 \geq 0,
\end{align}
where $d_1$ and $d_2$ $(> 0)$ are penalty parameters, $e_1$ and $e_2$ are column vectors of ones with appropriate dimensions, $\xi_1$ and $\xi_2$ are slack vectors, respectively. Once the optimal parameters, $i.e.$, $(w_1, b_1)$ and $(w_2, b_2)$, are obtained, a new input sample $x$ into either the $1$ ($+1$ class) or $2$ ($-1$ class) class can be labeled as follows:
\begin{align}
\label{eq:5}
    \text{class}(x) =  \underset{i \in \{1, 2\}}{\arg\min} \frac{\lvert w_i^Tx + b_i\rvert}{\|w_i\|}.
\end{align}

\renewcommand{\thefigure}{S.1}
\begin{figure}
    \centering
\includegraphics[width=0.5\textwidth,height=4cm]{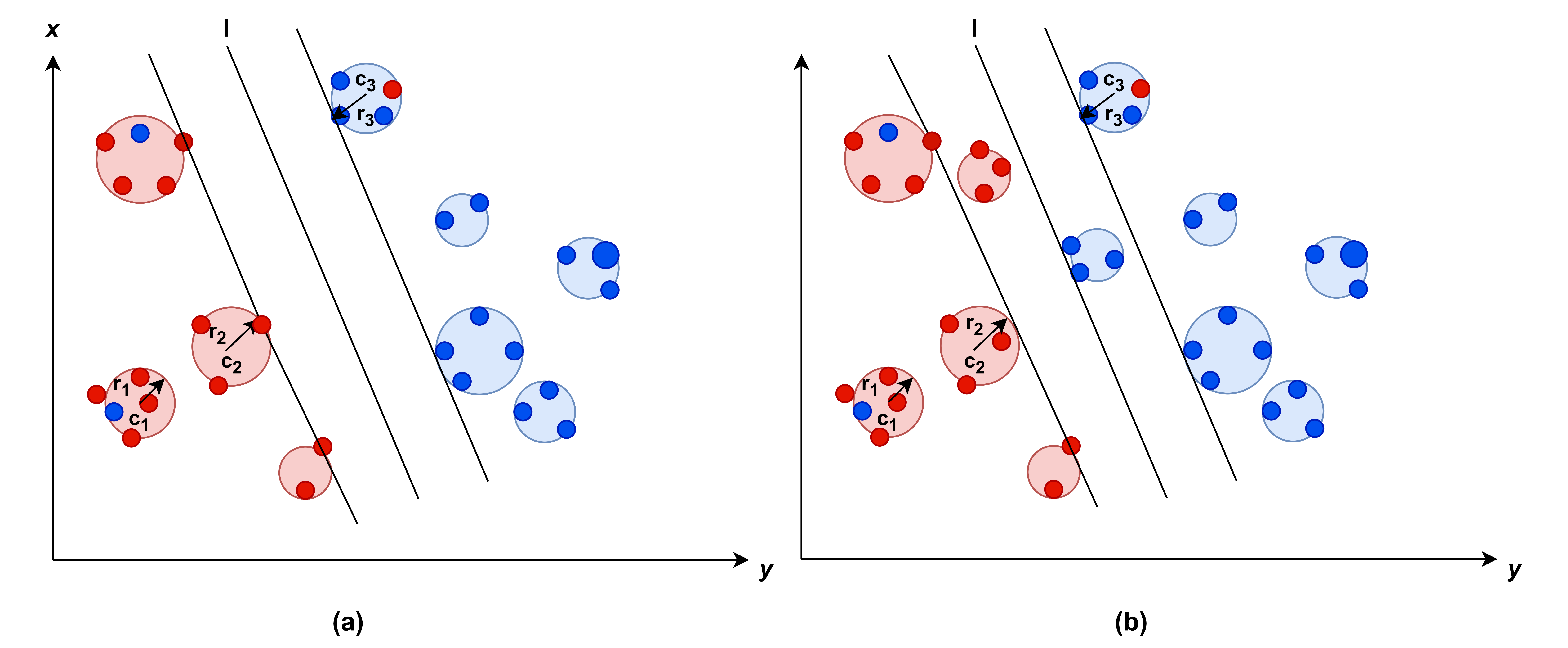}
    \caption{Schematics diagram of GBSVM, where (a) represents the separable GBSVM and (b) represents the inseparable GBSVM.}
    \label{Schematics of GBSVM}
\end{figure}

\renewcommand{\thesection}{S.II}
\section{Mathematical formulation of the proposed GBTSVM and LS-GBTSVM models for linear and non-linear cases}
In this section, we present the formulation of the proposed model, i.e., GBTSVM and LS-GBTSVM for linear and non-linear cases.
\subsection{Linear LS-GBTSVM}
The optimization problem of LS-GBTSVM for linear case is given as follows:
\begin{align}
\label{Seq:1}
       \underset{w_1,b_1,\eta_1, \xi_2}{min} & \hspace{0.2cm} \frac{1}{2} d_3(\|w_1\|^2+b_1^2) + \frac{1}{2}\eta_1^T\eta_1 + d_1e_2^T\xi_2 \nonumber \\
       s.t. & \hspace{0.2cm} C_1w_1+e_1b_1 = \eta_1, \nonumber \\
     & \hspace{0.2cm} -(C_2w_1 + e_2b_1) + \xi_2 \geq e_2+R_2, \nonumber \\
     & \hspace{0.2cm} \xi_2 \geq 0,
\end{align}
and
\begin{align}
\label{SSeq:2}
       \underset{w_2,b_2,\eta_2,\xi_1}{min} & \hspace{0.2cm} \frac{1}{2}d_4 (\|w_2\|^2 + b_2^2) + \frac{1}{2}\eta_2^T\eta_2 + d_2e_1^T\xi_1 \nonumber \\
       s.t. & \hspace{0.2cm} C_2w_2 + e_2b_2 = \eta_2, \nonumber \\
     &\hspace{0.2cm} (C_1w_2 + e_1b_2) + \xi_1 \geq e_1+R_1, \nonumber \\
     & \hspace{0.2cm} \xi_1 \geq 0.
\end{align}
The Lagrangian function \eqref{Seq:1} can be expressed as follows:
\begin{align}
\label{eq:3}
    L= &\frac{1}{2} d_3(\|w_1\|^2+b_1^2) + \frac{1}{2}\eta_1^T\eta_1 + d_1e_2^T\xi_2  \nonumber \\ 
    &+ \alpha_1^T (C_1w_1 + e_1b_1 - \eta_1)   \nonumber   \\ 
   & - \beta_1^T (-(C_2w_1+e_2b_1) + \xi_2 -e_2 - R_2)  \nonumber   \\ 
  &  - \gamma_1^T \xi_2,
\end{align}
where $\alpha_1, \beta_1, \gamma_1$ are the Lagrangian multipliers. By applying the K.K.T. conditions to equation \eqref{eq:3}, we obtain the following:
\begin{align}
    &\frac{\partial L}{\partial w_1} = d_3w_1 + C_1^T\alpha_1 + C_2^T\beta_1 = 0, \label{Seq:4} \\
   & \frac{\partial L}{\partial b_1} = d_3b_1 + e_1^T\alpha_1 + e_2^T\beta_1 = 0, \label{Seq:5} \\
   & \frac{\partial L}{\partial \xi_2} = d_1e_2 -\beta_1 -\gamma_1 = 0, \label{eq:6} \\
   & \frac{\partial L}{\partial \eta_1} = \eta_1 - \alpha_1 = 0, \label{Seq:7}
\end{align}
\begin{align}   
     & \beta_1^T(-(C_2w_1+e_2b_1) + \xi_2 -e_2 - R_2) = 0,\\
     & \gamma_1^T\xi_2 = 0, \\
      & \alpha_1^T (C_1w_1 + e_1b_1 - \eta_1) = 0.
\end{align}
Expressing \eqref{Seq:4} and \eqref{Seq:5} in a matrix representation, we obtain:
\begin{align}
\label{Seq:11}
    \begin{pmatrix}
w_1 \\
b_1 
\end{pmatrix}  = - \frac{1}{d_3} \begin{pmatrix}
C_1^T & C_2^T\\
e_1^T & e_2^T
\end{pmatrix} \begin{pmatrix}
\alpha_1 \\
\beta_1 
\end{pmatrix}.
\end{align}
From equation \eqref{Seq:1}, it can be observed that the hyperplane parameters $\binom{w_1}{b_1}$ is obtained without the need of matrix inversion. The dual of \eqref{Seq:1} is given as follows:
\begin{align}
\label{Seq:12}
    \underset{\alpha_1, \beta_1}{max} & -\frac{1}{2}\begin{pmatrix}
\alpha_1^T & \beta_1^T
\end{pmatrix} \tilde{P} \begin{pmatrix}
\alpha_1 \\
\beta_1 
\end{pmatrix} + d_3 \beta_1^T(e_2 + R_2) \nonumber \\
s.t. \hspace{0.2cm} &  d_1 e_2 - \beta_1 - \gamma_1 =  0, \nonumber \\
\text{where} \hspace{0.2cm} & \tilde{P} = \begin{pmatrix}
C_1C_1^T + d_3 I & C_1C_2^T\\
C_2C_1^T  & C_2C_2^T
\end{pmatrix} + E.
\end{align} 
% Here, E is a matrix of ones and $I$ is the appropriate dimension identity matrix.
Here, matrix $E$ consists entirely of ones, while $I$ denotes the identity matrix of the suitable dimension.\\
Since $\gamma_1 \geq 0$, \eqref{Seq:12} can be equivalently written as:
\begin{align}
\label{eq:13}
    \underset{\alpha_1, \beta_1}{min} \hspace{0.2cm} & \frac{1}{2}\begin{pmatrix}
\alpha_1^T & \beta_1^T
\end{pmatrix} \tilde{P} \begin{pmatrix}
\alpha_1 \\
\beta_1 
\end{pmatrix} - d_3 \beta_1^T(e_2 + R_2) \nonumber \\
s.t. \hspace{0.2cm} & 0 \leq \beta_1 \leq d_1 e_2, \nonumber \\
\text{where} \hspace{0.2cm} & \tilde{P} = \begin{pmatrix}
C_1C_1^T + d_3 I & C_1C_2^T\\
C_2C_1^T  & C_2C_2^T
\end{pmatrix} + E.
\end{align}
Similarly, $w_2$ and $b_2$ for the problem \eqref{SSeq:2} is determined as:
\begin{align}
\label{eq:36}
    \begin{pmatrix}
w_2 \\
b_2 
\end{pmatrix}  = \frac{1}{d_4} \begin{pmatrix}
C_2^T & C_1^T\\
e_2^T & e_1^T
\end{pmatrix} \begin{pmatrix}
\alpha_2 \\
\beta_2 
\end{pmatrix}.
\end{align}
The corresponding dual problems \eqref{SSeq:2} can be obtained as follows:
\begin{align}
\label{eq:15}
    \underset{\alpha_2, \beta_2}{min} \hspace{0.2cm} & \frac{1}{2}\begin{pmatrix}
\alpha_2^T & \beta_2^T
\end{pmatrix} \tilde{Q} \begin{pmatrix}
\alpha_2 \\
\beta_2 
\end{pmatrix} - d_4 \beta_2^T(e_1 + R_1) \nonumber \\
s.t. \hspace{0.2cm} & 0 \leq \beta_2 \leq d_2 e_1, \nonumber \\
\text{where} \hspace{0.2cm} & \tilde{Q} = \begin{pmatrix}
C_2C_2^T + d_4 I & C_2C_1^T\\
C_1C_2^T  & C_1C_1^T
\end{pmatrix} + E.
\end{align}

\subsection{Non-linear GBTSVM}

To construct non-linear GBTSVM, consider the mapping $x^\phi = \phi(x): \mathbb{R}^n \to \mathscr{H}$, where $\mathscr{H}$ is a Hilbert space. Define $D^\phi = \{x^\phi \hspace{0.1cm} |  \hspace{0.1cm} x \in D\}$, where $D$ is input data samples. Then the granular ball generated on the set $D^\phi$ is denoted by $S^\phi = \{((c_i^\phi, r_i^\phi), y_i), i=1,2, \ldots, p\}$, where $p$ is the number of granular balls. Matrices $C_1^\phi$ and $C_2^\phi$, along with vectors $R_1^\phi$ and $R_2^\phi$, represent the centers and radii of the $+1$ and $-1$ classes, respectively. Then, the primal problems of non-linear GBTSVM can be expressed as:
\begin{align}
\label{Seq:16}
      & \underset{w_1,b_1}{min} \hspace{0.2cm} \frac{1}{2} \|C_1^\phi w_1 + e_1b_1\|^2 + d_1e_2^T\xi_2 \nonumber \\
     & s.t. \hspace{0.2cm} -(C_2^\phi w_1 + e_2b_1) + \xi_2 \geq e_2+R_2^\phi, \nonumber \\
     & \hspace{0.8cm} \xi_2 \geq 0,
\end{align}
and 
\begin{align}
\label{Seq:17}
      & \underset{w_2,b_2}{min} \hspace{0.2cm} \frac{1}{2} \|C_2^\phi w_2 + e_2b_2\|^2 + d_2e_1^T\xi_1 \nonumber \\
     & s.t. \hspace{0.2cm} (C_1^\phi w_2 + e_1b_2) + \xi_1 \geq e_1+R_1^\phi, \nonumber \\
     & \hspace{0.8cm} \xi_1 \geq 0,
\end{align}
where $d_1, d_2 > 0$ and $\xi_1, \xi_2$ are slack vectors.

Using the K.K.T. conditions, the wolf dual problems of \eqref{Seq:16} and \eqref{Seq:17} are obtained as:
\begin{align}
\label{Seq:18}
     \underset{\alpha}{max} & \hspace{0.2cm} \alpha^T(e_2 + R_2^\phi) - \frac{1}{2} \alpha^T E(F^TF+\delta I)^{-1}E^T \alpha \nonumber \\
     s.t. & \hspace{0.2cm} 0 \leq \alpha \leq d_1e_2,
\end{align}
and
\begin{align}
\label{Seq:19}
     \underset{\gamma}{max} & \hspace{0.2cm} \gamma^T(e_1 + R_1^\phi) - \frac{1}{2} \gamma^T F(E^TE + \delta I)^{-1}F^T \gamma \nonumber \\
     s.t. & \hspace{0.2cm} 0 \leq \gamma \leq d_2e_1,
\end{align}
where $F=[C_1^\phi  \hspace{0.4cm} e_1]$ and $E=[C_2^\phi  \hspace{0.4cm} e_2]$.

Once the optimal values of $u_1=\binom{w_1}{b_1}$ and $u_2=\binom{w_2}{b_2}$ are calculated. The non-linear hyperplanes $C_1^\phi w_1 + b_1 = 0$ and $C_2^\phi w_2 + b_2 = 0$ are generated. The vectors $u_1$ and $u_2$ can be obtained as follows:
\begin{align}
\label{Seq:20}
    & u_1 = -(F^TF+\delta I)^{-1}E^T\alpha  \hspace{0.4cm} \text{and} \nonumber \\ 
    &  u_2 =(E^TE+\delta I)^{-1}F^T\gamma,
\end{align}
were $\delta$ is a positive small value used to handle situations involving singular matrices.

\subsection{Non-linear LS-GBTSVM}
The non-linear LS-GBTSVM comprises the following pair of constrained minimization problems:
\begin{align}
\label{Seq:21}
       \underset{w_1,b_1,\eta_1, \xi_2}{min} & \hspace{0.2cm} \frac{1}{2} d_3(\|w_1\|^2+b_1^2) + \frac{1}{2}\eta_1^T\eta + d_1e_2^T\xi_2 \nonumber \\
       s.t. & \hspace{0.2cm} C_1^\phi w_1+e_1b_1 = \eta_1, \nonumber \\
     & \hspace{0.2cm} -(C_2^\phi w_1 + e_2b_1) + \xi_2 \geq e_2+R_2^\phi, \nonumber \\
     & \hspace{0.2cm} \xi_2 \geq 0,
\end{align}
and
\begin{align}
\label{Seq:22}
       \underset{w_2,b_2,\eta_2,\xi_1}{min} & \hspace{0.2cm} \frac{1}{2}d_4 \|w_2\|^2 + b_2^2) + \frac{1}{2}\eta_2^T\eta_2 + d_2e_1^T\xi_1 \nonumber \\
       s.t. & \hspace{0.2cm} C_2^\phi w_2 + e_2b_2 = \eta_2, \nonumber \\
     &\hspace{0.2cm} (C_1^\phi w_2 + e_1b_2) + \xi_1 \geq e_1+R_1^\phi, \nonumber \\
     & \hspace{0.2cm} \xi_1 \geq 0,
\end{align}
The dual formulation and solutions of the problem \eqref{Seq:21} and \eqref{Seq:22} can be calculated in a similar way as in the linear case.

\renewcommand{\thesection}{S.III}
\section{Discussion of the proposed GBTSVM and LS-GBTSVM models}
In this section, the advantages and limitations of the proposed GBTSVM and LS-GBTSVM are discussed in detail.\\
\noindent \underline{GBTSVM:} The proposed GBTSVM utilizes granular balls as inputs for classifier construction, offering enhanced robustness, resilience to resampling, and computational efficiency compared to the standard SVM and GBSVM. \\The detailed advantages of the proposed GBTSVM model are given as follows:
\begin{enumerate}
     \item The efficacy of the proposed GBTSVM model becomes particularly prominent under conditions of elevated noise levels. This can be attributed to their utilization of granular balls as units instead of individual sample points; this characteristic contributes to the models’ ability to navigate and minimize the effects of noise, emphasizing their robust performance in the face of such challenges.
    \item The efficiency of the GBTSVM is significantly elevated by using the centers of granular balls rather than all the samples of the entire granular ball.
    \item The proposed GBTSVM model effectively captures intricate data patterns and complex relationships through non-linear transformations in the kernel space and elevates its performance.
\end{enumerate}
Limitations of the proposed GBTSVM model are as follows:
\begin{enumerate}
    \item Performing a matrix inversion computation within the Wolf-dual formulation becomes costly when dealing with a large dataset.
    \item GBTSVM does not incorporate the SRM principle in its formulation, which leads to an elevated risk of overfitting. 
\end{enumerate}
\underline{LS-GBTSVM:} The proposed LS-GBTSVM model shares fundamental characteristics with GBTSVM while also offering additional advantages. The distinct benefits of the LS-GBTSVM model are delineated as follows:
\begin{enumerate}
    \item The optimization problem of LS-GBTSVM eliminates the need for matrix inversions, streamlining the LS-GBTSVM’s computational efficiency.
    \item We incorporate the SRM principle through the incorporation of regularization terms, effectively addressing the issue of overfitting.
    \item The LS-GBTSVM model showcases efficiency, scalability for large datasets, and robustness against noise and outliers. It achieves this through the utilization of granular balls as inputs for classifier construction, enhancing resilience to resampling and computational efficiency compared to standard SVM and GBSVM approaches.
\end{enumerate}
Limitations of the proposed LS-GBTSVM model are as follows:
\begin{enumerate}
    \item External package ``CVXOPT" is needed to be employed to solve the dual of the QPPs arising in the LS-GBTSVM model, utilizing the ``qp-solvers" function. 
\end{enumerate}

\renewcommand{\thesection}{S.VI}
\section{Sensitivity Analysis}
We conduct sensitivity analyses on several key hyperparameters of the proposed GBTSVM and LS-GBTSVM models. These analyses encompassed various factors, including the granular ball parameters $num$ and $pur$, explored in subsection \ref{Sensitivity Analysis of Granular ball Parameters}. Additionally, we examined the effects of different levels of label noise in subsection \ref{Sensitivity Analysis of Label Noise}. Furthermore, the impact of hyperparameters $\sigma$ and $pur$ is investigated, detailed in subsection \ref{Sensitivity Analysis of Hyperparameter sigma}. We also assess the influence of hyperparameters $d_1$ and $d_2$, discussed in subsection \ref{Sensitivity Analysis of Hyperparameters d1}. Finally, we analyzed the relationship between the number of granular balls generated and the resulting accuracy (ACC) of the proposed GBTSVM model across varying purities, presented in subsection \ref{Evaluation on Datasets with Purities}.

\subsection{Sensitivity Analysis of GB Parameters $num$ and $pur$}
\label{Sensitivity Analysis of Granular ball Parameters}
In granular ball computing, we denote `$num$' to be the minimum number of granular balls required to be generated from the training dataset. Since we are addressing a binary classification problem, the threshold value for `$num$' is set to be $2$. The purity ($pur$) of a granular ball stands as a pivotal characteristic. By adjusting the purity level of the granular balls, we can refine how data points are spread out in space, effectively capturing their distribution. To scrutinize the influence of `$num$' and `$pur$' on the generalization performance of the proposed GBTSVM and LS-GBTSVM models, we vary `$num$' within the set $\{2, 3, 4, 5\}$, and `$pur$' within the range $\{1, 0.97, 0.94, 0.91, 0.88, 0.85, 0.82, 0.79\}$. Figure \ref{effect of parameter pur and num} provides insightful visualizations illustrating the impact of these granular parameters on the performance of the proposed GBTSVM and LS-GBTSVM models. In the scenario depicted in Figure \ref{effect of parameter pur and num}, a simultaneous increase in both `$pur$' and `$num$' results in a notable elevation of ACC. As the `$pur$' increases, these granular balls undergo further division, leading to an augmented generation of granular balls. This process effectively captures the underlying data patterns, resulting in optimal generalization performance. Therefore, careful selection of model hyperparameters is crucial for achieving optimal performance in the proposed GBTSVM and LS-GBTSVM models.
\renewcommand{\thefigure}{S.2}
\begin{figure*}
\begin{minipage}{.246\linewidth}
\centering
\subfloat[aus (GBTSVM)]{\includegraphics[scale=0.20]{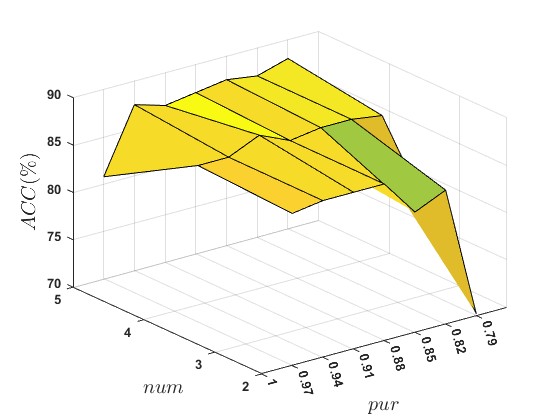}}
\end{minipage}
% \par\medskip
% \par\medskip
\begin{minipage}{.246\linewidth}
\centering
\subfloat[heart-stat (GBTSVM)]{\includegraphics[scale=0.20]{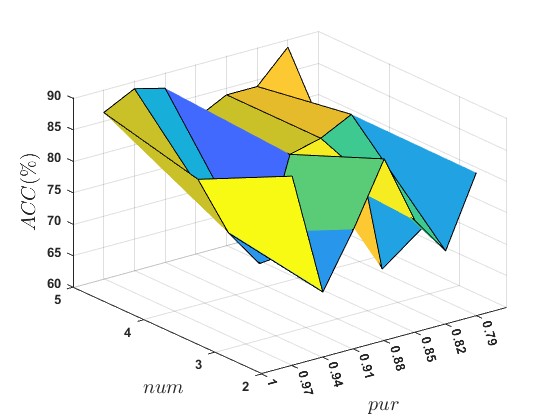}}
\end{minipage}
% \par\medskip
\begin{minipage}{.246\linewidth}
\centering
\subfloat[aus (LS-GBTSVM)]{\includegraphics[scale=0.20]{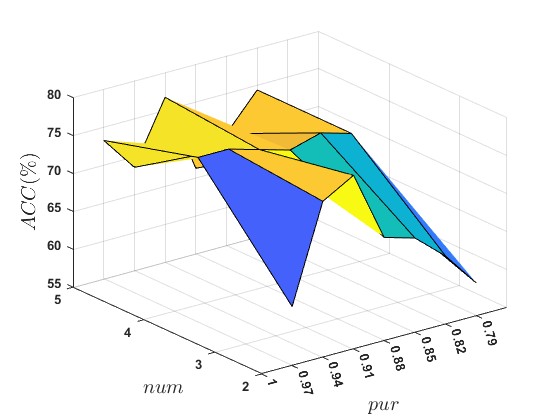}}
\end{minipage}
% \par\medskip
% \par\medskip
\begin{minipage}{.246\linewidth}
\centering
\subfloat[heart-stat (LS-GBTSVM)]{\includegraphics[scale=0.20]{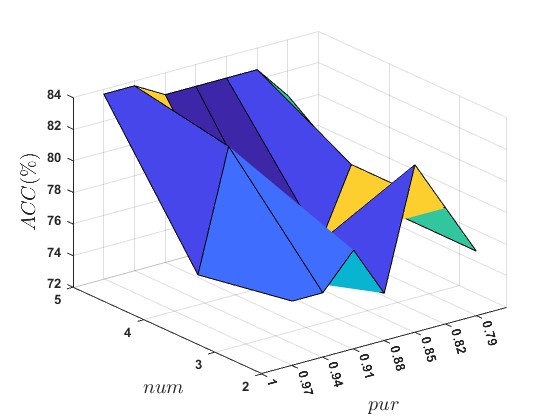}}
\end{minipage}
\caption{Effect of granular ball parameters $pur$ and $num$ on the performance of the proposed GBTSVM model with linear kernel.}
\label{effect of parameter pur and num}
\end{figure*}
\vspace{-0.2cm}
\subsection{Sensitivity Analysis of Label Noise}
\label{Sensitivity Analysis of Label Noise}
One of the focuses of the proposed GBTSVM and LS-GBTSVM models is to reduce the detrimental effect of noise. The resilience of the proposed GBTSVM and LS-GBTSVM models is demonstrated under various levels of noisy labels. Analyzing Figure \ref{The performance of the proposed GBTSVM model with linear kernel with the different labels of noise.}, it is evident that the performance of baseline models fluctuates significantly and declines with variations in noise labels. In contrast, the proposed GBTSVM and LS-GBTSVM models demonstrate consistent and superior performance despite changes in noise levels. This resilience and adaptability to mislabeled instances can be attributed to the coarser granularity inherent in granular balls, effectively mitigating the influence of label noise points within them. The assignment of a label to a granular ball is predominantly influenced by the prevailing label within it, and the presence of label noise points associated with minority labels does not exert a substantial influence on the determination of the granular ball's label. This characteristic contributes to the models' ability to navigate and minimize the effects of label noise, emphasizing their robust performance in the face of such challenges.
\renewcommand{\thefigure}{S.3}
\begin{figure*}
\begin{minipage}{.246\linewidth}
\centering
\subfloat[aus]{\includegraphics[scale=0.20]{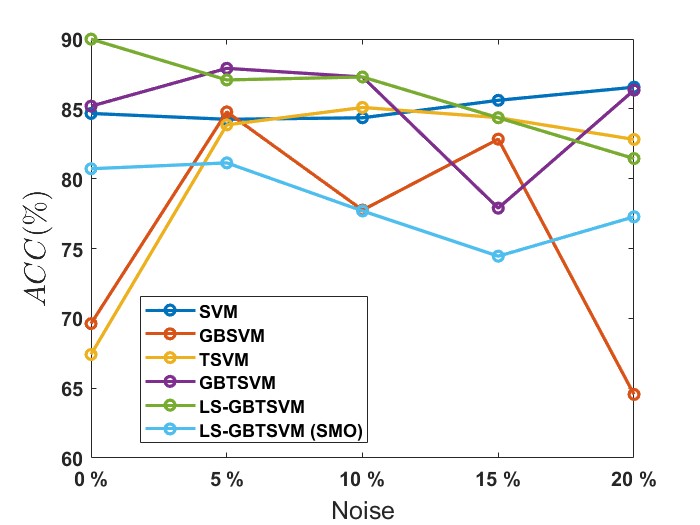}}
\end{minipage}
% \par\medskip
% \par\medskip
\begin{minipage}{.246\linewidth}
\centering
\subfloat[ecoli-0-1-4-6\_vs\_5]{\includegraphics[scale=0.20]{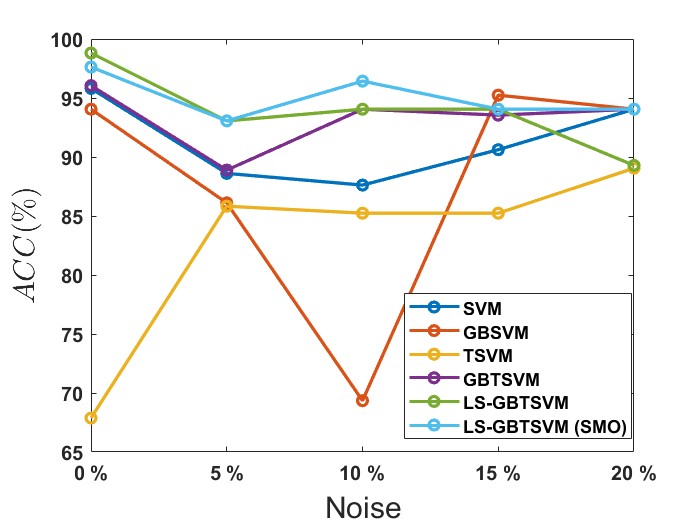}}
\end{minipage}
% \par\medskip
\begin{minipage}{.246\linewidth}
\centering
\subfloat[ozone]{\includegraphics[scale=0.20]{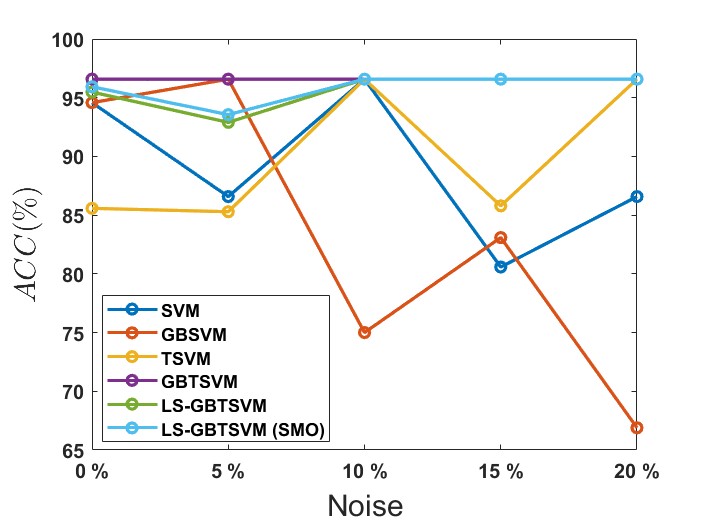}}
\end{minipage}
% \par\medskip
% \par\medskip
\begin{minipage}{.246\linewidth}
\centering
\subfloat[yeast-2\_vs\_4]{\includegraphics[scale=0.20]{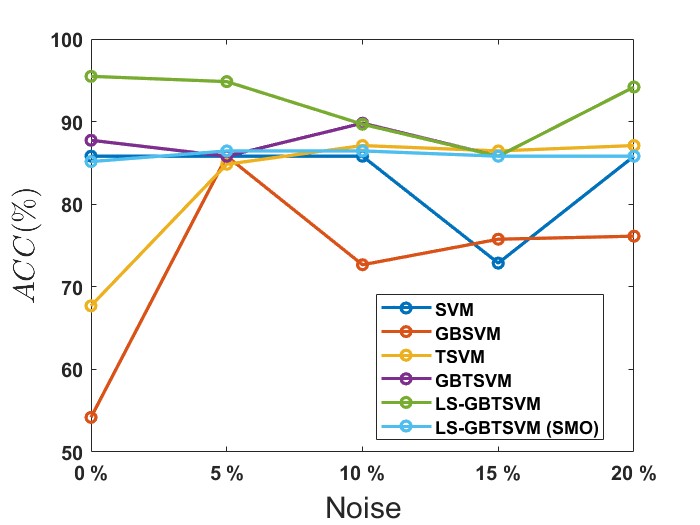}}
\end{minipage}
\caption{Effect of different labels of noise on the performance of the proposed GBTSVM and LS-GBTSVM models with linear kernel.}
\label{The performance of the proposed GBTSVM model with linear kernel with the different labels of noise.}
\end{figure*}
\vspace{-0.2cm}
%%%%%%%%%%%%%%%%%%%%%%%%%
\renewcommand{\thetable}{S.I}
\begin{table*}[htp!]
\centering
    \caption{The number of granular balls and the corresponding accuracies by the proposed GBTSVM model under different purities.}
    \label{The number and results of granular ball generated by proposed GBTSVM model under different purities.}
   \resizebox{0.9\linewidth}{!}{
\begin{tabular}{cccccccc}
\hline
$pur$ & $1$ & $0.97$ & $0.94$ & $0.91$ & $0.88$ & $0.85$ & $0.82$ \\ \hline
{Dataset} & ACC & ACC & ACC & ACC & ACC & ACC & ACC \\
 & $number(GB)$ & $number(GB)$ & $number(GB)$ & $number(GB)$ & $number(GB)$ & $number(GB)$ & $number(GB)$ \\ \hline
checkerboard\_Data & 86.06 & 86.06 & 86.54 & 85.5 & 85.02 & 85.02 & 86.06 \\
 & 76 & 72 & 68 & 52 & 51 & 26 & 15 \\
chess\_krvkp & 85.19 & 82.78 & 80.14 & 85.05 & 84.89 & 84.49 & 84.82 \\
 & 372 & 357 & 332 & 302 & 276 & 241 & 224 \\
mammographic & 83.04 & 81.31 & 79.93 & 82.16 & 82.58 & 74.78 & 82.06 \\
 & 103 & 101 & 85 & 76 & 69 & 12 & 10 \\
spambase & 89.79 & 89.44 & 88.57 & 87.78 & 86.6 & 89.47 & 89.02 \\
 & 184 & 154 & 125 & 87 & 80 & 76 & 54 \\
tic\_tac\_toe & 99.65 & 99.65 & 99.65 & 99.65 & 99.65 & 99.65 & 99.65 \\
 & 108 & 101 & 93 & 91 & 80 & 74 & 70 \\ \hline
 \multicolumn{8}{l}{ACC refers to accuracy, $number(GB)$ indicates the number of granular balls, and $pur$ represents the purity.}
\end{tabular}}
\end{table*}
\vspace{-0.2cm}
%%%%%%%%%%%%%%%%%%%%%%%%%%%%%%%%%%%%
\renewcommand{\thetable}{S.II}
\begin{table*}[htp!]
\centering
    \caption{The number of granular balls and the corresponding accuracies by the proposed GBTSVM model under different purities.}
    \label{The number and results of granular ball generated by proposed LS-GBTSVM model under different purities.}
  \resizebox{0.9\linewidth}{!}{
\begin{tabular}{cccccccc}
\hline
$pur$ & $1$ & $0.97$ & $0.94$ & $0.91$ & $0.88$ & $0.85$ & $0.82$ \\ \hline
{Dataset} & ACC & ACC & ACC & ACC & ACC & ACC & ACC \\
 & $number(GB)$ & $number(GB)$ & $number(GB)$ & $number(GB)$ & $number(GB)$ & $number(GB)$ & $number(GB)$ \\ \hline
checkerboard\_Data & 87.5 & 87.5 & 86.05 & 86.05 & 88.94 & 82.69 & 75.48 \\
 & 76 & 72 & 68 & 52 & 51 & 26 & 15 \\
chess\_krvkp & 91.86 & 91.24 & 90.92 & 92.59 & 91.34 & 91.65 & 89.25 \\
 & 372 & 357 & 332 & 302 & 276 & 241 & 224 \\
mammographic & 80.27 & 79.93 & 76.81 & 80.62 & 78.54 & 74.39 & 75.39 \\
 & 103 & 101 & 85 & 76 & 69 & 12 & 10 \\
spambase & 89.35 & 89.06 & 88.99 & 83.92 & 87.76 & 89.57 & 90.07 \\
 & 184 & 154 & 125 & 87 & 80 & 76 & 54 \\
tic\_tac\_toe & 95.16 & 95.51 & 95.51 & 94.65 & 94.75 & 95.65 & 95.16 \\
 & 108 & 101 & 93 & 91 & 80 & 74 & 70 \\ \hline
   \multicolumn{8}{l}{ACC refers to accuracy, $number(GB)$ indicates the number of granular balls, and $pur$ represents the purity.}
\end{tabular}}
\end{table*}
\subsection{Sensitivity Analysis of Hyperparameter $\sigma$ and $pur$}
\label{Sensitivity Analysis of Hyperparameter sigma}
Here, the performance of the proposed GBTSVM model is evaluated by varying the values of $\sigma$ and $pur$. This thorough exploration enables us to pinpoint the configuration that maximizes predictive ACC and fortifies the model's resilience when confronted with unseen data. Figure \ref{Effect of parameters sigma and pur} illustrates a discernible fluctuation in the model's ACC across a spectrum of $\sigma$ and $pur$ values, underscoring the sensitivity of our model's performance to these specific hyperparameters.

According to the findings presented in Figure \ref{Effect of parameters sigma and pur}, optimal performance of the proposed model is observed within the $\sigma$ ranges of $2^1$ to $2^5$ and $2^{-5}$ to $2^{-3}$. Similarly, Figure \ref{Effect of parameters sigma and pur}d illustrates an increase in testing ACC within the $\sigma$ range spanning from $2^{-3}$ to $2^{3}$. These results suggest that, when considering the parameters $\sigma$ and $pur$, the performance of the model is predominantly influenced by $\sigma$ rather than $pur$. This underscores the significance of kernel space and the effective extraction of nonlinear features in the proposed GBTSVM model. Consequently, it is recommended that careful attention be given to the selection of the hyperparameter $\sigma$ in GBTSVM models to attain superior generalization performance.
\vspace{-0.2cm}
\subsection{Sensitivity Analysis of Hyperparameters $d_1$ and $d_2$}
\label{Sensitivity Analysis of Hyperparameters d1}
We examine the role of the hyperparameters $d_1$ and $d_2$'s impact on the overall predictive capability of the proposed GBTSVM model. Figure \ref{Effect of parameters} shows sensitivity analysis on KEEL and UCI datasets.
The ACC is evaluated by varying the parameters $d_1$ and $d_2$. It can be noticed that as the values of $d_1$ and $d_2$ rise, the ACC also demonstrates an increase. Once a specific threshold is surpassed, the ACC reaches a maximum, signifying that additional increments in $d_1$ and $d_2$ beyond $10^{-2}$ result in diminishing improvements in testing ACC. As a result, it is crucial to meticulously select the hyperparameters for the proposed GBTSVM and LS-GBTSVM models in order to achieve the best possible generalization performance.
%%%%%%%%%%%%%%%%%%%%%%%%%
\renewcommand{\thefigure}{S.4}
\begin{figure*}[ht!]
\begin{minipage}{.246\linewidth}
\centering
\subfloat[crossplane130]{\includegraphics[scale=0.20]{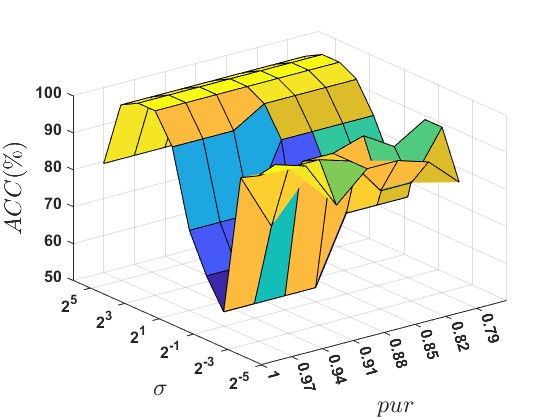}}
\end{minipage}
% \par\medskip
% \par\medskip
\begin{minipage}{.246\linewidth}
\centering
\subfloat[heart-stat]{\includegraphics[scale=0.20]{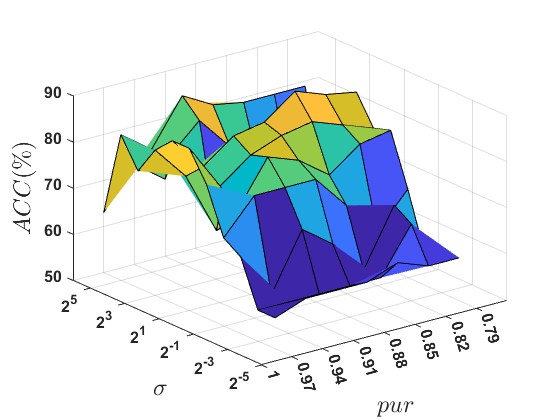}}
\end{minipage}
% \par\medskip
\begin{minipage}{.246\linewidth}
\centering
\subfloat[mammographic]{\includegraphics[scale=0.20]{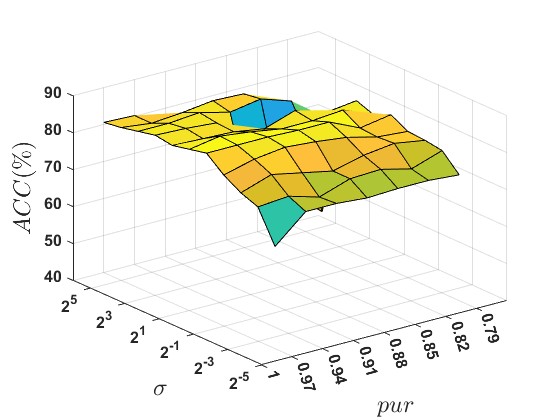}}
\end{minipage}
% \par\medskip
% \par\medskip
\begin{minipage}{.246\linewidth}
\centering
\subfloat[mushroom]{\includegraphics[scale=0.20]{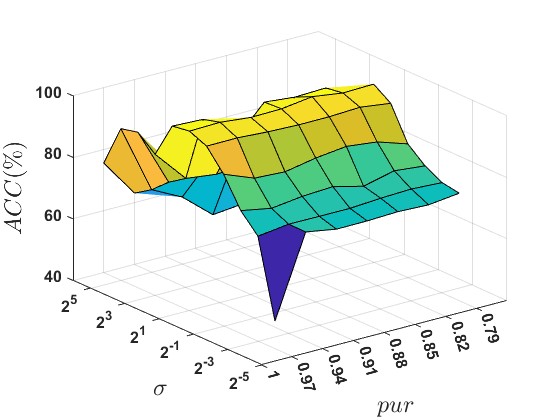}}
\end{minipage}
\caption{Effect of parameters $pur$ and $\sigma$ on the performance of the proposed GBTSVM model with linear kernel.}
\label{Effect of parameters sigma and pur}
\end{figure*}
\renewcommand{\thefigure}{S.5}
\begin{figure*}[ht!]
\begin{minipage}{.246\linewidth}
\centering
\subfloat[checkerboard\_Data (GBTSVM)]{\label{fig:1s1}\includegraphics[scale=0.20]{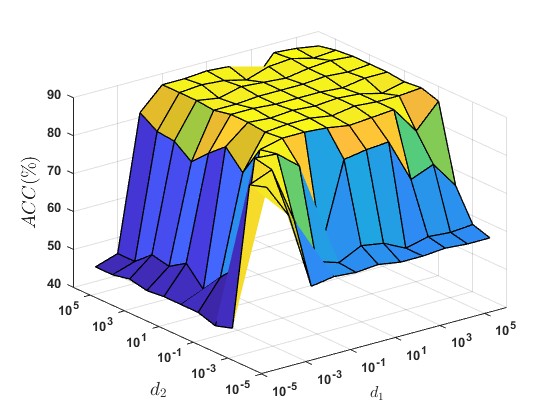}}
\end{minipage}
% \par\medskip
% \par\medskip
\begin{minipage}{.246\linewidth}
\centering
\subfloat[yeast-0-2-5-6\_vs\_3-7-8-9 (GBTSVM)]{\label{fig:1sb}\includegraphics[scale=0.20]{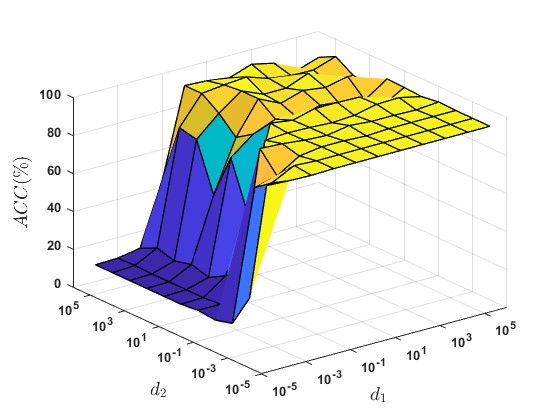}}
\end{minipage}
% \par\medskip
\begin{minipage}{.246\linewidth}
\centering
\subfloat[checkerboard\_Data (LS-GBTSVM)]{\label{fig:1sa}\includegraphics[scale=0.20]{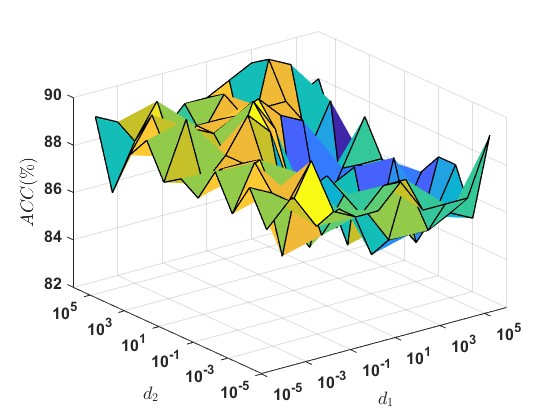}}
\end{minipage}
% \par\medskip
% \par\medskip
\begin{minipage}{.246\linewidth}
\centering
\subfloat[yeast-0-2-5-6\_vs\_3-7-8-9 (LS-GBTSVM)]{\label{fig:1s}\includegraphics[scale=0.20]{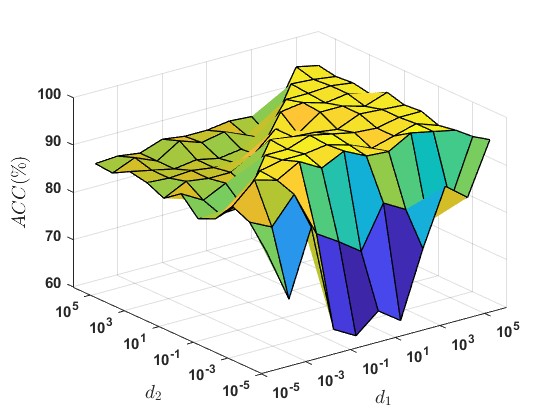}}
\end{minipage}
\caption{Effect of parameters $d_1$ and $d_2$ on the performance of the proposed GBTSVM and LS-GBTSVM models with linear kernel.}
\label{Effect of parameters}
\end{figure*}

\subsection{The Number of Granular Balls Generated and the Resulting ACC of the Proposed GBTSVM and LS-GBTSVM Models with Different Purities}
\label{Evaluation on Datasets with Purities}
In this subsection, we investigate the relationship between the number of granular balls generated and the resulting ACC of the proposed GBTSVM and LS-GBTSVM models across different purities. We select five diverse UCI and KEEL datasets to demonstrate the performance of the proposed GBTSVM and LS-GBTSVM models. Table \ref{The number and results of granular ball generated by proposed GBTSVM model under different purities.} and \ref{The number and results of granular ball generated by proposed LS-GBTSVM model under different purities.} shows the number of granular balls and the corresponding ACC achieved by GBTSVM and LS-GBTSVM models across various purities ranging from $0.82$ to $1.0$. Based on the data presented in Table \ref{The number and results of granular ball generated by proposed GBTSVM model under different purities.} and \ref{The number and results of granular ball generated by proposed LS-GBTSVM model under different purities.}, it is evident that a decrease in purity value correlates with a reduction in the number of granular balls generated. Remarkably, despite these variations in granular ball count, the ACC of the proposed GBTSVM and LS-GBTSVM models consistently remains within a stable range. This observation underscores the model's inherent adaptability, as it intelligently adjusts its granular ball generation in response to changes in purity thresholds without compromising performance. 
Such adaptability highlights the robustness of the proposed GBTSVM and LS-GBTSVM models framework, emphasizing its enhanced versatility and adeptness across varying conditions.
%%%%%%%%%%%%%%%%%%%%%%%%
\vspace{-0.2cm}
\section{Evaluation on UCI and KEEL Datasets}
In this section, the performance of the proposed GBTSVM and LS-GBTSVM models, along with the baseline models, are presented in Table \ref{Classification performance in Linear Case.} and Table \ref{Classification performance in NonLinear Case.} for the linear and non-linear cases, respectively, with and without the label of noise.

\renewcommand{\thetable}{S.III}
\begin{table*}[ht!]
\centering
    \caption{Performance comparison of the proposed GBTSVM and LS-GBTSVM along with the baseline models over for UCI and KEEL datasets with linear kernel.}
    \label{SClassification performance in Linear Case.}
    \resizebox{0.8\linewidth}{!}{
\begin{tabular}{cccccccc}
\hline
Dataset & Noise & \begin{tabular}[c]{@{}c@{}}SVM \cite{cortes1995support}\\ ACC (\%)\\ $(d_1)$\end{tabular} & \begin{tabular}[c]{@{}c@{}}GBSVM (PSO) \cite{xia2022gbsvm} \\ ACC (\%)\\ $(d_1)$\end{tabular} & \begin{tabular}[c]{@{}c@{}}TSVM \cite{khemchandani2007twin} \\ ACC (\%)\\ $(d_1, d_2)$\end{tabular} & \begin{tabular}[c]{@{}c@{}}GBTSVM$^{\dagger}$\\ ACC (\%)\\ $(d_1, d_2)$\end{tabular} & \begin{tabular}[c]{@{}c@{}}LS-GBTSVM$^{\dagger}$\\ ACC (\%)\\ $(d_1, d_3)$\end{tabular} & \begin{tabular}[c]{@{}c@{}}LS-GBTSVM (SOR)$^{\dagger}$\\ ACC (\%)\\ $(d_1, d_3)$\end{tabular} \\ \hline
{\begin{tabular}[c]{@{}c@{}}aus\\ (690 x 15)\end{tabular}} & 0 \% & 88.46 & 81.11 & 64.31 & 89.06 & 90.38 & 90.38 \\
 &  & $(10^{-3})$ & $(10^{4})$ & $(10^{-1}, 10^{-1})$ & $(10^{-5}, 10^{-5})$ & $(10^{-5}, 10^{-5})$ & $(10^{-5}, 10^{-5})$ \\
 & 5 \% & 86.46 & 81.11 & 86.46 & 87.02 & 88.94 & 87.98 \\
 &  & $(10^{-3})$ & $(10^{5})$ & $(10^{-1}, 10^{-1})$ & $(10^{-5}, 10^{-5})$ & $(10^{-3}, 10^{-3})$ & $(10^{-5}, 10^{-3})$ \\
 & 10 \% & 85.47 & 82.94 & 87.5 & 87.98 & 85.58 & 87.02 \\
 &  & $(10^{-2})$ & $(10^{4})$ & $(10^{-1}, 10^{-1})$ & $(10^{-5}, 10^{-5})$ & $(10^{-4}, 10^{-3})$ & $(10^{-5}, 10^{-5})$ \\
 & 15 \% & 89.42 & 86.54 & 86.06 & 86.54 & 86.54 & 87.02 \\
 &  & $(10^{-2})$ & $(10^{-5})$ & $(10^{-1}, 1)$ & $(10^{-5}, 10^{-5})$ & $(10^{-4}, 10^{-3})$ & $(10^{-5}, 10^{-3})$ \\
 & 20 \% & 82.94 & 86.54 & 87.02 & 87.98 & 83.65 & 84.13 \\
 &  & $(10^{-3})$ & $(10^{-5})$ & $(1, 1)$ & $(10^{-5}, 10^{-5})$ & $(10^{4}, 10^{-5})$ & $(10^{-5}, 10^{-3})$ \\ \hline
{\begin{tabular}[c]{@{}c@{}}breast\_cancer\\ (286 x 10)\end{tabular}} & 0 \% & 72.09 & 62.79 & 60 & 73.26 & 67.44 & 73.26 \\
 &  & $(10^{-2})$ & $(10^{-5})$ & $(10^{-2}, 10^{-1})$ & $(10^{-5}, 10^{-5})$ & $(10^{-5}, 10^{-5})$ & $(10^{-5}, 10^{-5})$ \\
 & 5 \% & 70.93 & 80 & 65.12 & 72.09 & 68.14 & 60.47 \\
 &  & $(10^{-2})$ & $(10^{5})$ & $(10, 10^{2})$ & $(10^{-5}, 10^{-5})$ & $(10^{-5}, 10^{-5})$ & $(10^{-5}, 10^{-5})$ \\
 & 10 \% & 68.6 & 74.42 & 62.09 & 66.28 & 68.84 & 70.93 \\
 &  & $(10^{-2})$ & $(10^{2})$ & $(10^{-5}, 10^{-5})$ & $(10^{-5}, 10^{-5})$ & $(10^{-1}, 10^{-5})$ & $(10^{-5}, 10^{-5})$ \\
 & 15 \% & 68.6 & 54.65 & 75.58 & 74.42 & 74.42 & 74.42 \\
 &  & $(10^{-2})$ & $(10^{-5})$ & $(10^{-5}, 10^{-5})$ & $(10^{-5}, 10^{-5})$ & $(10^{-5}, 10^{-5})$ & $(10^{-5}, 10^{-5})$ \\
 & 20 \% & 65.12 & 59.3 & 74.42 & 70.93 & 67.44 & 60.47 \\
 &  & $(10^{-2})$ & $(10^{3})$ & $(10, 1)$ & $(10^{-5}, 10^{-5})$ & $(10^{-5}, 10^{-5})$ & $(10^{-5}, 10^{-4})$ \\ \hline
{\begin{tabular}[c]{@{}c@{}}checkerboard\_Data\\ (690 x 15)\end{tabular}} & 0 \% & 85.46 & 71.06 & 64.31 & 86.06 & 90.38 & 90.38 \\
 &  & $(10^{-3})$ & $(10^{4})$ & $(10^{-1}, 10^{-1})$ & $(10^{-2}, 10^{-2})$ & $(10^{-5}, 10^{-5})$ & $(10^{-5}, 10^{-5})$ \\
 & 5 \% & 78.46 & 81.11 & 88.46 & 87.02 & 88.94 & 87.98 \\
 &  & $(10^{-3})$ & $(10^{5})$ & $(10^{-1}, 10^{-1})$ & $(10^{-5}, 10^{-5})$ & $(10^{-3}, 10^{-3})$ & $(10^{-5}, 10^{-3})$ \\
 & 10 \% & 85.57 & 83.94 & 87.5 & 87.98 & 85.58 & 87.02 \\
 &  & $(10^{-2})$ & $(10^{4})$ & $(10^{-1}, 10^{-1})$ & $(10^{-5}, 10^{-5})$ & $(10^{-4}, 10^{-3})$ & $(10^{-5}, 10^{-5})$ \\
 & 15 \% & 89.42 & 83.94 & 86.06 & 86.54 & 86.54 & 87.02 \\
 &  & $(10^{-2})$ & $(10^{-5})$ & $(10^{-1}, 1)$ & $(10^{-5}, 10^{-5})$ & $(10^{-4}, 10^{-3})$ & $(10^{-5}, 10^{-3})$ \\
 & 20 \% & 88.94 & 87.5 & 87.02 & 87.98 & 83.65 & 84.13 \\
 &  & $(10^{-3})$ & $(10^{-5})$ & $(1, 1)$ & $(10^{-5}, 10^{-5})$ & $(10^{4}, 10^{-5})$ & $(10^{-5}, 10^{-3})$ \\ \hline
 {\begin{tabular}[c]{@{}c@{}}chess\_krvkp\\ (3196 x 37)\end{tabular}} & 0 \% & 84.67 & 69.62 & 67.41 & 85.19 & 89.99 & 80.71 \\
 &  & $(10^{-2})$ & $(10^{4})$ & $(10^{2}, 1)$ & $(1, 10^{-1})$ & $(10^{-4}, 10^{-3})$ & $(10^{-5}, 10^{-3})$ \\
 & 5 \% & 84.25 & 84.78 & 83.85 & 87.9 & 87.07 & 81.13 \\
 &  & $(10^{4})$ & $(10^{-3})$ & $(10, 10)$ & $(1, 10)$ & $(10^{-2}, 10^{-3})$ & $(10^{-5}, 10^{-3})$ \\
 & 10 \% & 84.36 & 77.75 & 85.1 & 87.28 & 87.28 & 77.69 \\
 &  & $(10^{3})$ & $(10^{5})$ & $(10^{-1}, 10^{-1})$ & $(1, 1)$ & $(10^{3}, 10^{-4})$ & $(10^{-5}, 10^{-4})$ \\
 & 15 \% & 85.61 & 82.83 & 84.37 & 77.89 & 84.36 & 74.45 \\
 &  & $(10^{-1})$ & $(10^{4})$ & $(10^{-3}, 10^{-3})$ & $(1, 10^{-1})$ & $(1, 10^{-5})$ & $(10^{-5}, 10^{-4})$ \\
 & 20 \% & 86.55 & 64.55 & 82.81 & 86.34 & 81.44 & 77.27 \\
 &  & $(10^{-3})$ & $(10^{5})$ & $(10^{-5}, 10^{-5})$ & $(10, 1)$ & $(10^{4}, 10^{-5})$ & $(10^{-5}, 10^{-3})$ \\ \hline
 {\begin{tabular}[c]{@{}c@{}}crossplane130\\ (130 x 3)\end{tabular}} & 0 \% & 97.24 & 100 & 71.35 & 97.44 & 100 & 82.05 \\
 &  & $(10^{-2})$ & $(1)$ & $(10^{-5}, 10^{-5})$ & $(10^{-4}, 10^{-5})$ & $(10^{-3}, 10^{-3})$ & $(10^{-5}, 10^{-5})$ \\
 & 5 \% & 97.44 & 97.44 & 100 & 100 & 100 & 97.44 \\
 &  & $(10^{-1})$ & $(10^{-5})$ & $(10^{-4}, 10^{-5})$ & $(10^{-4}, 10^{-5})$ & $(10^{-3}, 10^{-3})$ & $(10^{-5}, 10^{-5})$ \\
 & 10 \% & 100 & 97.44 & 100 & 100 & 94.87 & 94.87 \\
 &  & $(10^{-2})$ & $(10^{-5})$ & $(10^{-5}, 10^{-5})$ & $(10^{-4}, 10^{-5})$ & $(10^{-3}, 10^{-5})$ & $(10^{-5}, 10^{-4})$ \\
 & 15 \% & 100 & 100 & 100 & 100 & 97.44 & 89.74 \\
 &  & $(10^{-2})$ & $(10)$ & $(10^{-5}, 10^{-5})$ & $(10^{-5}, 10^{-5})$ & $(10^{-3}, 10^{-3})$ & $(10^{-5}, 10^{-4})$ \\
 & 20 \% & 97.44 & 97.44 & 100 & 100 & 97.44 & 97.44 \\
 &  & $(10^{-2})$ & $(10^{-5})$ & $(10, 10^{3})$ & $(10^{-4}, 10^{-5})$ & $(10^{-3}, 10^{-5})$ & $(10^{-5}, 10^{-5})$ \\ \hline
{\begin{tabular}[c]{@{}c@{}}ecoli-0-1\_vs\_2-3-5\\ (244 x 8)\end{tabular}} & 0 \% & 81.89 & 77.3 & 68.24 & 85.14 & 93.24 & 89.19 \\
 &  & $(10^{-2})$ & $(10^{-5})$ & $(1, 1)$ & $(10^{-4}, 10^{-5})$ & $(10^{-3}, 10^{-3})$ & $(10^{-5}, 10^{-5})$ \\
 & 5 \% & 87.84 & 57.3 & 90.46 & 90.54 & 59.46 & 89.19 \\
 &  & $(10^{-2})$ & $(10)$ & $(1, 1)$ & $(10^{-4}, 10^{-5})$ & $(10^{-5}, 10^{-5})$ & $(10^{-5}, 10^{-5})$ \\
 & 10 \% & 86.49 & 81.08 & 83.24 & 89.19 & 93.24 & 85.14 \\
 &  & $(10^{-2})$ & $(10)$ & $(1, 1)$ & $(10^{-5}, 10^{-5})$ & $(10^{-3}, 10^{-3})$ & $(10^{-5}, 10^{-5})$ \\
 & 15 \% & 82.43 & 79.73 & 81.89 & 94.05 & 79.73 & 89.19 \\
 &  & $(10^{-2})$ & $(10)$ & $(1, 1)$ & $(10^{-5}, 10^{-5})$ & $(10^{-5}, 10^{-5})$ & $(10^{-5}, 10^{-5})$ \\
 & 20 \% & 87.84 & 55.41 & 81.89 & 89.19 & 83.78 & 93.24 \\
 &  & $(10^{-2})$ & $(10^{-5})$ & $(10^{-5}, 10^{-5})$ & $(10^{-5}, 10^{-5})$ & $(10^{-5}, 10^{-3})$ & $(10^{-5}, 10^{-5})$ \\ \hline
{\begin{tabular}[c]{@{}c@{}}ecoli-0-1\_vs\_5\\ (240 x7)\end{tabular}} & 0 \% & 88.61 & 77.5 & 66.68 & 88.89 & 94.44 & 81.94 \\
 &  & $(10^{-2})$ & $(10^{3})$ & $(10^{-1}, 1)$ & $(10^{-4}, 10^{-5})$ & $(10^{-3}, 10^{-3})$ & $(10^{-5}, 10^{-4})$ \\
 & 5 \% & 88.61 & 86.11 & 85.83 & 88.89 & 93.06 & 93.06 \\
 &  & $(10^{-2})$ & $(10)$ & $(10^{-1}, 1)$ & $(10^{-3}, 10^{-2})$ & $(10^{-4}, 10^{-3})$ & $(10^{-5}, 10^{-4})$ \\
 & 10 \% & 87.22 & 96.43 & 84.44 & 88.89 & 88.89 & 88.89 \\
 &  & $(10^{-2})$ & $(10)$ & $(10^{-1}, 1)$ & $(10^{-5}, 10^{-5})$ & $(10^{-5}, 10^{-5})$ & $(10^{-5}, 10^{-5})$ \\
 & 15 \% & 87.22 & 79.17 & 86.11 & 91.89 & 88.89 & 88.89 \\
 &  & $(10^{-2})$ & $(10)$ & $(10^{-3}, 10^{-3})$ & $(10^{-2}, 10^{-1})$ & $(10^{-5}, 10^{-5})$ & $(10^{-5}, 10^{-5})$ \\
 & 20 \% & 85.83 & 88.89 & 88.89 & 88.89 & 88.89 & 88.89 \\
 &  & $(10^{-2})$ & $(10^{4})$ & $(10^{-4}, 10^{-5})$ & $(10^{-5}, 10^{-5})$ & $(10^{-3}, 10^{-3})$ & $(10^{-5}, 10^{-5})$ \\ \hline 
 \multicolumn{8}{l}{$^{\dagger}$ represents the proposed models. ACC represents the accuracy metric.} 
 \end{tabular}}
\end{table*}

\begin{table*}[htp]
\ContinuedFloat
\centering
    \caption{(Continued)}
    \resizebox{0.8\textwidth}{!}{                                                                      
    \begin{tabular}{cccccccc}
\hline
Dataset  & Noise & \begin{tabular}[c]{@{}c@{}}SVM \cite{cortes1995support} \\ ACC (\%)\\ ($d_1$)\end{tabular} & \begin{tabular}[c]{@{}c@{}}GBSVM (PSO) \cite{xia2022gbsvm} \\ ACC (\%)\\ ($d_1$)\end{tabular} & \begin{tabular}[c]{@{}c@{}}TSVM \cite{khemchandani2007twin} \\ ACC (\%)\\ $(d_1, d_2)$\end{tabular} & \begin{tabular}[c]{@{}c@{}}GBTSVM$^{\dagger}$\\ ACC (\%)\\ $(d_1, d_2)$\end{tabular} & \begin{tabular}[c]{@{}c@{}}LS-GBTSVM$^{\dagger}$\\ ACC (\%)\\ $(d_1, d_3)$\end{tabular} & \begin{tabular}[c]{@{}c@{}}LS-GBTSVM (SOR)$^{\dagger}$\\ ACC (\%)\\ $(d_1, d_3)$\end{tabular} \\ \hline
 {\begin{tabular}[c]{@{}c@{}}ecoli-0-1-4-6\_vs\_5\\ (280 x 7)\end{tabular}} & 0 \% & 95.81 & 94.05 & 67.88 & 96.05 & 98.81 & 97.62 \\
 &  & $(10^{-2})$ & $(10)$ & $(1, 1)$ & $(10^{-4}, 10^{-5})$ & $(10^{-1}, 10^{-2})$ & $(10^{-5}, 10^{-5})$ \\
 & 5 \% & 97.62 & 94.05 & 88.81 & 89.29 & 79.76 & 96.43 \\
 &  & $(10^{-2})$ & $(10)$ & $(10, 1)$ & $(10^{-2}, 10^{-2})$ & $(10^{-5}, 10^{-2})$ & $(10^{-5}, 10^{-5})$ \\
 & 10 \% & 87.62 & 69.35 & 85.24 & 94.05 & 94.05 & 96.43 \\
 &  & $(10^{-2})$ & $(10)$ & $(10^{-5}, 10^{-5})$ & $(10^{-5}, 10^{-5})$ & $(10^{-5}, 10^{-5})$ & $(10^{-5}, 10^{-5})$ \\
 & 15 \% & 90.62 & 95.24 & 85.24 & 93.55 & 94.05 & 94.05 \\
 &  & $(10^{-2})$ & $(10)$ & $(10^{-1}, 10^{-1})$ & $(10^{-5}, 10^{-5})$ & $(10^{-5}, 10^{-5})$ & $(10^{-5}, 10^{-5})$ \\
 & 20 \% & 94.05 & 94.05 & 89.05 & 94.05 & 89.29 & 94.05 \\
 &  & $(10^{-5})$ & $(10^{4})$ & $(1, 1)$ & $(10^{-5}, 10^{-5})$ & $(10^{-5}, 10^{-5})$ & $(10^{-5}, 10^{-5})$ \\ \hline
{\begin{tabular}[c]{@{}c@{}}ecoli-0-1-4-7\_vs\_2-3-5-6\\ (336 x 8)\end{tabular}} & 0 \% & 85.05 & 52.48 & 68.51 & 87.13 & 74.26 & 88.12 \\
 &  & $(10^{-2})$ & $(10^{5})$ & $(10^{-1}, 1)$ & $(10^{-5}, 10^{-5})$ & $(10, 10^{-5})$ & $(10^{-5}, 10^{-5})$ \\
 & 5 \% & 84.06 & 73.27 & 87.13 & 87.13 & 87.13 & 87.13 \\
 &  & $(10^{-2})$ & $(10)$ & $(10^{-1}, 10^{-2})$ & $(10^{-1}, 10^{-1})$ & $(10^{-5}, 10^{-5})$ & $(10^{-5}, 10^{-5})$ \\
 & 10 \% & 94.06 & 64.36 & 91.09 & 87.13 & 96.04 & 87.13 \\
 &  & $(10^{-2})$ & $(10)$ & $(10^{-5}, 10^{-5})$ & $(10^{-5}, 10^{-5})$ & $(10^{-5}, 10^{-5})$ & $(10^{-5}, 10^{-5})$ \\
 & 15 \% & 85.05 & 84.16 & 74.55 & 87.13 & 87.13 & 87.13 \\
 &  & $(10^{-2})$ & $(10)$ & $(10^{5}, 10^{2})$ & $(10^{-5}, 10^{-5})$ & $(10^{-5}, 10^{-5})$ & $(10^{-5}, 10^{-5})$ \\
 & 20 \% & 87.13 & 74.26 & 81.09 & 87.13 & 87.13 & 95.05 \\
 &  & $(10^{-5})$ & $(10)$ & $(10^{2}, 10^{5})$ & $(10^{-5}, 10^{-5})$ & $(10^{-5}, 10^{-5})$ & $(10^{-5}, 10^{-5})$ \\ \hline
{\begin{tabular}[c]{@{}c@{}}ecoli-0-1-4-7\_vs\_5-6\\ (332 x 7)\end{tabular}} & 0 \% & 87 & 76 & 67.32 & 91 & 95 & 94 \\
 &  & $(10^{-2})$ & $(10^{4})$ & $(10, 10)$ & $(10^{-1}, 10^{-1})$ & $(10^{-2}, 10^{-2})$ & $(10^{-5}, 10^{-5})$ \\
 & 5 \% & 84 & 85 & 89 & 93 & 96 & 94 \\
 &  & $(10^{-2})$ & $(10)$ & $(10, 10)$ & $(10^{-1}, 10^{-1})$ & $(10^{-4}, 10^{-4})$ & $(10^{-5}, 10^{-5})$ \\
 & 10 \% & 84 & 92 & 95 & 94 & 95 & 91 \\
 &  & $(10^{-2})$ & $(10)$ & $(1, 1)$ & $(10^{-5}, 10^{-5})$ & $(10^{-3}, 10^{-3})$ & $(10^{-5}, 10^{-5})$ \\
 & 15 \% & 84 & 94 & 84 & 87 & 83 & 83 \\
 &  & $(10^{-2})$ & $(10^{2})$ & $(1, 1)$ & $(10^{-5}, 10^{-4})$ & $(10^{-5}, 10^{-5})$ & $(10^{-5}, 10^{-3})$ \\
 & 20 \% & 91 & 91 & 93 & 94 & 91 & 94 \\
 &  & $(10^{-2})$ & $(10^{2})$ & $(10^{-5}, 10^{-5})$ & $(10^{-2}, 10^{-1})$ & $(10^{-5}, 10^{-3})$ & $(10^{-5}, 10^{-5})$ \\ \hline
 {\begin{tabular}[c]{@{}c@{}}haber\\ (306 x 4)\end{tabular}} & 0 \% & 77.17 & 77.17 & 57.96 & 82.61 & 81.52 & 79.35 \\
 &  & $(10^{-2})$ & $(10)$ & $(10^{-5}, 10^{-5})$ & $(10^{-5}, 10^{-5})$ & $(10^{-5}, 10^{-5})$ & $(10^{-5}, 10^{-5})$ \\
 & 5 \% & 76.09 & 77.17 & 75 & 82.61 & 80.43 & 80.43 \\
 &  & $(10^{-2})$ & $(10)$ & $(10^{3}, 10)$ & $(10^{-5}, 10^{-5})$ & $(10^{-5}, 10^{-5})$ & $(10^{-5}, 10^{-5})$ \\
 & 10 \% & 77.17 & 58.04 & 78.26 & 82.61 & 66.3 & 79.35 \\
 &  & $(10^{-2})$ & $(10^{2})$ & $(10^{3}, 10)$ & $(10^{-5}, 10^{-5})$ & $(10^{-3}, 10^{-5})$ & $(10^{-5}, 10^{-5})$ \\
 & 15 \% & 78.26 & 59.78 & 78.26 & 82.61 & 82.61 & 82.61 \\
 &  & $(10^{-2})$ & $(10^{-5})$ & $(10^{-5}, 10^{-5})$ & $(10^{-5}, 10^{-5})$ & $(10^{-5}, 10^{-5})$ & $(10^{-5}, 10^{-5})$ \\
 & 20 \% & 76.09 & 68.48 & 73.91 & 80.43 & 82.61 & 78.26 \\
 &  & $(10^{-2})$ & $(10^{-5})$ & $(10^{-5}, 10^{-5})$ & $(10^{-5}, 10^{-5})$ & $(10^{-4}, 10^{-3})$ & $(10^{-5}, 10^{-5})$ \\ \hline
 {\begin{tabular}[c]{@{}c@{}}haberman\\ (306 x 4)\end{tabular}} & 0 \% & 77.17 & 77.17 & 57.96 & 82.61 & 81.52 & 79.35 \\
 &  & $(10^{-2})$ & $(10)$ & $(10^{-5}, 10^{-5})$ & $(10^{-5}, 10^{-5})$ & $(10^{-5}, 10^{-5})$ & $(10^{-5}, 10^{-5})$ \\
 & 5 \% & 76.09 & 77.17 & 75 & 82.61 & 80.43 & 80.43 \\
 &  & $(10^{-3})$ & $(10)$ & $(10, 10^{3})$ & $(10^{-5}, 10^{-5})$ & $(10^{-5}, 10^{-5})$ & $(10^{-5}, 10^{-5})$ \\
 & 10 \% & 77.17 & 58.04 & 78.26 & 82.61 & 66.3 & 79.35 \\
 &  & $(10^{-2})$ & $(10^{2})$ & $(10, 10^{3})$ & $(10^{-5}, 10^{-5})$ & $(10^{-3}, 10^{-5})$ & $(10^{-5}, 10^{-5})$ \\
 & 15 \% & 78.26 & 59.78 & 78.26 & 82.61 & 82.61 & 82.61 \\
 &  & $(10^{-2})$ & $(10^{-5})$ & $(10^{-5}, 10^{-5})$ & $(10^{-5}, 10^{-5})$ & $(10^{-5}, 10^{-5})$ & $(10^{-5}, 10^{-5})$ \\
 & 20 \% & 76.09 & 68.48 & 73.91 & 80.43 & 82.61 & 78.26 \\
 &  & $(10^{-2})$ & $(10^{-5})$ & $(10, 10^{3})$ & $(10^{-5}, 10^{-5})$ & $(10^{-4}, 10^{-3})$ & $(10^{-5}, 10^{-5})$ \\ \hline
 {\begin{tabular}[c]{@{}c@{}}haberman\_survival\\ (306 x 4)\end{tabular}} & 0 \% & 77.17 & 78.26 & 57.96 & 82.61 & 80.43 & 81.52 \\
 &  & $(10^{-2})$ & $(1)$ & $(10^{-5}, 10^{-5})$ & $(10^{-5}, 10^{-5})$ & $(10^{-5}, 10^{-5})$ & $(10^{-5}, 10^{-5})$ \\
 & 5 \% & 76.09 & 78.26 & 75 & 82.61 & 78.26 & 79.35 \\
 &  & $(10^{-2})$ & $(10^{-5})$ & $(10, 10^{3})$ & $(10^{-5}, 10^{-5})$ & $(10^{-5}, 10^{-5})$ & $(10^{-5}, 10^{-5})$ \\
 & 10 \% & 77.17 & 75 & 78.26 & 79.35 & 79.35 & 79.35 \\
 &  & $(10^{-2})$ & $(10)$ & $(10^{3}, 10)$ & $(10^{-5}, 10^{-5})$ & $(10^{-5}, 10^{-5})$ & $(10^{-5}, 10^{-5})$ \\
 & 15 \% & 78.26 & 59.78 & 78.26 & 82.61 & 82.61 & 82.61 \\
 &  & $(10^{-2})$ & $(10^{-5})$ & $(10^{-5}, 10^{-5})$ & $(10^{-5}, 10^{-5})$ & $(10^{-5}, 10^{-5})$ & $(10^{-5}, 10^{-5})$ \\
 & 20 \% & 76.09 & 59.78 & 73.91 & 81.52 & 81.52 & 80.43 \\
 &  & $(10^{-2})$ & $(10^{-5})$ & $(10^{3}, 10)$ & $(10^{-5}, 10^{-4})$ & $(10^{-5}, 10^{-5})$ & $(10^{-5}, 10^{-5})$ \\ \hline
{\begin{tabular}[c]{@{}c@{}}heart-stat\\ (270 x 14)\end{tabular}} & 0 \% & 90.12 & 85.93 & 58.69 & 90.12 & 90.12 & 87.65 \\
 &  & $(10^{-1})$ & $(10^{5})$ & $(1, 10)$ & $(10^{-5}, 10^{-5})$ & $(10^{-4}, 10^{-4})$ & $(10^{-5}, 10^{-4})$ \\
 & 5 \% & 88.89 & 82.59 & 80.12 & 87.65 & 88.89 & 88.89 \\
 &  & $(10^{-1})$ & $(10^{5})$ & $(1, 1)$ & $(10^{-2}, 10^{-2})$ & $(10^{-4}, 10^{-3})$ & $(10^{-5}, 10^{-5})$ \\
 & 10 \% & 78.89 & 63.58 & 70.12 & 62.96 & 58.02 & 85.19 \\
 &  & $(10^{-2})$ & $(10^{5})$ & $(1, 1)$ & $(10^{-1}, 10^{-2})$ & $(10^{-2}, 10^{-5})$ & $(10^{-5}, 10^{-5})$ \\
 & 15 \% & 82.72 & 74.44 & 76.12 & 77.78 & 76.54 & 65.43 \\
 &  & $(10^{-3})$ & $(10^{5})$ & $(10^{-1}, 10^{-1})$ & $(10^{-5}, 10^{-5})$ & $(10^{-3}, 10^{-3})$ & $(10^{-5}, 10^{-4})$ \\
 & 20 \% & 87.65 & 83.58 & 77.78 & 86.42 & 87.65 & 84.32 \\
 &  & $(10^{-2})$ & $(10^{4})$ & $(10, 10^{3})$ & $(10^{-1}, 10^{-2})$ & $(10^{-4}, 10^{-5})$ & $(10^{-5}, 10^{-5})$ \\ \hline 
{\begin{tabular}[c]{@{}c@{}}led7digit-0-2-4-5-6-7-8-9\_vs\_1\\ (443 x 8)\end{tabular}} & 0 \% & 92.23 & 78.35 & 66.77 & 93.23 & 94.74 & 93.98 \\
 &  & $(10^{-5})$ & $(10^{5})$ & $(1, 1)$ & $(10^{-4}, 10^{-5})$ & $(10^{-2}, 10^{-3})$ & $(10^{-5}, 10^{-5})$ \\
 & 5 \% & 83.23 & 78.35 & 83.98 & 78.2 & 79.7 & 94.74 \\
 &  & $(10^{-5})$ & $(10^{5})$ & $(1, 1)$ & $(10^{-2}, 10^{-2})$ & $(10^{-3}, 10^{-4})$ & $(10^{-5}, 10^{-5})$ \\
 & 10 \% & 83.23 & 90.98 & 84.74 & 84.96 & 91.73 & 91.73 \\
 &  & $(10^{-5})$ & $(10)$ & $(10^{-5}, 10^{-5})$ & $(10^{-4}, 10^{-5})$ & $(10^{-5}, 10^{-3})$ & $(10^{-5}, 10^{-5})$ \\
 & 15 \% & 83.23 & 68.42 & 84.74 & 87.97 & 78.95 & 87.97 \\
 &  & $(10^{-5})$ & $(10^{2})$ & $(10^{3}, 10^{2})$ & $(10^{-5}, 10^{-5})$ & $(10^{-5}, 10^{-5})$ & $(10^{-5}, 10^{-4})$ \\
 & 20 \% & 93.23 & 80.41 & 83.74 & 81.2 & 93.23 & 93.23 \\
 &  & $(10^{-5})$ & $(10^{5})$ & $(10^{3}, 10^{2})$ & $(10^{-5}, 10^{-5})$ & $(10^{-5}, 10^{-5})$ & $(10^{-5}, 10^{-5})$ \\ \hline
 \multicolumn{8}{l}{$^{\dagger}$ represents the proposed models. ACC represents the accuracy metric.}
\end{tabular}}
\end{table*}

\begin{table*}[htp]
\ContinuedFloat
\centering
    \caption{(Continued)}
    \resizebox{0.8\textwidth}{!}{                                                                      
    \begin{tabular}{cccccccc}
\hline
Dataset  & Noise & \begin{tabular}[c]{@{}c@{}}SVM \cite{cortes1995support} \\ ACC (\%)\\ ($d_1$)\end{tabular} & \begin{tabular}[c]{@{}c@{}}GBSVM (PSO) \cite{xia2022gbsvm} \\ ACC (\%)\\ ($d_1$)\end{tabular} & \begin{tabular}[c]{@{}c@{}}TSVM \cite{khemchandani2007twin} \\ ACC (\%)\\ $(d_1, d_2)$\end{tabular} & \begin{tabular}[c]{@{}c@{}}GBTSVM$^{\dagger}$\\ ACC (\%)\\ $(d_1, d_2)$\end{tabular} & \begin{tabular}[c]{@{}c@{}}LS-GBTSVM$^{\dagger}$\\ ACC (\%)\\ $(d_1, d_3)$\end{tabular} & \begin{tabular}[c]{@{}c@{}}LS-GBTSVM$^{\dagger}$ (SOR)\\ ACC (\%)\\ $(d_1, d_3)$\end{tabular} \\ \hline
 {\begin{tabular}[c]{@{}c@{}}mammographic\\ (961 x 6)\end{tabular}} & 0 \% & 79.58 & 80.28 & 60.41 & 83.04 & 80.97 & 79.93 \\
 &  & $(10^{-3})$ & $(10^{-5})$ & $(10^{-5}, 10^{-5})$ & $(10^{-2}, 10^{-2})$ & $(10^{-5}, 10^{-5})$ & $(10^{-5}, 10^{-4})$ \\
 & 5 \% & 79.7 & 80.28 & 64.01 & 79.93 & 80.62 & 79.93 \\
 &  & $(10^{-3})$ & $(10^{-5})$ & $(10^{4}, 10)$ & $(10^{-2}, 10^{-2})$ & $(10^{-4}, 10^{-4})$ & $(10^{-5}, 10^{-5})$ \\
 & 10 \% & 80.28 & 79.93 & 55.02 & 77.85 & 79.93 & 80.28 \\
 &  & $(10^{-3})$ & $(10^{-5})$ & $(10, 10^{2})$ & $(10^{-5}, 10^{-5})$ & $(10^{-4}, 10^{-3})$ & $(10^{-5}, 10^{-5})$ \\
 & 15 \% & 79.93 & 80.62 & 81.31 & 80.62 & 79.58 & 79.93 \\
 &  & $(10^{-3})$ & $(10^{-5})$ & $(1, 1)$ & $(10^{-2}, 10^{-2})$ & $(10^{-5}, 10^{-5})$ & $(10^{-5}, 10^{-5})$ \\
 & 20 \% & 79.58 & 79.58 & 82.01 & 78.55 & 78.89 & 80.28 \\
 &  & $(10^{-3})$ & $(10^{5})$ & $(1, 1)$ & $(10^{-5}, 10^{-5})$ & $(10^{-4}, 10^{-5})$ & $(10^{-5}, 10^{-5})$ \\ \hline
{\begin{tabular}[c]{@{}c@{}}monks\_3\\ (554 x 7)\end{tabular}} & 0 \% & 75.45 & 59.88 & 59.7 & 78.44 & 74.85 & 70.66 \\
 &  & $(10^{-1})$ & $(10^{5})$ & $(1, 1)$ & $(1, 10^{-1})$ & $(10, 10^{-3})$ & $(10^{-5}, 10^{-3})$ \\
 & 5 \% & 73.65 & 59.88 & 77.25 & 77.25 & 77.25 & 67.07 \\
 &  & $(10^{-1})$ & $(1)$ & $(10, 10^{4})$ & $(1, 10^{-2})$ & $(10, 10^{-4})$ & $(10^{-5}, 10^{-5})$ \\
 & 10 \% & 73.05 & 70.66 & 76.65 & 78.44 & 70.06 & 72.28 \\
 &  & $(10^{-1})$ & $(10^{-5})$ & $(10^{-5}, 10^{-5})$ & $(10^{-1}, 10^{-1})$ & $(10^{-2}, 10^{-4})$ & $(10^{-5}, 10^{-3})$ \\
 & 15 \% & 73.05 & 70.06 & 70.44 & 73.89 & 60.48 & 55.69 \\
 &  & $(10^{5})$ & $(10^{5})$ & $(10^{-5}, 10^{-5})$ & $(10^{-2}, 10^{-1})$ & $(10^{-5}, 10^{-5})$ & $(10^{-5}, 10^{-5})$ \\
 & 20 \% & 71.86 & 80 & 71.26 & 80.24 & 81.08 & 89.7 \\
 &  & $(10^{-2})$ & $(10^{5})$ & $(10, 10^{4})$ & $(1, 10^{-1})$ & $(10^{-2}, 10^{-5})$ & $(10^{-5}, 10^{-5})$ \\ \hline
{\begin{tabular}[c]{@{}c@{}}mushroom\\ (8124 x 22)\end{tabular}} & 0 \% & 80.06 & 50.83 & 77.5 & 83.46 & 82.69 & 80.01 \\
 &  & $(10^{-2})$ & $(10^{-5})$ & $(10^{-2}, 10^{-2})$ & $(10^{-2}, 10^{-1})$ & $(10^{5}, 10^{-5})$ & $(10^{-5}, 10^{-5})$ \\
 & 5 \% & 88.02 & 61.08 & 94.09 & 65.46 & 74.98 & 70.34 \\
 &  & $(10^{-3})$ & $(10^{4})$ & $(10^{3}, 10^{4})$ & $(1, 10^{-1})$ & $(10^{-4}, 10^{-5})$ & $(10^{-5}, 10^{-3})$ \\
 & 10 \% & 87.98 & 88.56 & 96.23 & 96.88 & 90.81 & 87.94 \\
 &  & $(10^{5})$ & $(10^{-1})$ & $(10^{2}, 10^{5})$ & $(10^{-2}, 10^{-2})$ & $(10, 10^{-4})$ & $(10^{-5}, 10^{-3})$ \\
 & 15 \% & 87.94 & 85.76 & 81.76 & 98.48 & 89.46 & 87.28 \\
 &  & $(10^{-3})$ & $(10^{2})$ & $(10^{5}, 10)$ & $(10^{-2}, 10^{-2})$ & $(10^{-3}, 10^{-4})$ & $(10^{-5}, 10^{-2})$ \\
 & 20\% & 87.74 & 85.67 & 85.59 & 93.81 & 90.89 & 85.81 \\
 &  & $(10^{-4})$ & $(10^{2})$ & $(10^{5}, 10)$ & $(1, 1)$ & $(10, 10^{-4})$ & $(10^{-5}, 10^{-3})$ \\ \hline
{\begin{tabular}[c]{@{}c@{}}musk\_1\\ (476 x 167)\end{tabular}} & 0 \% & 68.53 & 52.66 & 59.15 & 72.03 & 69.93 & 56.64 \\
 &  & $(10^{-3})$ & $(10)$ & $(1, 1)$ & $(10^{-5}, 10^{-5})$ & $(10^{-2}, 10^{-3})$ & $(10^{-5}, 10^{-3})$ \\
 & 5 \% & 67.13 & 56.15 & 77.62 & 54.55 & 68.53 & 56.64 \\
 &  & $(10^{-3})$ & $(10^{5})$ & $(1, 10^{-1})$ & $(10^{-3}, 10^{-3})$ & $(10^{-4}, 10^{-3})$ & $(10^{-5}, 10^{-5})$ \\
 & 10 \% & 76.92 & 80 & 79.02 & 66.43 & 60.14 & 67.34 \\
 &  & $(10^{-3})$ & $(10^{5})$ & $(1, 1)$ & $(10^{-2}, 10^{-2})$ & $(10^{-4}, 10^{-3})$ & $(10^{-5}, 10^{-3})$ \\
 & 15 \% & 69.02 & 71.32 & 65.03 & 64.34 & 71.33 & 53.85 \\
 &  & $(10^{-3})$ & $(10^{5})$ & $(10, 10^{-1})$ & $(10^{-2}, 10^{-2})$ & $(1, 10^{-5})$ & $(10^{-5}, 10^{-3})$ \\
 & 20 \% & 67.83 & 70 & 67.13 & 63.64 & 64.55 & 64.55 \\
 &  & $(10^{-2})$ & $(10^{5})$ & $(10, 10^{-1})$ & $(10^{-2}, 10^{-2})$ & $(10^{-5}, 10^{-4})$ & $(10^{-5}, 10^{-3})$ \\ \hline
 {\begin{tabular}[c]{@{}c@{}}new-thyroid1\\ (215 x 16)\end{tabular}} & 0 \% & 88.46 & 85.38 & 66 & 89.23 & 98.46 & 93.85 \\
 &  & $(10^{-2})$ & $(10^{4})$ & $(10^{-1}, 1)$ & $(10^{-4}, 10^{-5})$ & $(10^{-3}, 10^{-3})$ & $(10^{-5}, 10^{-5})$ \\
 & 5 \% & 98.46 & 95.38 & 90 & 89.23 & 86.15 & 90.77 \\
 &  & $(10^{-2})$ & $(10)$ & $(10^{-1}, 1)$ & $(10^{-4}, 10^{-5})$ & $(10^{-3}, 10^{-3})$ & $(10^{-5}, 10^{-5})$ \\
 & 10 \% & 88.46 & 85.38 & 85.38 & 87.69 & 90.77 & 86.15 \\
 &  & $(10^{-2})$ & $(10)$ & $(10^{-1}, 1)$ & $(10^{-5}, 10^{-5})$ & $(10^{-5}, 10^{-5})$ & $(10^{-5}, 10^{-5})$ \\
 & 15 \% & 88.46 & 84.62 & 88.46 & 87.69 & 87.69 & 84.62 \\
 &  & $(10^{-2})$ & $(1)$ & $(1, 1)$ & $(10^{-5}, 10^{-5})$ & $(10^{-5}, 10^{-5})$ & $(10^{-5}, 10^{-5})$ \\
 & 20 \% & 90.46 & 87.69 & 88.46 & 86.15 & 87.69 & 90.77 \\
 &  & $(10^{-2})$ & $(1)$ & $(10^{-5}, 10^{-5})$ & $(10^{-5}, 10^{-5})$ & $(10^{-5}, 10^{-5})$ & $(10^{-5}, 10^{-5})$ \\ \hline
 {\begin{tabular}[c]{@{}c@{}}oocytes\_merluccius\_nucleus\_4d\\ (1022 x 42)\end{tabular}} & 0 \% & 64.82 & 63.19 & 59.58 & 69.06 & 68.86 & 59.61 \\
 &  & $(10^{-1})$ & $(10^{-5})$ & $(1, 1)$ & $(10^{-4}, 10^{-5})$ & $(10^{-4}, 10^{-2})$ & $(10^{-5}, 10^{-5})$ \\
 & 5 \% & 64.82 & 62.21 & 58.83 & 67.75 & 58.96 & 64.82 \\
 &  & $(10^{-5})$ & $(10^{-5})$ & $(10, 10)$ & $(10^{-2}, 1)$ & $(10^{-5}, 10^{-2})$ & $(10^{-5}, 10^{-3})$ \\
 & 10 \% & 64.82 & 62.87 & 81.11 & 68.4 & 52.77 & 54.4 \\
 &  & $(10^{-5})$ & $(10^{-5})$ & $(1, 1)$ & $(10^{-2}, 1)$ & $(10^{-5}, 10^{-3})$ & $(10^{-5}, 10^{-3})$ \\
 & 15 \% & 64.82 & 61.89 & 60.78 & 69.71 & 54.72 & 63.84 \\
 &  & $(10^{-4})$ & $(10^{-5})$ & $(1, 1)$ & $(10^{-2}, 10^{-2})$ & $(10^{-3}, 10^{-2})$ & $(10^{-5}, 10^{-5})$ \\
 & 20 \% & 65.8 & 64.82 & 58.5 & 63.52 & 58.96 & 55.05 \\
 &  & $(10^{-4})$ & $(10^{2})$ & $(1, 1)$ & $(10^{-2}, 1)$ & $(10^{-1}, 10^{-1})$ & $(10^{-5}, 10^{-5})$ \\ \hline
{\begin{tabular}[c]{@{}c@{}}ozone\\ (2536 x 6)\end{tabular}} & 0 \% & 94.58 & 94.58 & 85.58 & 96.58 & 95.48 & 95.93 \\
 &  & $(10^{-5})$ & $(10)$ & $(1, 1)$ & $(10^{-5}, 10^{-5})$ & $(10^{-5}, 10^{-5})$ & $(10^{-5}, 10^{-5})$ \\
 & 5 \% & 86.58 & 96.58 & 85.29 & 96.58 & 92.9 & 93.56 \\
 &  & $(10^{-5})$ & $(10^{-3})$ & $(10^{-5}, 10^{-5})$ & $(10^{-2}, 10^{-2})$ & $(10^{-3}, 10^{-2})$ & $(10^{-5}, 10^{-5})$ \\
 & 10 \% & 96.58 & 75 & 96.58 & 96.58 & 96.58 & 96.58 \\
 &  & $(10^{-5})$ & $(10^{-5})$ & $(10^{-5}, 10^{-5})$ & $(10^{-5}, 10^{-5})$ & $(10^{-5}, 10^{-5})$ & $(10^{-5}, 10^{-5})$ \\
 & 15 \% & 80.58 & 83.09 & 85.79 & 96.58 & 96.58 & 96.58 \\
 &  & $(10^{-5})$ & $(10^{2})$ & $(1, 1)$ & $(10^{-5}, 10^{-5})$ & $(10^{-5}, 10^{-5})$ & $(10^{-5}, 10^{-5})$ \\
 & 20 \% & 86.58 & 66.89 & 96.58 & 96.58 & 96.58 & 96.58 \\
 &  & $(10^{-5})$ & $(10)$ & $(10, 10)$ & $(10^{-5}, 10^{-5})$ & $(10^{-5}, 10^{-5})$ & $(10^{-5}, 10^{-5})$ \\ \hline
{\begin{tabular}[c]{@{}c@{}}ringnorm\\ (7400 x 21)\end{tabular}} & 0 \% & 72.62 & 63.56 & 64.86 & 73.56 & 75.05 & 70.95 \\
 &  & $(10^{-3})$ & $(10^{-5})$ & $(10^{-1}, 1)$ & $(1, 10^{2})$ & $(1, 10^{-5})$ & $(10^{-5}, 10^{-5})$ \\
 & 5 \% & 76.44 & 70.96 & 75.9 & 73.65 & 75.99 & 76.26 \\
 &  & $(10^{-4})$ & $(10^{-4})$ & $(1, 1)$ & $(1, 10^{2})$ & $(10^{-2}, 10^{-5})$ & $(10^{-5}, 10^{-4})$ \\
 & 10 \% & 76.4 & 64.89 & 76.22 & 77.21 & 76.71 & 76.53 \\
 &  & $(10^{-2})$ & $(10^{5})$ & $(1, 1)$ & $(1, 10)$ & $(10^{-2}, 10^{-5})$ & $(10^{-5}, 10^{-4})$ \\
 & 15 \% & 76.04 & 68.6 & 76.17 & 76.67 & 72.48 & 76.53 \\
 &  & $(10^{-2})$ & $(10^{5})$ & $(1, 1)$ & $(1, 10^{2})$ & $(10^{-3}, 10^{-5})$ & $(10^{-5}, 10^{-4})$ \\
 & 20 \% & 75.95 & 47.97 & 75.9 & 75.9 & 75.68 & 76.44 \\
 &  & $(10^{-1})$ & $(10^{5})$ & $(1, 1)$ & $(1, 10)$ & $(10^{-1}, 10^{-5})$ & $(10^{-5}, 10^{-3})$ \\ \hline
\multicolumn{8}{l}{$^{\dagger}$ represents the proposed models. ACC represents the accuracy metric.}
\end{tabular}}
\end{table*}

\begin{table*}[htp]
\centering
\ContinuedFloat
    \caption{(Continued)}
    \resizebox{0.78\textwidth}{!}{                                                                                          
    \begin{tabular}{cccccccc}
\hline
Dataset                                                                                         & Noise & \begin{tabular}[c]{@{}c@{}}SVM \cite{cortes1995support} \\ ACC (\%)\\ ($d_1$)\end{tabular} & \begin{tabular}[c]{@{}c@{}}GBSVM (PSO) \cite{xia2022gbsvm} \\ ACC (\%)\\ ($d_1$)\end{tabular} & \begin{tabular}[c]{@{}c@{}}TSVM \cite{khemchandani2007twin} \\ ACC (\%)\\ $(d_1, d_2)$\end{tabular} & \begin{tabular}[c]{@{}c@{}}GBTSVM$^{\dagger}$\\ ACC (\%)\\ $(d_1, d_2)$\end{tabular} & \begin{tabular}[c]{@{}c@{}}LS-GBTSVM$^{\dagger}$\\ ACC (\%)\\ $(d_1, d_3)$\end{tabular} & \begin{tabular}[c]{@{}c@{}}LS-GBTSVM (SOR)$^{\dagger}$\\ ACC (\%)\\ $(d_1, d_3)$\end{tabular} \\ \hline
 {\begin{tabular}[c]{@{}c@{}}shuttle-6\_vs\_2-3\\ (230 x 10)\end{tabular}} & 0 \% & 95.55 & 95.55 & 70.08 & 95.65 & 100 & 98.55 \\
 &  & $(10^{-2})$ & $(10^{5})$ & $(10^{-1}, 10^{-1})$ & $(10^{-5}, 10^{-5})$ & $(10^{-4}, 10^{-5})$ & $(10^{-5}, 10^{-4})$ \\
 & 5 \% & 98.55 & 89.86 & 88.76 & 95.65 & 100 & 97.1 \\
 &  & $(10^{-2})$ & $(10^{3})$ & $(10^{-1}, 10^{-2})$ & $(10^{-5}, 10^{-5})$ & $(10^{-5}, 10^{-3})$ & $(10^{-5}, 10^{-5})$ \\
 & 10 \% & 100 & 86.87 & 100 & 95.65 & 94.2 & 95.65 \\
 &  & $(10^{-2})$ & $(10^{2})$ & $(10^{-1}, 10^{-2})$ & $(10^{-5}, 10^{-5})$ & $(10^{-3}, 10^{-3})$ & $(10^{-5}, 10^{-5})$ \\
 & 15 \% & 85.65 & 65.22 & 87.1 & 95.65 & 95.65 & 95.65 \\
 &  & $(10^{-5})$ & $(10^{5})$ & $(10^{-1}, 10^{-2})$ & $(10^{-5}, 10^{-5})$ & $(10^{-5}, 10^{-5})$ & $(10^{-5}, 10^{-5})$ \\
 & 20 \% & 95.65 & 69.57 & 97.1 & 95.65 & 95.65 & 95.65 \\
 &  & $(10^{-5})$ & $(10^{2})$ & $(10^{-2}, 10^{-3})$ & $(10^{-5}, 10^{-5})$ & $(10^{-5}, 10^{-5})$ & $(10^{-5}, 10^{-5})$ \\ \hline
{\begin{tabular}[c]{@{}c@{}}spambase\\ (4601 x 58)\end{tabular}} & 0 \% & 88.78 & 79.79 & 74.18 & 89.79 & 87.69 & 79.22 \\
 &  & $(10^{-3})$ & $(10^{2})$ & $(10^{-3}, 10^{-3})$ & $(10^{-1}, 10^{-2})$ & $(10^{-4}, 10^{-3})$ & $(10^{-5}, 10^{-2})$ \\
 & 5 \% & 88.49 & 51.99 & 81.17 & 90.73 & 88.2 & 81.1 \\
 &  & $(10^{-3})$ & $(10^{4})$ & $(10^{-1}, 10^{-1})$ & $(10^{-1}, 10^{-1})$ & $(10^{2}, 10^{-4})$ & $(10^{-5}, 10^{-3})$ \\
 & 10 \% & 88.41 & 81.25 & 91.09 & 91.46 & 88.63 & 82.62 \\
 &  & $(10^{-3})$ & $(10^{4})$ & $(10^{-1}, 10^{-1})$ & $(10^{-1}, 10^{-1})$ & $(10^{-4}, 10^{-3})$ & $(10^{-5}, 10^{-2})$ \\
 & 15 \% & 88.27 & 74.41 & 88.85 & 90.51 & 88.92 & 88.63 \\
 &  & $(10^{-3})$ & $(10^{2})$ & $(10^{-3}, 10^{-3})$ & $(10^{-2}, 10^{-2})$ & $(10^{-5}, 10^{-4})$ & $(10^{-5}, 10^{-3})$ \\
 & 20 \% & 88.63 & 88.92 & 89.43 & 88.63 & 87.4 & 87.98 \\
 &  & $(10)$ & $(10^{4})$ & $(10^{-1}, 10^{-1})$ & $(10^{-2}, 10^{-2})$ & $(10^{-3}, 10^{-2})$ & $(10^{-5}, 10^{-3})$ \\ \hline
{\begin{tabular}[c]{@{}c@{}}spectf\\ (267 x 45)\end{tabular}} & 0 \% & 76.54 & 70.4 & 62.39 & 80.25 & 71.6 & 62.96 \\
 &  & $(10^{-2})$ & $(10^{2})$ & $(10^{-1}, 10^{-2})$ & $(10^{-1}, 10^{-1})$ & $(10^{-2}, 10^{-2})$ & $(10^{-5}, 10^{-5})$ \\
 & 5 \% & 76.54 & 70.99 & 77.78 & 80.25 & 82.72 & 80.25 \\
 &  & $(10^{-3})$ & $(10^{5})$ & $(10^{-1}, 10^{-1})$ & $(10^{-1}, 10^{-1})$ & $(10^{-2}, 10^{-3})$ & $(10^{-5}, 10^{-5})$ \\
 & 10 \% & 76.54 & 62.67 & 80.25 & 80.25 & 81.48 & 59.26 \\
 &  & $(10^{-3})$ & $(10^{5})$ & $(10^{-2}, 10^{-2})$ & $(10^{-5}, 10^{-3})$ & $(10^{-2}, 10^{-3})$ & $(10^{-5}, 10^{-5})$ \\
 & 15 \% & 76.54 & 70.44 & 82.72 & 74.07 & 75.31 & 74.07 \\
 &  & $(10^{-3})$ & $(10)$ & $(10^{-2}, 10^{-2})$ & $(10^{-1}, 10^{-1})$ & $(10^{-3}, 10^{-3})$ & $(10^{-5}, 10^{-4})$ \\
 & 20 \% & 76.54 & 73.33 & 77.78 & 74.07 & 79.01 & 76.54 \\
 &  & $(10^{-3})$ & $(1)$ & $(10^{-5}, 10^{-5})$ & $(10^{-5}, 10^{-5})$ & $(10^{2}, 10^{-3})$ & $(10^{-5}, 10^{-5})$ \\ \hline
{\begin{tabular}[c]{@{}c@{}}tic\_tac\_toe\\ (958 x 10)\end{tabular}} & 0 \% & 75.69 & 76.88 & 68.66 & 99.65 & 76.04 & 72.92 \\
 &  & $(10^{-3})$ & $(10^{5})$ & $(10^{-5}, 10^{-5})$ & $(10^{-5}, 10^{-5})$ & $(10^{-1}, 10^{-5})$ & $(10^{-5}, 10^{-5})$ \\
 & 5 \% & 75.35 & 70.47 & 95.65 & 99.65 & 83.61 & 71.88 \\
 &  & $(10^{-3})$ & $(10^{3})$ & $(10^{-5}, 10^{-5})$ & $(10^{-5}, 10^{-5})$ & $(10^{-4}, 10^{-3})$ & $(10^{-5}, 10^{-5})$ \\
 & 10 \% & 76.04 & 70.97 & 89.65 & 99.65 & 74.31 & 70.49 \\
 &  & $(10^{-3})$ & $(10^{2})$ & $(10^{-5}, 10^{-5})$ & $(10^{-1}, 10^{-1})$ & $(10, 10^{-4})$ & $(10^{-5}, 10^{-5})$ \\
 & 15 \% & 74.65 & 60.76 & 89.65 & 99.65 & 69.1 & 73.61 \\
 &  & $(10^{-3})$ & $(10)$ & $(10^{-5}, 10^{-5})$ & $(10^{-1}, 10^{-1})$ & $(10^{-5}, 10^{-4})$ & $(10^{-5}, 10^{-5})$ \\
 & 20 \% & 73.96 & 63.33 & 89.65 & 99.65 & 69.1 & 67.71 \\
 &  & $(10^{-3})$ & $(10^{5})$ & $(10, 10^{2})$ & $(10^{-4}, 10^{-5})$ & $(10^{-2}, 10^{-3})$ & $(10^{-5}, 10^{-5})$ \\ \hline
 {\begin{tabular}[c]{@{}c@{}}vehicle1\\ (846 x 19)\end{tabular}} & 0 \% & 76.38 & 73.62 & 59.44 & 79.53 & 65.35 & 71.65 \\
 &  & $(10^{-5})$ & $(10)$ & $(10, 1)$ & $(1, 1)$ & $(10^{-5}, 10^{-3})$ & $(10^{-5}, 10^{-4})$ \\
 & 5 \% & 76.38 & 73.62 & 78.74 & 76.77 & 74.8 & 75.59 \\
 &  & $(10^{-5})$ & $(10)$ & $(10, 1)$ & $(10^{-5}, 10^{-5})$ & $(10^{-2}, 10^{-2})$ & $(10^{-5}, 10^{-5})$ \\
 & 10 \% & 76.38 & 71.26 & 81.5 & 76.38 & 70.87 & 75.2 \\
 &  & $(10^{-5})$ & $(10)$ & $(10^{-1}, 10^{-1})$ & $(10^{-5}, 10^{-5})$ & $(10^{-4}, 10^{-3})$ & $(10^{-5}, 10^{-5})$ \\
 & 15 \% & 76.38 & 70.47 & 80.71 & 75.59 & 65.75 & 72.44 \\
 &  & $(10^{-5})$ & $(10)$ & $(10^{-2}, 10^{-2})$ & $(10^{-5}, 10^{-5})$ & $(10^{-5}, 10^{-5})$ & $(10^{-5}, 10^{-5})$ \\
 & 20 \% & 76.38 & 76.38 & 78.35 & 75.98 & 63.78 & 75.98 \\
 &  & $(10^{-5})$ & $(10)$ & $(10, 1)$ & $(10^{-4}, 10^{-5})$ & $(10^{-3}, 10^{-3})$ & $(10^{-5}, 10^{-5})$ \\ \hline
 {\begin{tabular}[c]{@{}c@{}}vehicle2\\ (846 x 19)\end{tabular}} & 0 \% & 71.65 & 53.54 & 64.86 & 94.09 & 84.65 & 62.2 \\
 &  & $(10^{-3})$ & $(10^{5})$ & $(1, 1)$ & $(10^{-1}, 10^{-1})$ & $(10^{-3}, 10^{-3})$ & $(10^{-5}, 10^{-5})$ \\
 & 5 \% & 75.2 & 53.54 & 86.85 & 81.1 & 83.46 & 67.32 \\
 &  & $(10^{-3})$ & $(10^{4})$ & $(1, 1)$ & $(10^{-2}, 1)$ & $(10^{-3}, 10^{-2})$ & $(10^{-5}, 10^{-3})$ \\
 & 10 \% & 72.44 & 65.04 & 96.85 & 95.28 & 66.54 & 66.54 \\
 &  & $(10^{-3})$ & $(10^{4})$ & $(1, 1)$ & $(10^{-2}, 1)$ & $(10^{-5}, 10^{-2})$ & $(10^{-5}, 10^{-3})$ \\
 & 15 \% & 74.8 & 57.09 & 65.67 & 92.13 & 81.5 & 60.63 \\
 &  & $(10^{-3})$ & $(10)$ & $(1, 1)$ & $(10^{-2}, 1)$ & $(10^{-4}, 10^{-3})$ & $(10^{-5}, 10^{-5})$ \\
 & 20 \% & 72.05 & 61.97 & 70.88 & 94.09 & 78.35 & 70.08 \\
 &  & $(10^{-5})$ & $(10)$ & $(1, 1)$ & $(10^{-2}, 1)$ & $(10^{5}, 10^{-4})$ & $(10^{-5}, 10^{-3})$ \\ \hline
 {\begin{tabular}[c]{@{}c@{}}vertebral\_column\_2clases\\ (310 x 7)\end{tabular}} & 0 \% & 75.27 & 68.82 & 63.13 & 69.89 & 78.49 & 78.49 \\
 &  & $(10^{-2})$ & $(10^{5})$ & $(1, 1)$ & $(10^{-4}, 10^{-5})$ & $(10^{-5}, 10^{-5})$ & $(10^{-5}, 10^{-5})$ \\
 & 5 \% & 75.27 & 68.82 & 88.17 & 69.89 & 79.57 & 77.42 \\
 &  & $(10^{-2})$ & $(10^{-5})$ & $(1, 1)$ & $(10^{-4}, 10^{-5})$ & $(10^{-5}, 10^{-5})$ & $(10^{-5}, 10^{-5})$ \\
 & 10 \% & 75.27 & 68.82 & 89.25 & 89.82 & 70.97 & 79.57 \\
 &  & $(10^{-2})$ & $(10)$ & $(1, 1)$ & $(10^{-5}, 10^{-5})$ & $(10^{3}, 10^{-4})$ & $(10^{-5}, 10^{-5})$ \\
 & 15 \% & 74.19 & 68.82 & 79.25 & 79.57 & 75.27 & 76.34 \\
 &  & $(10^{-2})$ & $(1)$ & $(1, 1)$ & $(10^{-5}, 10^{-5})$ & $(10^{-5}, 10^{-5})$ & $(10^{-5}, 10^{-5})$ \\
 & 20 \% & 69.89 & 67.74 & 86.02 & 69.89 & 70.97 & 75.27 \\
 &  & $(10^{-5})$ & $(10^{5})$ & $(10^{-5}, 10^{-5})$ & $(10^{-5}, 10^{-5})$ & $(10^{-5}, 10^{-5})$ & $(10^{-5}, 10^{-5})$ \\ \hline
{\begin{tabular}[c]{@{}c@{}}wpbc\\ (194 x 34)\end{tabular}} & 0 \% & 77.97 & 57.63 & 60 & 77.97 & 77.97 & 77.97 \\
 &  & $(10^{-5})$ & $(10)$ & $(10, 10^{-1})$ & $(10^{-5}, 10^{-5})$ & $(10^{-5}, 10^{-5})$ & $(10^{-5}, 10^{-5})$ \\
 & 5 \% & 70.97 & 70.85 & 77.97 & 79.66 & 72.88 & 74.58 \\
 &  & $(10^{-5})$ & $(10)$ & $(10^{-4}, 10^{-5})$ & $(10^{-5}, 10^{-5})$ & $(10^{-3}, 10^{-3})$ & $(10^{-5}, 10^{-4})$ \\
 & 10 \% & 77.97 & 52.67 & 76.27 & 77.97 & 77.63 & 78.41 \\
 &  & $(10^{-5})$ & $(10^{-2})$ & $(10^{-2}, 10^{-3})$ & $(10^{-5}, 10^{-5})$ & $(10^{-5}, 10^{-3})$ & $(10^{-5}, 10^{-3})$ \\
 & 15 \% & 72.38 & 45.76 & 77.97 & 76.27 & 59.32 & 59.32 \\
 &  & $(10^{-5})$ & $(10^{5})$ & $(10^{-1}, 10^{-2})$ & $(10^{-4}, 10^{-5})$ & $(10^{-5}, 10^{-2})$ & $(10^{-5}, 10^{-3})$ \\
 & 20 \% & 72.05 & 54.24 & 77.97 & 79.66 & 71.19 & 57.63 \\
 &  & $(10^{-2})$ & $(10)$ & $(10^{-3}, 10^{-5})$ & $(10^{-5}, 10^{-5})$ & $(10^{-5}, 10^{-4})$ & $(10^{-5}, 10^{-3})$ \\ \hline
 \multicolumn{8}{l}{$^{\dagger}$ represents the proposed models. ACC represents the accuracy metric.}
\end{tabular}}
\end{table*}

\begin{table*}[htp]
\ContinuedFloat
\centering
    \caption{(Continued)}
    \resizebox{0.8\textwidth}{!}{                                                                                          
    \begin{tabular}{cccccccc}
\hline
Dataset   & Noise & \begin{tabular}[c]{@{}c@{}}SVM \cite{cortes1995support} \\ ACC (\%)\\ ($d_1$)\end{tabular} & \begin{tabular}[c]{@{}c@{}}GBSVM (PSO) \cite{xia2022gbsvm} \\ ACC (\%)\\ ($d_1$)\end{tabular} & \begin{tabular}[c]{@{}c@{}}TSVM \cite{khemchandani2007twin} \\ ACC (\%)\\ $(d_1, d_2)$\end{tabular} & \begin{tabular}[c]{@{}c@{}}GBTSVM$^{\dagger}$\\ ACC (\%)\\ $(d_1, d_2)$\end{tabular} & \begin{tabular}[c]{@{}c@{}}LS-GBTSVM$^{\dagger}$\\ ACC (\%)\\ $(d_1, d_3)$\end{tabular} & \begin{tabular}[c]{@{}c@{}}LS-GBTSVM (SOR)$^{\dagger}$\\ ACC (\%)\\ $(d_1, d_3)$\end{tabular} \\ \hline
 {\begin{tabular}[c]{@{}c@{}}yeast-0-2-5-6\_vs\_3-7-8-9\\ (1004 x 9)\end{tabular}} & 0 \% & 83.71 & 64.9 & 66.23 & 88.08 & 92.38 & 85.1 \\
 &  & $(10^{-3})$ & $(10)$ & $(10, 10)$ & $(10^{-5}, 10^{-5})$ & $(10^{-1}, 10^{-3})$ & $(10^{-5}, 10^{-5})$ \\
 & 5 \% & 88.38 & 82.65 & 81.06 & 91.39 & 92.72 & 92.05 \\
 &  & $(10^{-3})$ & $(10^{2})$ & $(10^{-5}, 10^{-5})$ & $(10^{-5}, 10^{-5})$ & $(10^{-3}, 10^{-3})$ & $(10^{-5}, 10^{-5})$ \\
 & 10 \% & 91.72 & 54.97 & 82.05 & 92.39 & 93.38 & 91.72 \\
 &  & $(10^{-3})$ & $(10^{2})$ & $(10^{-1}, 10^{-1})$ & $(10^{-5}, 10^{-5})$ & $(10^{-5}, 10^{-5})$ & $(10^{-5}, 10^{-5})$ \\
 & 15 \% & 88.01 & 75.5 & 81.06 & 91.39 & 92.72 & 93.05 \\
 &  & $(10^{-3})$ & $(10)$ & $(10^{-5}, 10^{-5})$ & $(10^{-5}, 10^{-5})$ & $(10^{-5}, 10^{-5})$ & $(10^{-5}, 10^{-4})$ \\
 & 20 \% & 87.35 & 84.7 & 87.55 & 91.39 & 93.71 & 92.38 \\
 &  & $(10^{-3})$ & $(10^{2})$ & $(10^{2}, 10^{2})$ & $(10^{-5}, 10^{-5})$ & $(10^{-4}, 10^{-3})$ & $(10^{-5}, 10^{-5})$ \\ \hline  
{\begin{tabular}[c]{@{}c@{}}yeast-0-2-5-7-9\_vs\_3-6-8\\ (1004 x 9)\end{tabular}} & 0 \% & 86.79 & 68.55 & 66.67 & 87.5 & 91.19 & 84.87 \\
 &  & $(10^{-3})$ & $(10)$ & $(1, 1)$ & $(1, 10^{-2})$ & $(10^{5}, 10^{-5})$ & $(10^{-5}, 10^{-5})$ \\
 & 5 \% & 88.14 & 68.42 & 80.79 & 87.5 & 89.31 & 87.5 \\
 &  & $(10^{-3})$ & $(10)$ & $(1, 1)$ & $(10^{-2}, 10^{-3})$ & $(1, 10^{-5})$ & $(10^{-5}, 10^{-5})$ \\
 & 10 \% & 87.02 & 87.15 & 94.37 & 93.71 & 95.03 & 93.05 \\
 &  & $(10^{-3})$ & $(10^{4})$ & $(1, 1)$ & $(10^{-2}, 1)$ & $(10^{-4}, 10^{-3})$ & $(10^{-5}, 10^{-5})$ \\
 & 15 \% & 87.5 & 74.34 & 80.79 & 87.5 & 87.5 & 87.5 \\
 &  & $(10^{-3})$ & $(10)$ & $(10^{-1}, 10^{-1})$ & $(10^{-2}, 10)$ & $(10^{-5}, 10^{-4})$ & $(10^{-5}, 10^{-5})$ \\
 & 20 \% & 87.5 & 77.63 & 80.79 & 87.5 & 90.79 & 91.19 \\
 &  & $(10^{-3})$ & $(10^{3})$ & $(10^{4}, 10^{4})$ & $(10^{-2}, 1)$ & $(10^{-4}, 10^{-3})$ & $(10^{-5}, 10^{-4})$ \\ \hline
{\begin{tabular}[c]{@{}c@{}}yeast-0-5-6-7-9\_vs\_4 \\ (528 x 9)\end{tabular}} & 0 \% & 81.19 & 56.6 & 68.29 & 84.91 & 91.19 & 91.19 \\
 &  & $(10^{-5})$ & $(10^{5})$ & $(10^{-1}, 10^{-1})$ & $(10^{-5}, 10^{-5})$ & $(10^{-4}, 10^{-3})$ & $(10^{-5}, 10^{-5})$ \\
 & 5 \% & 81.19 & 68.55 & 81.19 & 90.57 & 81.13 & 90.57 \\
 &  & $(10^{-5})$ & $(10^{5})$ & $(10^{-1}, 10^{-2})$ & $(10^{-4}, 10^{-5})$ & $(10^{-5}, 10^{-5})$ & $(10^{-5}, 10^{-5})$ \\
 & 10 \% & 91.19 & 83.4 & 91.82 & 91.19 & 91.19 & 91.19 \\
 &  & $(10^{-5})$ & $(10^{4})$ & $(10^{-1}, 10^{-1})$ & $(10^{-5}, 10^{-5})$ & $(10^{-5}, 10^{-5})$ & $(10^{-5}, 10^{-5})$ \\
 & 15 \% & 81.19 & 89.94 & 81.82 & 91.82 & 91.19 & 90.57 \\
 &  & $(10^{-5})$ & $(10)$ & $(10^{-3}, 10^{-4})$ & $(10^{-5}, 10^{-5})$ & $(10^{-5}, 10^{-5})$ & $(10^{-5}, 10^{-5})$ \\
 & 20 \% & 91.19 & 83.65 & 91.19 & 91.19 & 92.45 & 92.45 \\
 &  & $(10^{-5})$ & $(10)$ & $(10^{-3}, 10^{-5})$ & $(10^{-5}, 10^{-5})$ & $(10^{-3}, 10^{-2})$ & $(10^{-5}, 10^{-5})$ \\ \hline
{\begin{tabular}[c]{@{}c@{}}yeast-2\_vs\_4\\ (514 x 9)\end{tabular}} & 0 \% & 85.81 & 54.19 & 67.7 & 87.74 & 95.48 & 85.16 \\
 &  & $(10^{-5})$ & $(10)$ & $(10^{-1}, 1)$ & $(10^{-1}, 10^{-1})$ & $(1, 10^{-3})$ & $(10^{-5}, 10^{-5})$ \\
 & 5 \% & 85.81 & 85.81 & 84.84 & 85.81 & 94.84 & 86.45 \\
 &  & $(10^{-5})$ & $(10^{2})$ & $(10^{-1}, 1)$ & $(10^{-1}, 10^{-1})$ & $(10^{-3}, 10^{-2})$ & $(10^{-5}, 10^{-5})$ \\
 & 10 \% & 85.81 & 72.67 & 87.1 & 89.81 & 89.68 & 86.45 \\
 &  & $(10^{-5})$ & $(10^{2})$ & $(1, 1)$ & $(10^{-5}, 10^{-5})$ & $(10^{-2}, 10^{-2})$ & $(10^{-5}, 10^{-5})$ \\
 & 15 \% & 72.87 & 75.76 & 86.45 & 85.81 & 85.81 & 85.81 \\
 &  & $(10^{-5})$ & $(10^{4})$ & $(1, 1)$ & $(10^{-5}, 10^{-5})$ & $(10^{-5}, 10^{-5})$ & $(10^{-5}, 10^{-5})$ \\
 & 20 \% & 85.81 & 76.13 & 87.1 & 85.81 & 94.19 & 85.81 \\
 &  & $(10^{-5})$ & $(10)$ & $(10, 1)$ & $(10^{-5}, 10^{-5})$ & $(10^{-4}, 10^{-3})$ & $(10^{-5}, 10^{-5})$ \\ \hline
 {\begin{tabular}[c]{@{}c@{}}yeast3\\ (1484 x 9)\end{tabular}} & 0 \% & 79.91 & 79.03 & 67.15 & 80.04 & 90.81 & 80.92 \\
 &  & $(10^{-3})$ & $(10^{4})$ & $(10^{-1}, 1)$ & $(10^{-1}, 10^{-1})$ & $(10^{-1}, 10^{-3})$ & $(10^{-5}, 10^{-5})$ \\
 & 5 \% & 80.81 & 81.26 & 81.48 & 84.98 & 91.7 & 89.91 \\
 &  & $(10^{-3})$ & $(1)$ & $(1, 1)$ & $(10^{-5}, 10^{-4})$ & $(10^{-3}, 10^{-3})$ & $(10^{-5}, 10^{-5})$ \\
 & 10 \% & 89.69 & 82.74 & 89.69 & 91.03 & 93.27 & 88.34 \\
 &  & $(10^{-3})$ & $(10)$ & $(10^{-1}, 10^{-1})$ & $(10^{-1}, 1)$ & $(10^{-3}, 10^{-3})$ & $(10^{-5}, 10^{-5})$ \\
 & 15 \% & 89.46 & 82.87 & 88.79 & 90.13 & 88.12 & 91.93 \\
 &  & $(10^{-3})$ & $(10)$ & $(10^{-5}, 10^{-5})$ & $(10^{-1}, 1)$ & $(10^{-4}, 10^{-4})$ & $(10^{-5}, 10^{-3})$ \\
 & 20 \% & 88.12 & 76.23 & 80.81 & 86.55 & 89.69 & 88.12 \\
 &  & $(10^{-5})$ & $(10)$ & $(10^{-5}, 10^{-5})$ & $(10^{-2}, 1)$ & $(10^{-4}, 10^{-4})$ & $(10^{-5}, 10^{-5})$ \\ \hline
{Average ACC} & 0 \% & 82.21 & 72.46 & 65.47 & 85.30 & \textbf{85.32} & 81.34 \\
 & 5 \% & 82.11 & 75.92 & 82.27 & \textbf{83.75} & 82.96 & 82.49 \\
 & 10 \% & 83.06 & 75.36 & 84.64 & \textbf{85.82} & 82.11 & 82.25 \\
 & 15 \% & 81.25 & 74.44 & 81.11 & \textbf{85.30} & 81.5 & 80.67 \\
 & 20 \% & 82.41 & 74.89 & 82.46 & \textbf{84.86} & 83.11 & 82.75 \\ \hline
{Average Rank} & 0 \% & 3.46 & 4.79 & 5.69 & 1.97 & 1.94 & 3.14 \\
 & 5 \% & 3.74 & 4.78 & 3.88 & 2.83 & 2.68 & 3.10 \\
 & 10 \% & 3.65 & 4.97 & 3.10 & 2.42 & 3.28 & 3.58 \\
 & 15 \% & 3.44 & 4.94 & 3.53 & 2.26 & 3.44 & 3.38 \\
 & 20 \% & 3.35 & 4.90 & 3.40 & 2.99 & 3.19 & 3.17 \\ \hline
 \multicolumn{8}{l}{$^{\dagger}$ represents the proposed models. ACC represents the accuracy metric.} \\
 \multicolumn{8}{l}{Bold text denotes the model with the highest average ACC.}
 \end{tabular}}
\end{table*}

\renewcommand{\thetable}{S.IV}
\begin{table*}[ht!]
\centering
    \caption{Performance comparison of the proposed GBTSVM and LS-GBTSVM along with the baseline models over for UCI and KEEL datasets with non-linear kernel.}
    \label{Classification performance in NonLinear Case.}
    \resizebox{0.79\textwidth}{!}{
\begin{tabular}{cccccccc}
\hline
Dataset & Noise & \begin{tabular}[c]{@{}c@{}}SVM \cite{cortes1995support} \\ ACC (\%)\\ ($d_1$, $\sigma$)\end{tabular} & \begin{tabular}[c]{@{}c@{}}GBSVM (PSO) \cite{xia2022gbsvm}\\ ACC (\%)\\ ($d_1$, $\sigma$)\end{tabular} & \begin{tabular}[c]{@{}c@{}}TSVM \cite{khemchandani2007twin} \\ ACC (\%)\\ ($d_1$, $d_2$, $\sigma$)\end{tabular} & \begin{tabular}[c]{@{}c@{}}GBTSVM$^{\dagger}$\\ ACC (\%)\\ ($d_1$, $d_2$, $\sigma$)\end{tabular} & \begin{tabular}[c]{@{}c@{}}LS-GBTSVM$^{\dagger}$\\ ACC (\%)\\ ($d_1$, $d_3$, $\sigma$)\end{tabular} &\begin{tabular}[c]{@{}c@{}}LS-GBSVM (SMO)$^{\dagger}$\\ ACC (\%)\\ ($d_1$, $d_3$, $\sigma$)\end{tabular} \\ \hline
{\begin{tabular}[c]{@{}c@{}}aus\\ (690 x 15)\end{tabular}} & 0 \% & 56.25 & 81.73 & 81.25 & 87.98 & 83.08 & 83.75 \\
 &  & $(10^{-5}, 2^{-5})$ & $(10^{1}, 2^{5})$ & $(10^{-5}, 10^{-5}, 2^{-5})$ & $(10^{-1}, 1, 2^{5})$ & $(10^{3}, 10^{-5}, 2^{5})$ & $(10^{-5}, 10^{-5}, 2^{5})$ \\
 & 5 \% & 56.25 & 56.73 & 80.29 & 83.65 & 83.08 & 83.25 \\
 &  & $(10^{-5}, 2^{-5})$ & $(10^{2}, 2^{5})$ & $(10^{-5}, 10^{-5}, 2^{-5})$ & $(10^{-2}, 1, 2^{5})$ & $(10^{5}, 10^{-5}, 2^{5})$ & $(10^{-5}, 10^{-3}, 2^{5})$ \\
 & 10 \% & 56.25 & 87.5 & 82.69 & 87.98 & 85.75 & 86.25 \\
 &  & $(10^{-5}, 2^{-5})$ & $(10^{2}, 2^{5})$ & $(10^{-1}, 10^{-1}, 2^{-5})$ & $(10^{-1}, 1, 2^{5})$ & $(10^{-4}, 10^{-2}, 2^{5})$ & $(10^{-5}, 10^{-4}, 2^{5})$ \\
 & 15 \% & 56.25 & 80.77 & 80.77 & 89.42 & 78.37 & 76.25 \\
 &  & $(10^{-5}, 2^{-5})$ & $(10^{5}, 2^{5})$ & $(10^{-5}, 10^{-5}, 2^{-5})$ & $(10^{-1}, 10^{-1}, 2^{5})$ & $(10^{-3}, 1, 2^{5})$ & $(10^{-5}, 10^{-4}, 2^{5})$ \\
 & 20 \% & 69.25 & 70.78 & 77.4 & 86.54 & 83.75 & 86.25 \\
 &  & $(10^{-5}, 2^{-5})$ & $(10^{5}, 2^{5})$ & $(10^{-5}, 10^{-5}, 2^{-5})$ & $(10^{-1}, 10^{-1}, 2^{5})$ & $(10^{1}, 10^{-5}, 2^{5})$ & $(10^{-5}, 10^{-3}, 2^{5})$ \\ \hline
 {\begin{tabular}[c]{@{}c@{}}breast\_cancer\\ (286 x 10)\end{tabular}} & 0 \% & 74.42 & 100 & 67.44 & 62.79 & 74.42 & 74.42 \\
 &  & $(10^{-5}, 2^{-5})$ & $(10^{5}, 2^{5})$ & $(10^{-1}, 10^{-2}, 2^{-4})$ & $(10^{1}, 10^{1}, 2^{5})$ & $(10^{-5}, 10^{-2}, 2^{2})$ & $(10^{-5}, 10^{-5}, 2^{5})$ \\
 & 5 \% & 74.42 & 74.42 & 67.44 & 65.12 & 74.42 & 74.42 \\
 &  & $(10^{-5}, 2^{-5})$ & $(10^{5}, 2^{5})$ & $(10^{-2}, 10^{-1}, 2^{-3})$ & $(1, 1, 2^{5})$ & $(10^{-5}, 10^{-2}, 2^{5})$ & $(10^{-5}, 10^{-5}, 2^{5})$ \\
 & 10 \% & 74.42 & 90.78 & 69.77 & 72.79 & 74.42 & 74.42 \\
 &  & $(10^{-5}, 2^{-5})$ & $(10^{5}, 2^{5})$ & $(10^{-1}, 10^{-2}, 2^{-1})$ & $(10^{1}, 10^{1}, 2^{5})$ & $(10^{-5}, 10^{-2}, 2^{5})$ & $(10^{-5}, 10^{-5}, 2^{5})$ \\
 & 15 \% & 74.42 & 58.84 & 68.6 & 73.26 & 74.42 & 74.42 \\
 &  & $(10^{-5}, 2^{-5})$ & $(10^{5}, 2^{1})$ & $(10^{-1}, 10^{-2}, 2^{-5})$ & $(10^{-5}, 10^{-5}, 2^{5})$ & $(10^{5}, 10^{-5}, 2^{3})$ & $(10^{-5}, 10^{-5}, 2^{5})$ \\
 & 20 \% & 74.42 & 77.91 & 74.42 & 74.42 & 74.42 & 74.42 \\
 &  & $(10^{-5}, 2^{-5})$ & $(10^{5}, 2^{5})$ & $(10^{-2}, 10^{-5}, 2^{-5})$ & $(10^{-4}, 10^{-5}, 2^{5})$ & $(10^{-5}, 10^{-5}, 2^{5})$ & $(10^{-5}, 10^{-5}, 2^{5})$ \\ \hline
{\begin{tabular}[c]{@{}c@{}}checkerboard\_Data\\ (690 x 15)\end{tabular}} & 0 \% & 56.25 & 81.73 & 81.25 & 87.98 & 82.44 & 81.75 \\
 &  & $(10^{-5}, 2^{-5})$ & $(10^{1}, 2^{2})$ & $(10^{-2}, 10^{-2}, 2^{-5})$ & $(10^{-1}, 1, 2^{5})$ & $(10^{-4}, 10^{-1}, 2^{5})$ & $(10^{-5}, 10^{-5}, 2^{5})$ \\
 & 5 \% & 56.25 & 56.73 & 80.29 & 83.65 & 82.44 & 76.25 \\
 &  & $(10^{-5}, 2^{-5})$ & $(10^{2}, 2^{2})$ & $(10^{-5}, 10^{-5}, 2^{-5})$ & $(10^{-2}, 1, 2^{5})$ & $(10^{5}, 10^{-5}, 2^{5})$ & $(10^{-5}, 10^{-3}, 2^{4})$ \\
 & 10 \% & 56.25 & 87.5 & 82.69 & 87.98 & 83.75 & 86.25 \\
 &  & $(10^{-5}, 2^{-5})$ & $(10^{1}, 2^{5})$ & $(10^{-1}, 10^{-1}, 2^{-2})$ & $(10^{-1}, 1, 2^{4})$ & $(10^{4}, 10^{-2}, 2^{4})$ & $(10^{-5}, 10^{-4}, 2^{4})$ \\
 & 15 \% & 56.25 & 80.87 & 80.77 & 89.42 & 78.37 & 76.25 \\
 &  & $(10^{-5}, 2^{-5})$ & $(10^{5}, 2^{5})$ & $(10^{-5}, 10^{-5}, 2^{-5})$ & $(10^{-1}, 10^{-1}, 2^{5})$ & $(10^{-3}, 1, 2^{5})$ & $(10^{-5}, 10^{-4}, 2^{4})$ \\
 & 20 \% & 56.25 & 87.98 & 77.4 & 86.54 & 83.75 & 86.25 \\
 &  & $(10^{-5}, 2^{-5})$ & $(10^{2}, 2^{5})$ & $(10^{-5}, 10^{-5}, 2^{-5})$ & $(10^{-1}, 10^{-1}, 2^{5})$ & $(10^{1}, 10^{-5}, 2^{5})$ & $(10^{-5}, 10^{-3}, 2^{2})$ \\ \hline
{\begin{tabular}[c]{@{}c@{}}chess\_krvkp\\ (3196 x 37)\end{tabular}} & 0 \% & 52.35 & 50.39 & 90.41 & 97.08 & 87.65 & 87.65 \\
 &  & $(10^{-3}, 2^{3})$ & $(10^{5}, 2^{5})$ & $(10^{-1}, 10^{-1}, 2^{-5})$ & $(1, 1, 2^{5})$ & $(10^{-3}, 10^{-3}, 2^{3})$ & $(10^{-5}, 10^{-5}, 2^{5})$ \\
 & 5 \% & 52.35 & 80.76 & 87.28 & 96.87 & 87.65 & 82.35 \\
 &  & $(10^{-3}, 2^{3})$ & $(10^{-5}, 2^{-5})$ & $(10^{-1}, 10^{-1}, 2^{-5})$ & $(10^{1}, 10^{-1}, 2^{3})$ & $(10^{3}, 10^{-5}, 2^{4})$ & $(10^{-5}, 10^{-2}, 2^{3})$ \\
 & 10 \% & 52.35 & 79.65 & 84.15 & 97.08 & 87.55 & 87.67 \\
 &  & $(10^{-3}, 2^{3})$ & $(10^{2}, 2^{-3})$ & $(10^{-1}, 10^{-1}, 2^{-5})$ & $(1, 1, 2^{5})$ & $(10^{-4}, 10^{-2}, 2^{4})$ & $(10^{-5}, 10^{-5}, 2^{5})$ \\
 & 15 \% & 53.28 & 79.89 & 80.5 & 92.6 & 92.87 & 92.35 \\
 &  & $(10^{-2}, 2^{2})$ & $(10^{4}, 2^{-2})$ & $(10^{-1}, 10^{-1}, 2^{-5})$ & $(1, 1, 2^{5})$ & $(10^{-5}, 10^{-5}, 2^{5})$ & $(10^{-5}, 10^{-2}, 2^{5})$ \\
 & 20 \% & 73.18 & 75.76 & 76.64 & 90.09 & 87.76 & 82.35 \\
 &  & $(10^{-2}, 2^{2})$ & $(10^{4}, 2^{1})$ & $(10^{-1}, 10^{-1}, 2^{-5})$ & $(1, 1, 2^{5})$ & $(10^{-5}, 10^{-1}, 2^{4})$ & $(10^{-5}, 10^{-3}, 2^{4})$ \\ \hline
{\begin{tabular}[c]{@{}c@{}}crossplane130\\ (130 x 3)\end{tabular}} & 0 \% & 51.28 & 100 & 100 & 100 & 91.28 & 91.28 \\
 &  & $(10^{-3}, 2^{4})$ & $(10^{5}, 2^{5})$ & $(10^{-5}, 10^{-5}, 2^{-5})$ & $(10^{-5}, 10^{-5}, 2^{5})$ & $(10^{-3}, 10^{-5}, 2^{4})$ & $(10^{-5}, 10^{-5}, 2^{5})$ \\
 & 5 \% & 81.28 & 90.44 & 100 & 100 & 91.28 & 88.72 \\
 &  & $(10^{-3}, 2^{2})$ & $(10^{2}, 2^{1})$ & $(10^{-2}, 10^{-3}, 2^{2})$ & $(10^{-5}, 10^{-5}, 2^{5})$ & $(10^{-3}, 10^{-4}, 2^{3})$ & $(10^{-5}, 10^{-5}, 2^{5})$ \\
 & 10 \% & 51.28 & 89.74 & 97.87 & 100 & 98.72 & 91.28 \\
 &  & $(10^{-3}, 2^{1})$ & $(10^{1}, 1)$ & $(1, 1, 2^{-5})$ & $(10^{-5}, 10^{-5}, 2^{5})$ & $(1, 10^{-3}, 2^{4})$ & $(10^{-5}, 10^{-2}, 2^{4})$ \\
 & 15 \% & 81.28 & 94.78 & 100 & 100 & 98.79 & 98.28 \\
 &  & $(10^{-3}, 2^{1})$ & $(10^{5}, 2^{5})$ & $(1, 1, 2^{-5})$ & $(10^{-5}, 10^{-5}, 2^{5})$ & $(10^{4}, 10^{-2}, 2^{3})$ & $(10^{-5}, 10^{-5}, 2^{5})$ \\
 & 20 \% & 81.28 & 89.19 & 90.44 & 94.87 & 91.28 & 91.28 \\
 &  & $(10^{-2}, 2^{-2})$ & $(10^{5}, 2^{5})$ & $(10^{-1}, 10^{-1}, 2^{4})$ & $(10^{-5}, 10^{-5}, 2^{5})$ & $(10^{-1}, 10^{-5}, 2^{3})$ & $(10^{-5}, 10^{-5}, 2^{5})$ \\ \hline
 {\begin{tabular}[c]{@{}c@{}}ecoli-0-1\_vs\_2-3-5\\ (244 x 8)\end{tabular}} & 0 \% & 91.89 & 89.19 & 90.59 & 90.81 & 90.81 & 90.81 \\
 &  & $(1, 1)$ & $(10^{5}, 2^{3})$ & $(10^{-4}, 10^{-3}, 2^{5})$ & $(10^{-2}, 1, 2^{2})$ & $(10^{-4}, 10^{-3}, 2^{3})$ & $(10^{-5}, 10^{-5}, 2^{5})$ \\
 & 5 \% & 89.19 & 80.81 & 93.24 & 91.89 & 90.81 & 89.19 \\
 &  & $(10^{-5}, 2^{-4})$ & $(10^{5}, 2^{-4})$ & $(10^{-5}, 10^{-5}, 2^{-5})$ & $(10^{-5}, 10^{-5}, 2^{5})$ & $(10^{-5}, 10^{-2}, 2^{1})$ & $(10^{-5}, 10^{-5}, 2^{2})$ \\
 & 10 \% & 89.19 & 100 & 94.59 & 99.19 & 89.19 & 89.19 \\
 &  & $(10^{-5}, 2^{-4})$ & $(10^{5}, 2^{-4})$ & $(10^{-5}, 10^{-5}, 2^{-5})$ & $(10^{-2}, 1, 2^{5})$ & $(10^{-3}, 10^{-2}, 2^{4})$ & $(10^{-5}, 10^{-5}, 2^{5})$ \\
 & 15 \% & 89.19 & 89.78 & 93.24 & 89.19 & 90.81 & 89.19 \\
 &  & $(10^{-5}, 2^{-5})$ & $(10^{5}, 2^{1})$ & $(1, 10^{-1}, 2^{-3})$ & $(10^{-5}, 10^{-4}, 2^{5})$ & $(10^{-5}, 10^{-5}, 2^{5})$ & $(10^{-5}, 10^{-5}, 2^{5})$ \\
 & 20 \% & 89.19 & 90.87 & 91.89 & 91.89 & 89.19 & 89.19 \\
 &  & $(1, 2^{-2})$ & $(10^{5}, 2^{1})$ & $(1, 10^{-2}, 2^{-3})$ & $(10^{-1}, 10^{-1}, 2^{5})$ & $(10^{-5}, 10^{-2}, 2^{5})$ & $(10^{-5}, 10^{-5}, 2^{5})$ \\ \hline
 {\begin{tabular}[c]{@{}c@{}}ecoli-0-1\_vs\_5\\ (240 x7)\end{tabular}} & 0 \% & 94.44 & 97.22 & 97.22 & 98.89 & 93.06 & 88.89 \\
 &  & $(1, 1)$ & $(10^{1}, 2^{3})$ & $(10^{-5}, 10^{-4}, 2^{-3})$ & $(10^{-4}, 10^{-5}, 2^{5})$ & $(10^{4}, 10^{-2}, 2^{3})$ & $(10^{-5}, 10^{-4}, 2^{3})$ \\
 & 5 \% & 88.89 & 90.22 & 90.22 & 95.83 & 93.06 & 88.89 \\
 &  & $(10^{-5}, 2^{-4})$ & $(10^{3}, 2^{2})$ & $(10^{-2}, 10^{-2}, 2^{-5})$ & $(10^{-1}, 10^{-1}, 2^{5})$ & $(10^{-5}, 10^{-2}, 2^{3})$ & $(10^{-4}, 10^{-5}, 2^{5})$ \\
 & 10 \% & 88.89 & 100 & 95.83 & 88.89 & 88.89 & 88.89 \\
 &  & $(10^{-5}, 2^{-4})$ & $(10^{5}, 1)$ & $(10^{-2}, 10^{-2}, 2^{-5})$ & $(10^{-4}, 10^{-5}, 2^{5})$ & $(10^{-5}, 10^{-5}, 2^{5})$ & $(10^{-5}, 10^{-5}, 2^{4})$ \\
 & 15 \% & 88.89 & 100 & 94.44 & 98.61 & 88.89 & 88.89 \\
 &  & $(10^{-5}, 2^{-5})$ & $(10^{5}, 2^{1})$ & $(10^{-2}, 10^{-2}, 2^{-5})$ & $(10^{-5}, 10^{-3}, 2^{3})$ & $(10^{-5}, 10^{-3}, 2^{4})$ & $(10^{-5}, 10^{-5}, 2^{5})$ \\
 & 20 \% & 88.89 & 88.89 & 95.83 & 88.89 & 88.89 & 88.89 \\
 &  & $(10^{-5}, 2^{-5})$ & $(10^{5}, 2^{1})$ & $(10^{-5}, 10^{-5}, 2^{-5})$ & $(10^{-5}, 10^{-5}, 2^{5})$ & $(10^{-5}, 10^{-2}, 2^{3})$ & $(10^{-1}, 10^{-3}, 2^{4})$ \\ \hline
 \multicolumn{8}{l}{$^{\dagger}$ represents the proposed models. ACC represents the accuracy metric.}
\end{tabular}}
\end{table*}

\begin{table*}[htp]
\ContinuedFloat
\centering
    \caption{(Continued)}
    \resizebox{0.8\textwidth}{!}{                                                                                          
    \begin{tabular}{cccccccc}
\hline
Dataset & Noise & \begin{tabular}[c]{@{}c@{}}SVM \cite{cortes1995support} \\ ACC (\%)\\ ($d_1$, $\sigma$)\end{tabular} & \begin{tabular}[c]{@{}c@{}}GBSVM (PSO) \cite{xia2022gbsvm}\\ ACC (\%)\\ ($d_1$, $\sigma$)\end{tabular} & \begin{tabular}[c]{@{}c@{}}TSVM \cite{khemchandani2007twin} \\ ACC (\%)\\ ($d_1$, $d_2$, $\sigma$)\end{tabular} & \begin{tabular}[c]{@{}c@{}}GBTSVM$^{\dagger}$\\ ACC (\%)\\ ($d_1$, $d_2$, $\sigma$)\end{tabular} & \begin{tabular}[c]{@{}c@{}}LS-GBTSVM$^{\dagger}$\\ ACC (\%)\\ ($d_1$, $d_3$, $\sigma$)\end{tabular} & \begin{tabular}[c]{@{}c@{}}LS-GBSVM (SMO)$^{\dagger}$\\ ACC (\%)\\ ($d_1$, $d_3$, $\sigma$)\end{tabular} \\ \hline
{\begin{tabular}[c]{@{}c@{}}ecoli-0-1-4-6\_vs\_5\\ (280 x 7)\end{tabular}} & 0 \% & 98.81 & 100 & 100 & 97.62 & 94.05 & 94.05 \\
 &  & $(1, 1)$ & $(10^{5}, 2^{5})$ & $(10^{-2}, 10^{-1}, 2^{-3})$ & $(1, 1, 2^{5})$ & $(10^{4}, 10^{-5}, 2^{5})$ & $(10^{-4}, 10^{-4}, 2^{5})$ \\
 & 5 \% & 94.05 & 88.92 & 92.81 & 98.81 & 94.05 & 94.05 \\
 &  & $(10^{-5}, 2^{-5})$ & $(10^{5}, 2^{5})$ & $(10^{-5}, 10^{-5}, 2^{-5})$ & $(10^{-1}, 10^{-1}, 2^{2})$ & $(10^{5}, 10^{-4}, 2^{5})$ & $(10^{-4}, 10^{-2}, 2^{1})$ \\
 & 10 \% & 94.05 & 94.05 & 94.05 & 97.62 & 96.43 & 94.05 \\
 &  & $(10^{-5}, 2^{-5})$ & $(10^{5}, 2^{5})$ & $(10^{-2}, 10^{-2}, 2^{-5})$ & $(10^{-1}, 10^{-1}, 2^{2})$ & $(10^{-5}, 10^{-5}, 2^{5})$ & $(10^{-2}, 10^{-4}, 2^{1})$ \\
 & 15 \% & 94.05 & 90.87 & 94.05 & 96.43 & 94.05 & 94.05 \\
 &  & $(10^{-5}, 2^{-5})$ & $(10^{5}, 2^{5})$ & $(10^{-2}, 10^{-2}, 2^{-5})$ & $(10^{-1}, 10^{-1}, 2^{2})$ & $(10^{-5}, 10^{-2}, 2^{2})$ & $(10^{-1}, 10^{-2}, 2^{4})$ \\
 & 20 \% & 94.05 & 73.1 & 90.62 & 94.05 & 94.05 & 94.05 \\
 &  & $(10^{-5}, 2^{-5})$ & $(10^{5}, 2^{5})$ & $(10^{-5}, 10^{-5}, 2^{-5})$ & $(10^{-1}, 10^{-1}, 2^{2})$ & $(10^{-5}, 10^{-5}, 2^{5})$ & $(10^{-4}, 10^{-2}, 2^{5})$ \\ \hline
{\begin{tabular}[c]{@{}c@{}}ecoli-0-1-4-7\_vs\_2-3-5-6\\ (336 x 8)\end{tabular}} & 0 \% & 87.13 & 82.69 & 96.04 & 88.12 & 87.13 & 87.13 \\
 &  & $(10^{-5}, 2^{-5})$ & $(10^{5}, 2^{3})$ & $(10^{-1}, 1, 2^{-1})$ & $(1, 10^{-1}, 2^{2})$ & $(10^{-4}, 10^{-5}, 2^{5})$ & $(10^{-5}, 10^{-5}, 2^{5})$ \\
 & 5 \% & 87.13 & 80.54 & 90.04 & 96.04 & 87.13 & 87.13 \\
 &  & $(10^{-5}, 2^{-5})$ & $(10^{5}, 2^{1})$ & $(10^{-1}, 10^{-1}, 2^{-2})$ & $(10^{-1}, 1, 2^{2})$ & $(10^{-5}, 10^{-5}, 2^{3})$ & $(10^{-5}, 10^{-5}, 2^{5})$ \\
 & 10 \% & 87.13 & 85.69 & 83.07 & 88.12 & 87.13 & 87.13 \\
 &  & $(10^{-5}, 2^{-5})$ & $(10^{5}, 2^{2})$ & $(10^{-2}, 10^{-2}, 2^{-5})$ & $(1, 10^{1}, 2^{5})$ & $(10^{-5}, 10^{-3}, 2^{4})$ & $(10^{-5}, 10^{-5}, 2^{5})$ \\
 & 15 \% & 87.13 & 92.54 & 93.07 & 91.09 & 87.13 & 87.13 \\
 &  & $(10^{-5}, 2^{-5})$ & $(10^{5}, 2^{2})$ & $(1, 10^{-2}, 2^{-5})$ & $(10^{-1}, 10^{-1}, 2^{5})$ & $(10^{-5}, 10^{-2}, 2^{3})$ & $(10^{-5}, 10^{-5}, 2^{5})$ \\
 & 20 \% & 87.13 & 73.47 & 93.07 & 93.07 & 87.13 & 87.13 \\
 &  & $(10^{-5}, 2^{-5})$ & $(10^{5}, 2^{-2})$ & $(10^{-1}, 10^{-2}, 2^{-3})$ & $(1, 10^{-1}, 2^{5})$ & $(10^{-5}, 10^{-2}, 2^{4})$ & $(10^{-5}, 10^{-5}, 2^{5})$ \\ \hline
 {\begin{tabular}[c]{@{}c@{}}ecoli-0-1-4-7\_vs\_5-6\\ (332 x 7)\end{tabular}} & 0 \% & 91 & 80 & 96 & 94 & 97 & 93 \\
 &  & $(10^{1}, 1)$ & $(10^{5}, 2^{5})$ & $(10^{-3}, 10^{-2}, 2^{2})$ & $(1, 1, 2^{5})$ & $(10^{-4}, 10^{-2}, 2^{5})$ & $(10^{-5}, 10^{-3}, 2^{5})$ \\
 & 5 \% & 93 & 94 & 96 & 94 & 97 & 94 \\
 &  & $(10^{-5}, 2^{-5})$ & $(10^{5}, 2^{5})$ & $(10^{-2}, 10^{-2}, 2^{-5})$ & $(10^{-1}, 10^{-1}, 2^{5})$ & $(10^{-4}, 10^{-2}, 2^{2})$ & $(10^{-3}, 10^{-2}, 2^{4})$ \\
 & 10 \% & 93 & 87.65 & 93 & 94 & 93 & 93 \\
 &  & $(10^{-5}, 2^{-5})$ & $(10^{5}, 2^{5})$ & $(10^{-5}, 10^{-5}, 2^{-5})$ & $(1, 1, 2^{5})$ & $(10^{-5}, 10^{-5}, 2^{3})$ & $(10^{-2}, 10^{-1}, 2^{1})$ \\
 & 15 \% & 93 & 88 & 95 & 94 & 96 & 93 \\
 &  & $(10^{-5}, 2^{-5})$ & $(10^{5}, 2^{5})$ & $(1, 10^{-2}, 2^{-4})$ & $(1, 1, 2^{5})$ & $(10^{2}, 10^{-2}, 2^{4})$ & $(10^{-4}, 10^{-2}, 2^{1})$ \\
 & 20 \% & 91 & 93 & 92 & 95 & 93 & 93 \\
 &  & $(10^{-5}, 2^{-5})$ & $(10^{5}, 2^{5})$ & $(10^{-5}, 10^{-5}, 2^{-5})$ & $(1, 1, 2^{5})$ & $(10^{-5}, 10^{-2}, 2^{3})$ & $(10^{-4}, 10^{-3}, 2^{5})$ \\ \hline
{\begin{tabular}[c]{@{}c@{}}haber\\ (306 x 4)\end{tabular}} & 0 \% & 82.61 & 57.61 & 75.35 & 77.17 & 82.61 & 82.61 \\
 &  & $(10^{-5}, 2^{-5})$ & $(10^{5}, 2^{-2})$ & $(1, 10^{-1}, 2^{-3})$ & $(10^{1}, 1, 2^{3})$ & $(10^{-5}, 10^{-5}, 2^{5})$ & $(10^{-4}, 10^{-2}, 2^{4})$ \\
 & 5 \% & 82.61 & 72.1 & 70.26 & 75 & 82.61 & 82.61 \\
 &  & $(10^{-5}, 2^{-5})$ & $(10^{5}, 2^{-2})$ & $(1, 10^{-1}, 2^{-3})$ & $(1, 1, 2^{5})$ & $(10^{-5}, 10^{-5}, 2^{5})$ & $(10^{-2}, 10^{-4}, 2^{4})$ \\
 & 10 \% & 82.61 & 74 & 75.35 & 77.17 & 77.39 & 82.61 \\
 &  & $(10^{-5}, 2^{-5})$ & $(10^{5}, 2^{-2})$ & $(1, 10^{-1}, 2^{-3})$ & $(10^{1}, 1, 2^{3})$ & $(10^{-5}, 10^{-5}, 2^{5})$ & $(10^{-5}, 10^{-5}, 2^{5})$ \\
 & 15 \% & 82.61 & 80.67 & 79.35 & 79.35 & 82.61 & 82.61 \\
 &  & $(10^{-5}, 2^{-5})$ & $(10^{5}, 2^{1})$ & $(1, 10^{-1}, 2^{-3})$ & $(1, 1, 2^{5})$ & $(10^{-5}, 10^{-5}, 2^{5})$ & $(10^{-5}, 10^{-5}, 2^{5})$ \\
 & 20 \% & 82.61 & 78.7 & 78.26 & 78.26 & 82.61 & 82.61 \\
 &  & $(10^{-5}, 2^{-5})$ & $(10^{5}, 2^{-2})$ & $(1, 10^{-1}, 2^{-3})$ & $(10^{1}, 1, 2^{3})$ & $(10^{-5}, 10^{-5}, 2^{5})$ & $(10^{-4}, 10^{-5}, 2^{5})$ \\ \hline
{\begin{tabular}[c]{@{}c@{}}haberman\\ (306 x 4)\end{tabular}} & 0 \% & 81.52 & 57.61 & 75.35 & 77.17 & 82.61 & 82.61 \\
 &  & $(10^{-1}, 1)$ & $(10^{5}, 2^{-2})$ & $(10^{-1}, 1, 2^{-2})$ & $(1, 10^{1}, 2^{2})$ & $(10^{-5}, 10^{-5}, 2^{5})$ & $(10^{-4}, 10^{-5}, 2^{5})$ \\
 & 5 \% & 81.52 & 72.1 & 70.26 & 75 & 82.61 & 82.61 \\
 &  & $(10^{-1}, 1)$ & $(10^{5}, 2^{-2})$ & $(10^{-1}, 1, 2^{-3})$ & $(1, 1, 2^{5})$ & $(10^{-5}, 10^{-5}, 2^{5})$ & $(10^{-4}, 10^{-3}, 2^{4})$ \\
 & 10 \% & 82.61 & 79 & 79.35 & 77.17 & 77.39 & 82.61 \\
 &  & $(10^{-5}, 2^{-5})$ & $(10^{5}, 2^{-2})$ & $(10^{-1}, 1, 2^{-3})$ & $(1, 10^{1}, 2^{4})$ & $(10^{-5}, 10^{-5}, 2^{5})$ & $(10^{-4}, 10^{-5}, 2^{5})$ \\
 & 15 \% & 82.61 & 80.67 & 79.35 & 79.35 & 82.61 & 82.61 \\
 &  & $(10^{-5}, 2^{-5})$ & $(10^{5}, 2^{1})$ & $(10^{-1}, 1, 2^{-3})$ & $(1, 1, 2^{5})$ & $(10^{-5}, 10^{-5}, 2^{5})$ & $(10^{-3}, 10^{-3}, 2^{5})$ \\
 & 20 \% & 82.61 & 75.7 & 78.26 & 78.26 & 82.61 & 82.61 \\
 &  & $(10^{-5}, 2^{-5})$ & $(10^{5}, 2^{-2})$ & $(10^{-1}, 1, 2^{-3})$ & $(1, 10^{1}, 2^{4})$ & $(10^{-5}, 10^{-5}, 2^{5})$ & $(10^{-5}, 10^{-5}, 2^{2})$ \\ \hline
{\begin{tabular}[c]{@{}c@{}}haberman\_survival\\ (306 x 4)\end{tabular}} & 0 \% & 82.61 & 57.61 & 79.35 & 79.35 & 82.61 & 82.61 \\
 &  & $(10^{-5}, 2^{-5})$ & $(10^{5}, 2^{5})$ & $(1, 10^{-1}, 2^{-5})$ & $(10^{1}, 10^{1}, 2^{5})$ & $(10^{-5}, 10^{-2}, 2^{4})$ & $(10^{-2}, 10^{-2}, 2^{3})$ \\
 & 5 \% & 82.61 & 70.97 & 78.26 & 79.35 & 82.61 & 82.61 \\
 &  & $(10^{-5}, 2^{-5})$ & $(10^{5}, 2^{5})$ & $(1, 10^{-1}, 2^{-5})$ & $(10^{-1}, 1, 2^{5})$ & $(10^{-5}, 10^{-5}, 2^{3})$ & $(10^{-5}, 10^{-5}, 2^{5})$ \\
 & 10 \% & 75.85 & 78.59 & 79.35 & 79.35 & 82.61 & 82.61 \\
 &  & $(10^{-5}, 2^{-5})$ & $(10^{5}, 2^{5})$ & $(1, 10^{-1}, 2^{-5})$ & $(10^{1}, 10^{1}, 2^{5})$ & $(10^{-5}, 10^{-5}, 2^{5})$ & $(10^{-5}, 10^{-5}, 2^{5})$ \\
 & 15 \% & 82.61 & 79.42 & 79.35 & 71.74 & 82.61 & 82.61 \\
 &  & $(10^{-5}, 2^{-5})$ & $(10^{5}, 2^{5})$ & $(1, 10^{-1}, 2^{-5})$ & $(10^{-1}, 1, 2^{5})$ & $(10^{-5}, 10^{-5}, 2^{5})$ & $(10^{-5}, 10^{-5}, 2^{5})$ \\
 & 20 \% & 82.61 & 75.65 & 78.26 & 78.26 & 82.61 & 82.61 \\
 &  & $(10^{-5}, 2^{-5})$ & $(10^{5}, 2^{5})$ & $(1, 10^{-1}, 2^{-5})$ & $(10^{-1}, 10^{1}, 2^{-5})$ & $(10^{-5}, 10^{-5}, 2^{5})$ & $(10^{-5}, 10^{-5}, 2^{5})$ \\ \hline
 {\begin{tabular}[c]{@{}c@{}}heart-stat\\ (270 x 14)\end{tabular}} & 0 \% & 56.79 & 77.65 & 70.37 & 79.01 & 76.79 & 76.79 \\
 &  & $(10^{-5}, 2^{-5})$ & $(10^{1}, 2^{5})$ & $(10^{-1}, 10^{-1}, 2^{-5})$ & $(10^{-5}, 10^{-5}, 2^{5})$ & $(10^{-2}, 10^{-5}, 2^{2})$ & $(10^{-5}, 10^{-2}, 2^{3})$ \\
 & 5 \% & 56.79 & 65.78 & 69.14 & 83.95 & 76.79 & 76.79 \\
 &  & $(10^{-5}, 2^{-5})$ & $(10^{5}, 2^{5})$ & $(10^{-1}, 10^{-1}, 2^{-5})$ & $(10^{1}, 10^{-1}, 2^{3})$ & $(10^{-5}, 10^{-5}, 2^{2})$ & $(10^{-5}, 10^{-5}, 2^{5})$ \\
 & 10 \% & 56.79 & 86.42 & 71.6 & 79.01 & 73.21 & 76.79 \\
 &  & $(10^{-5}, 2^{-5})$ & $(10^{4}, 2^{5})$ & $(10^{-1}, 10^{-1}, 2^{-5})$ & $(10^{-5}, 10^{-5}, 2^{5})$ & $(10^{-5}, 10^{-5}, 2^{5})$ & $(10^{-4}, 10^{-4}, 2^{3})$ \\
 & 15 \% & 56.79 & 66.67 & 72.84 & 82.72 & 76.79 & 76.79 \\
 &  & $(10^{-5}, 2^{-5})$ & $(10^{5}, 2^{5})$ & $(10^{-1}, 10^{-1}, 2^{-5})$ & $(10^{-1}, 10^{1}, 2^{5})$ & $(10^{-5}, 10^{-5}, 2^{5})$ & $(10^{-4}, 10^{-4}, 2^{3})$ \\
 & 20 \% & 76.79 & 80.89 & 74.07 & 86.42 & 86.79 & 86.79 \\
 &  & $(10^{-4}, 2^{3})$ & $(10^{4}, 2^{3})$ & $(10^{-1}, 10^{-1}, 2^{-5})$ & $(10^{-1}, 10^{1}, 2^{5})$ & $(10^{-3}, 10^{-2}, 2^{2})$ & $(10^{-3}, 10^{-4}, 2^{5})$ \\ \hline
{\begin{tabular}[c]{@{}c@{}}led7digit-0-2-4-5-6-7-8-9\_vs\_1\\ (443 x 8)\end{tabular}} & 0 \% & 81.95 & 100 & 93.98 & 94.74 & 93.23 & 93.23 \\
 &  & $(1, 1)$ & $(10^{5}, 2^{5})$ & $(10^{-5}, 10^{-4}, 2^{-3})$ & $(10^{-5}, 10^{-5}, 2^{5})$ & $(10^{-5}, 10^{-5}, 2^{3})$ & $(10^{-5}, 10^{-5}, 2^{5})$ \\
 & 5 \% & 93.98 & 93.23 & 93.98 & 95.49 & 93.98 & 93.98 \\
 &  & $(10^{-1}, 1)$ & $(10^{3}, 2^{3})$ & $(10^{-5}, 10^{-5}, 2^{-5})$ & $(10^{-1}, 1, 2^{5})$ & $(10^{-5}, 10^{-2}, 2^{2})$ & $(10^{-5}, 10^{-5}, 2^{5})$ \\
 & 10 \% & 93.98 & 90.87 & 91.73 & 94.74 & 93.23 & 93.23 \\
 &  & $(10^{-1}, 1)$ & $(10^{5}, 2^{5})$ & $(10^{-2}, 10^{-2}, 2^{-5})$ & $(10^{-5}, 10^{-5}, 2^{5})$ & $(10^{-5}, 10^{-3}, 2^{4})$ & $(10^{-5}, 10^{-5}, 2^{5})$ \\
 & 15 \% & 87.22 & 80.76 & 81.73 & 84.21 & 93.23 & 93.23 \\
 &  & $(10^{-1}, 1)$ & $(10^{5}, 2^{5})$ & $(10^{-5}, 10^{-5}, 2^{-5})$ & $(1, 1, 2^{5})$ & $(10^{-5}, 10^{-5}, 2^{4})$ & $(10^{-5}, 10^{-5}, 2^{5})$ \\
 & 20 \% & 93.23 & 92.89 & 90.23 & 93.23 & 93.23 & 93.23 \\
 &  & $(10^{-5}, 2^{-5})$ & $(10^{5}, 2^{5})$ & $(10^{-5}, 10^{-5}, 2^{-5})$ & $(10^{-2}, 10^{-3}, 2^{5})$ & $(10^{-5}, 10^{-3}, 2^{3})$ & $(10^{-5}, 10^{-5}, 2^{5})$ \\ \hline
  \multicolumn{8}{l}{$^{\dagger}$ represents the proposed models. ACC represents the accuracy metric.}
\end{tabular}}
\end{table*}

\begin{table*}[htp]
\ContinuedFloat
\centering
    \caption{(Continued)}
    \resizebox{0.78\textwidth}{!}{                                                                                          
    \begin{tabular}{cccccccc}
\hline
Dataset & Noise & \begin{tabular}[c]{@{}c@{}}SVM \cite{cortes1995support} \\ ACC (\%)\\ ($d_1$, $\sigma$)\end{tabular} & \begin{tabular}[c]{@{}c@{}}GBSVM (PSO) \cite{xia2022gbsvm}\\ ACC (\%)\\ ($d_1$, $\sigma$)\end{tabular} & \begin{tabular}[c]{@{}c@{}}TSVM \cite{khemchandani2007twin} \\ ACC (\%)\\ ($d_1$, $d_2$, $\sigma$)\end{tabular} & \begin{tabular}[c]{@{}c@{}}GBTSVM$^{\dagger}$\\ ACC (\%)\\ ($d_1$, $d_2$, $\sigma$)\end{tabular} & \begin{tabular}[c]{@{}c@{}}LS-GBTSVM$^{\dagger}$\\ ACC (\%)\\ ($d_1$, $d_3$, $\sigma$)\end{tabular} & \begin{tabular}[c]{@{}c@{}}LS-GBSVM (SMO)$^{\dagger}$\\ ACC (\%)\\ ($d_1$, $d_3$, $\sigma$)\end{tabular} \\ \hline
 {\begin{tabular}[c]{@{}c@{}}mammographic\\ (961 x 6)\end{tabular}} & 0 \% & 52.94 & 75.76 & 79.93 & 81.31 & 82.94 & 79.09 \\
 &  & $(10^{-5}, 2^{-5})$ & $(10^{5}, 2^{5})$ & $(10^{-1}, 10^{-1}, 2^{-5})$ & $(10^{-1}, 10^{-1}, 2^{5})$ & $(10^{-5}, 10^{-1}, 2^{3})$ & $(10^{-3}, 10^{-2}, 2^{3})$ \\
 & 5 \% & 52.94 & 80 & 81.66 & 81.31 & 82.94 & 82.94 \\
 &  & $(10^{-4}, 2^{1})$ & $(10^{5}, 2^{5})$ & $(10^{-1}, 10^{-1}, 2^{-5})$ & $(1, 1, 2^{5})$ & $(10^{-4}, 1, 2^{4})$ & $(10^{-4}, 10^{-1}, 2^{1})$ \\
 & 10 \% & 72.94 & 80.76 & 81.31 & 81.31 & 77.06 & 72.94 \\
 &  & $(1, 2^{-4})$ & $(10^{5}, 2^{5})$ & $(10^{-1}, 10^{-1}, 2^{-5})$ & $(10^{-1}, 10^{-1}, 2^{5})$ & $(10^{1}, 10^{-3}, 2^{4})$ & $(1, 1, 2^{5})$ \\
 & 15 \% & 52.94 & 100 & 81.66 & 82.35 & 80.06 & 82.94 \\
 &  & $(1, 2^{-4})$ & $(10^{5}, 2^{5})$ & $(10^{-1}, 10^{-1}, 2^{-5})$ & $(1, 1, 2^{5})$ & $(1, 10^{-4}, 2^{4})$ & $(10^{-4}, 10^{-4}, 2^{5})$ \\
 & 20 \% & 68.13 & 74.05 & 80.28 & 79.58 & 79.94 & 79.94 \\
 &  & $(10^{-1}, 2^{-1})$ & $(10^{4}, 2^{5})$ & $(10^{-1}, 10^{-1}, 2^{-5})$ & $(1, 1, 2^{5})$ & $(1, 10^{-5}, 2^{4})$ & $(10^{-4}, 10^{-3}, 2^{2})$ \\ \hline
 {\begin{tabular}[c]{@{}c@{}}monks\_3\\ (554 x 7)\end{tabular}} & 0 \% & 46.11 & 69.52 & 75.21 & 80.24 & 76.11 & 76.11 \\
 &  & $(10^{-5}, 2^{-5})$ & $(10^{5}, 2^{5})$ & $(1, 10^{-1}, 2^{-2})$ & $(1, 10^{1}, 2^{5})$ & $(10^{-5}, 10^{-1}, 2^{5})$ & $(10^{-3}, 10^{-2}, 2^{5})$ \\
 & 5 \% & 69.89 & 70.67 & 74.61 & 77.84 & 76.11 & 76.11 \\
 &  & $(10^{-5}, 2^{-5})$ & $(10^{5}, 2^{5})$ & $(10^{-1}, 10^{-1}, 2^{-3})$ & $(1, 10^{1}, 2^{5})$ & $(10^{-1}, 10^{-1}, 2^{5})$ & $(10^{-4}, 10^{-5}, 2^{5})$ \\
 & 10 \% & 66.11 & 71.26 & 81.62 & 80.24 & 78.26 & 76.11 \\
 &  & $(10^{-5}, 2^{-5})$ & $(10^{5}, 2^{5})$ & $(10^{-1}, 10^{-1}, 2^{-3})$ & $(1, 10^{1}, 2^{5})$ & $(10^{1}, 10^{-2}, 2^{3})$ & $(10^{-5}, 10^{-5}, 2^{4})$ \\
 & 15 \% & 76.11 & 80 & 85.03 & 80.84 & 83.89 & 86.11 \\
 &  & $(10^{-5}, 2^{-5})$ & $(10^{5}, 2^{5})$ & $(10^{-1}, 10^{-1}, 2^{-5})$ & $(10^{1}, 1, 2^{5})$ & $(10^{4}, 10^{-3}, 2^{5})$ & $(10^{-1}, 10^{-1}, 2^{5})$ \\
 & 20 \% & 76.11 & 70.06 & 79.64 & 82.04 & 83.89 & 86.11 \\
 &  & $(10^{-5}, 2^{-5})$ & $(10^{5}, 2^{5})$ & $(10^{-1}, 10^{-1}, 2^{-5})$ & $(1^{1}, 10^{1}, 2^{5})$ & $(1, 10^{-5}, 2^{1})$ & $(10^{-1}, 10^{-1}, 2^{5})$ \\ \hline
 {\begin{tabular}[c]{@{}c@{}}mushroom\\ (8124 x 22)\end{tabular}} & 0 \% & 63.41 & 71.89 & 70.65 & 84.91 & 70.86 & 70.86 \\
 &  & $(10^{-3}, 2^{1})$ & $(10^{2}, 2^{-5})$ & $(10^{2}, 10^{3}, 2^{5})$ & $(1, 10^{-1}, 2^{3})$ & $(10^{-2}, 10^{-5}, 2^{3})$ & $(10^{-4}, 10^{-5}, 2^{5})$ \\
 & 5 \% & 57.88 & 100 & 90.02 & 94.38 & 90.86 & 90.86 \\
 &  & $(10^{-3}, 2^{1})$ & $(10^{3}, 2^{-5})$ & $(10^{-1}, 10^{3}, 2^{-2})$ & $(10^{-1}, 10^{-1}, 2^{5})$ & $(1, 10^{-3}, 2^{3})$ & $(10^{-4}, 10^{-5}, 2^{5})$ \\
 & 10 \% & 52.91 & 85.42 & 88.03 & 99.67 & 86.46 & 86.46 \\
 &  & $(10^{-3}, 2^{1})$ & $(10^{1}, 2^{-4})$ & $(10^{2}, 10^{-1}, 2^{-4})$ & $(10^{-1}, 10^{-1}, 2^{5})$ & $(10^{-5}, 1, 2^{5})$ & $(10^{-4}, 10^{-5}, 2^{5})$ \\
 & 15 \% & 80.86 & 80.67 & 89.89 & 99.84 & 98.78 & 98.86 \\
 &  & $(10^{-3}, 2^{1})$ & $(10^{3}, 2^{4})$ & $(10^{-1}, 10^{-1}, 2^{-5})$ & $(1, 1, 2^{5})$ & $(10^{-5}, 10^{-5}, 2^{5})$ & $(10^{-4}, 10^{-5}, 2^{5})$ \\
 & 20\% & 80.86 & 82.76 & 90.78 & 99.71 & 95.67 & 95.24 \\
 &  & $(10^{-3}, 2^{1})$ & $(10^{5}, 2^{5})$ & $(10^{3}, 10^{-1}, 2^{-1})$ & $(1, 1, 2^{5})$ & $(10^{-5}, 10^{-5}, 2^{5})$ & $(10^{-4}, 10^{-5}, 2^{5})$ \\ \hline
{\begin{tabular}[c]{@{}c@{}}musk\_1\\ (476 x 167)\end{tabular}} & 0 \% & 53.15 & 46.85 & 83.15 & 91.61 & 89.23 & 83.15 \\
 &  & $(10^{-5}, 2^{-5})$ & $(10^{-1}, 2^{5})$ & $(10^{-5}, 10^{-5}, 2^{-5})$ & $(10^{-1}, 1, 2^{5})$ & $(10^{-2}, 10^{-3}, 2^{5})$ & $(10^{-4}, 10^{-5}, 2^{5})$ \\
 & 5 \% & 53.15 & 41.96 & 53.15 & 89.51 & 89.23 & 53.15 \\
 &  & $(10^{-5}, 2^{-5})$ & $(10^{5}, 2^{5})$ & $(10^{-4}, 10^{-5}, 2^{-5})$ & $(10^{-1}, 1, 2^{5})$ & $(10^{4}, 10^{-5}, 2^{3})$ & $(10^{-5}, 10^{-5}, 2^{5})$ \\
 & 10 \% & 53.15 & 61.54 & 53.15 & 91.61 & 55.24 & 53.15 \\
 &  & $(10^{-5}, 2^{-5})$ & $(10^{4}, 2^{5})$ & $(10^{-4}, 10^{-5}, 2^{-5})$ & $(10^{-1}, 1, 2^{5})$ & $(10^{-4}, 10^{-3}, 2^{1})$ & $(10^{-5}, 10^{-5}, 2^{5})$ \\
 & 15 \% & 69.23 & 52.27 & 53.15 & 81.12 & 46.85 & 53.15 \\
 &  & $(10^{-5}, 2^{-5})$ & $(10^{1}, 2^{3})$ & $(10^{-4}, 10^{-5}, 2^{-5})$ & $(10^{-5}, 10^{-5}, 2^{5})$ & $(10^{1}, 10^{-4}, 2^{4})$ & $(10^{-3}, 10^{-3}, 2^{5})$ \\
 & 20 \% & 58.04 & 51.76 & 53.15 & 53.15 & 56.64 & 53.15 \\
 &  & $(10^{-5}, 2^{-4})$ & $(1, 1)$ & $(10^{-4}, 10^{-5}, 2^{-5})$ & $(10^{-1}, 10^{-5}, 2^{4})$ & $(10^{-1}, 10^{-3}, 2^{4})$ & $(10^{-5}, 10^{-5}, 2^{4})$ \\ \hline
{\begin{tabular}[c]{@{}c@{}}new-thyroid1\\ (215 x 16)\end{tabular}} & 0 \% & 87.69 & 100 & 98.46 & 95.38 & 87.69 & 87.69 \\
 &  & $(10^{-5}, 2^{-5})$ & $(10^{2}, 2^{2})$ & $(10^{-2}, 10^{-2}, 2^{-3})$ & $(10^{-5}, 10^{-5}, 2^{4})$ & $(10^{-5}, 10^{-2}, 2^{2})$ & $(10^{-5}, 10^{-5}, 2^{5})$ \\
 & 5 \% & 87.69 & 100 & 96.92 & 100 & 87.69 & 87.69 \\
 &  & $(10^{-5}, 2^{-5})$ & $(10^{5}, 2^{5})$ & $(10^{-1}, 1, 2^{-2})$ & $(10^{-5}, 10^{-5}, 2^{4})$ & $(10^{-1}, 10^{-4}, 2^{4})$ & $(10^{-5}, 10^{-5}, 2^{5})$ \\
 & 10 \% & 87.69 & 100 & 98.46 & 95.38 & 87.69 & 87.69 \\
 &  & $(10^{-5}, 2^{-5})$ & $(10^{5}, 2^{5})$ & $(10^{-1}, 10^{-1}, 2^{-3})$ & $(10^{-5}, 10^{-5}, 2^{5})$ & $(10^{-5}, 10^{-2}, 2^{2})$ & $(10^{-5}, 10^{-5}, 2^{5})$ \\
 & 15 \% & 87.69 & 87.69 & 87.69 & 100 & 87.69 & 87.69 \\
 &  & $(10^{-5}, 2^{-5})$ & $(10^{5}, 2^{5})$ & $(10^{1}, 1, 2^{-2})$ & $(10^{-5}, 10^{-5}, 2^{5})$ & $(10^{2}, 10^{-2}, 2^{3})$ & $(10^{-5}, 10^{-5}, 2^{5})$ \\
 & 20 \% & 87.69 & 85.78 & 92.31 & 98.46 & 87.69 & 87.69 \\
 &  & $(10^{-5}, 2^{-5})$ & $(10^{5}, 2^{5})$ & $(10^{-5}, 10^{-5}, 2^{-5})$ & $(10^{-5}, 10^{-5}, 2^{5})$ & $(10^{-4}, 10^{-5}, 2^{3})$ & $(10^{-5}, 10^{-5}, 2^{5})$ \\ \hline
{\begin{tabular}[c]{@{}c@{}}oocytes\_merluccius\_nucleus\_4d\\ (1022 x 42)\end{tabular}} & 0 \% & 64.82 & 64.82 & 76.22 & 77.2 & 74.82 & 74.82 \\
 &  & $(10^{-5}, 2^{-5})$ & $(10^{5}, 2^{5})$ & $(10^{-2}, 10^{-1}, 2^{-5})$ & $(1, 1, 2^{4})$ & $(10^{-5}, 10^{-2}, 2^{4})$ & $(10^{-4}, 10^{-4}, 2^{3})$ \\
 & 5 \% & 64.82 & 64.82 & 70.18 & 77.2 & 74.82 & 64.82 \\
 &  & $(10^{-5}, 2^{-5})$ & $(10^{5}, 2^{4})$ & $(10^{-2}, 10^{-2}, 2^{-4})$ & $(1, 10, 2^{5})$ & $(10^{-5}, 10^{-2}, 2^{2})$ & $(10^{-3}, 10^{-2}, 2^{1})$ \\
 & 10 \% & 64.82 & 73.67 & 74.92 & 77.2 & 75.05 & 74.82 \\
 &  & $(10^{-5}, 2^{-5})$ & $(10^{5}, 2^{1})$ & $(10^{-5}, 10^{-5}, 2^{-5})$ & $(1, 1, 2^{5})$ & $(10^{-1}, 10^{-3}, 2^{3})$ & $(10^{-4}, 10^{-4}, 2^{5})$ \\
 & 15 \% & 64.82 & 70.78 & 76.55 & 77.85 & 74.82 & 74.82 \\
 &  & $(10^{-5}, 2^{-5})$ & $(10^{5}, 2^{5})$ & $(10^{-5}, 10^{-5}, 2^{-5})$ & $(10^{2}, 10^{1}, 2^{3})$ & $(10^{-5}, 10^{-5}, 2^{4})$ & $(10^{-4}, 10^{-4}, 2^{4})$ \\
 & 20 \% & 64.82 & 70.78 & 71.34 & 74.59 & 74.82 & 74.82 \\
 &  & $(10^{-5}, 2^{-5})$ & $(10^{5}, 2^{5})$ & $(10^{-5}, 10^{-5}, 2^{-5})$ & $(10^{1}, 10^{1}, 2^{5})$ & $(10^{-5}, 10^{-5}, 2^{5})$ & $(10^{-4}, 10^{-4}, 2^{5})$ \\ \hline
 {\begin{tabular}[c]{@{}c@{}}ozone\\ (2536 x 6)\end{tabular}} & 0 \% & 96.58 & 80 & 96.58 & 94.09 & 96.58 & 96.58 \\
 &  & $(10^{-5}, 2^{-5})$ & $(10^{5}, 2^{5})$ & $(10^{-5}, 10^{-5}, 2^{-5})$ & $(10^{1}, 1, 2^{5})$ & $(10^{-5}, 10^{-5}, 2^{3})$ & $(10^{-5}, 10^{-5}, 2^{5})$ \\
 & 5 \% & 94.58 & 80.67 & 94.58 & 96.58 & 96.58 & 96.58 \\
 &  & $(10^{-5}, 2^{-5})$ & $(10^{2}, 2^{2})$ & $(10^{-5}, 10^{-5}, 2^{-5})$ & $(10^{-5}, 10^{-5}, 2^{5})$ & $(10^{-5}, 10^{-2}, 2^{-1})$ & $(10^{-5}, 10^{-5}, 2^{5})$ \\
 & 10 \% & 96.58 & 96.58 & 96.58 & 96.58 & 96.58 & 96.58 \\
 &  & $(10^{-5}, 2^{-5})$ & $(10^{2}, 2^{-2})$ & $(10^{-5}, 10^{-5}, 2^{-5})$ & $(10^{-5}, 10^{-5}, 2^{5})$ & $(10^{-5}, 10^{-5}, 2^{4})$ & $(10^{-5}, 10^{-5}, 2^{3})$ \\
 & 15 \% & 96.58 & 95.65 & 96.58 & 96.58 & 96.58 & 96.58 \\
 &  & $(10^{-5}, 2^{-5})$ & $(10^{1}, 2^{-4})$ & $(10^{-5}, 10^{-5}, 2^{-5})$ & $(10^{-4}, 10^{-5}, 2^{3})$ & $(10^{-5}, 10^{-3}, 2^{5})$ & $(10^{-5}, 10^{-5}, 2^{3})$ \\
 & 20 \% & 96.58 & 95.78 & 96.58 & 96.45 & 96.58 & 96.58 \\
 &  & $(10^{-5}, 2^{-5})$ & $(10^{2}, 2^{-5})$ & $(10^{-5}, 10^{-5}, 2^{-5})$ & $(10^{-5}, 10^{-5}, 2^{3})$ & $(10^{-5}, 10^{-3}, 2^{5})$ & $(10^{-5}, 10^{-5}, 2^{4})$ \\ \hline
{\begin{tabular}[c]{@{}c@{}}ringnorm\\ (7400 x 21)\end{tabular}} & 0 \% & 90.42 & 92.95 & 92.65 & 96.94 & 91.8 & 88.2 \\
 &  & $(1, 2^{1})$ & $(10^{5}, 2^{5})$ & $(10^{-5}, 10^{-5}, 2^{-5})$ & $(10^{1}, 1, 2^{5})$ & $(10^{-5}, 10^{-5}, 2^{5})$ & $(10^{-4}, 10^{-3}, 2^{2})$ \\
 & 5 \% & 98.51 & 100 & 90.14 & 90.95 & 91.8 & 91.8 \\
 &  & $(10^{-3}, 2^{1})$ & $(10^{-5}, 2^{5})$ & $(10^{-5}, 10^{-5}, 2^{-5})$ & $(1, 1, 2^{2})$ & $(10^{2}, 10^{-3}, 2^{5})$ & $(10^{-3}, 10^{-3}, 2^{5})$ \\
 & 10 \% & 91.6 & 87.45 & 93.11 & 95.09 & 88.2 & 87.8 \\
 &  & $(10^{-3}, 2^{1})$ & $(10^{5}, 2^{5})$ & $(10^{-2}, 10^{-1}, 2^{-4})$ & $(10^{1}, 10^{1}, 2^{3})$ & $(1, 10^{-5}, 2^{5})$ & $(10^{-3}, 10^{-4}, 2^{5})$ \\
 & 15 \% & 98.24 & 91.54 & 95.59 & 93.83 & 91.8 & 91.8 \\
 &  & $(10^{-3}, 2^{1})$ & $(10^{5}, 2^{5})$ & $(10^{-2}, 10^{-1}, 2^{-4})$ & $(10^{-1}, 10^{-1}, 2^{5})$ & $(10^{-3}, 10^{-5}, 2^{5})$ & $(10^{-5}, 10^{-5}, 2^{5})$ \\
 & 20 \% & 97.12 & 92.67 & 97.52 & 90.99 & 88.2 & 88.2 \\
 &  & $(10^{-3}, 2^{1})$ & $(10^{5}, 2^{5})$ & $(10^{-2}, 10^{-1}, 2^{-4})$ & $(10^{-1}, 10^{-1}, 2^{5})$ & $(1, 10^{-3}, 2^{3})$ & $(10^{-5}, 10^{-5}, 2^{5})$ \\ \hline
 {\begin{tabular}[c]{@{}c@{}}shuttle-6\_vs\_2-3\\ (230 x 10)\end{tabular}} & 0 \% & 95.65 & 90.67 & 97.1 & 98.55 & 95.65 & 95.65 \\
 &  & $(10^{-5}, 2^{-5})$ & $(10^{5}, 2^{5})$ & $(10^{-5}, 10^{-4}, 2^{-4})$ & $(10^{-2}, 10^{-1}, 2^{4})$ & $(10^{-3}, 10^{-5}, 2^{4})$ & $(10^{-4}, 10^{-5}, 2^{5})$ \\
 & 5 \% & 95.65 & 90.54 & 95.65 & 98.55 & 95.65 & 95.65 \\
 &  & $(10^{-5}, 2^{-5})$ & $(10^{5}, 2^{5})$ & $(10^{-3}, 10^{-5}, 2^{-4})$ & $(10^{-1}, 10^{-1}, 2^{5})$ & $(10^{-4}, 10^{-5}, 2^{5})$ & $(10^{-5}, 10^{-5}, 2^{5})$ \\
 & 10 \% & 95.65 & 80.76 & 87.1 & 95.65 & 95.65 & 95.65 \\
 &  & $(10^{-5}, 2^{-5})$ & $(10^{-2}, 2^{3})$ & $(1, 10^{-1}, 2^{-5})$ & $(10^{-5}, 10^{-5}, 2^{5})$ & $(10^{-5}, 10^{-5}, 2^{5})$ & $(10^{-5}, 10^{-5}, 2^{5})$ \\
 & 15 \% & 95.65 & 90.67 & 95.65 & 100 & 95.65 & 95.65 \\
 &  & $(10^{-5}, 2^{-5})$ & $(10^{-2}, 2^{3})$ & $(10^{-4}, 10^{-5}, 2^{-5})$ & $(10^{-1}, 10^{-1}, 2^{3})$ & $(10^{-5}, 10^{-5}, 2^{5})$ & $(10^{-5}, 10^{-5}, 2^{5})$ \\
 & 20 \% & 95.65 & 95.65 & 95.65 & 97.1 & 95.65 & 95.65 \\
 &  & $(10^{-5}, 2^{-5})$ & $(10^{-2}, 2^{3})$ & $(10^{-3}, 10^{-5}, 2^{-5})$ & $(10^{-2}, 10^{-2}, 2^{3})$ & $(10^{-5}, 10^{-5}, 2^{5})$ & $(10^{-5}, 10^{-5}, 2^{5})$ \\ \hline
 \multicolumn{8}{l}{$^{\dagger}$ represents the proposed models. ACC represents the accuracy metric.}
\end{tabular}}
\end{table*}

\begin{table*}[htp]
\ContinuedFloat
\centering
    \caption{ (Continued)}
    \resizebox{0.78\textwidth}{!}{                                                                                          
    \begin{tabular}{cccccccc}
\hline
Dataset & Noise & \begin{tabular}[c]{@{}c@{}}SVM \cite{cortes1995support} \\ ACC (\%)\\ ($d_1$, $\sigma$)\end{tabular} & \begin{tabular}[c]{@{}c@{}}GBSVM (PSO) \cite{xia2022gbsvm}\\ ACC (\%)\\ ($d_1$, $\sigma$)\end{tabular} & \begin{tabular}[c]{@{}c@{}}TSVM \cite{khemchandani2007twin} \\ ACC (\%)\\ ($d_1$, $d_2$, $\sigma$)\end{tabular} & \begin{tabular}[c]{@{}c@{}}GBTSVM$^{\dagger}$\\ ACC (\%)\\ ($d_1$, $d_2$, $\sigma$)\end{tabular} & \begin{tabular}[c]{@{}c@{}}LS-GBTSVM$^{\dagger}$\\ ACC (\%)\\ ($d_1$, $d_3$, $\sigma$)\end{tabular} & \begin{tabular}[c]{@{}c@{}}LS-GBSVM (SMO)$^{\dagger}$\\ ACC (\%)\\ ($d_1$, $d_3$, $\sigma$)\end{tabular} \\ \hline
{\begin{tabular}[c]{@{}c@{}}spambase\\ (4601 x 58)\end{tabular}} & 0 \% & 62.2 & 70.65 & 84.79 & 90.88 & 88.49 & 82.27 \\
 &  & $(10^{-1}, 1)$ & $(10^{-5}, 2^{3})$ & $(10^{-2}, 1, 2^{-3})$ & $(10^{1}, 10^{1}, 2^{5})$ & $(10^{-4}, 10^{-2}, 2^{5})$ & $(10^{-5}, 10^{-5}, 2^{5})$ \\
 & 5 \% & 62.27 & 62 & 81.82 & 90.59 & 88.49 & 82.27 \\
 &  & $(1, 2^{-1})$ & $(10^{-2}, 2^{3})$ & $(1, 10^{-1}, 2^{-4})$ & $(10^{-2}, 10^{-2}, 2^{5})$ & $(10^{-5}, 10^{-2}, 2^{4})$ & $(10^{-4}, 10^{-4}, 2^{5})$ \\
 & 10 \% & 66.33 & 79.72 & 80.88 & 90.88 & 92.27 & 92.27 \\
 &  & $(10^{1}, 2^{-1})$ & $(10^{5}, 2^{3})$ & $(10^{-1}, 10^{-1}, 2^{-5})$ & $(10^{1}, 10^{1}, 2^{5})$ & $(10^{-4}, 1, 2^{4})$ & $(10^{-4}, 10^{-1}, 2^{3})$ \\
 & 15 \% & 67.05 & 70.87 & 79.44 & 89.93 & 72.35 & 72.27 \\
 &  & $(1, 2^{-1})$ & $(10^{-1}, 2^{5})$ & $(10^{-1}, 10^{-1}, 2^{-5})$ & $(10^{-1}, 10^{-1}, 2^{3})$ & $(10^{-3}, 10^{-1}, 2^{5})$ & $(10^{-3}, 10^{-2}, 2^{1})$ \\
 & 20 \% & 64.52 & 72.89 & 77.55 & 85.23 & 82.35 & 82.27 \\
 &  & $(1, 2^{-1})$ & $(10^{3}, 2^{-1})$ & $(10^{-1}, 10^{-1}, 2^{-5})$ & $(10^{1}, 1, 2^{4})$ & $(10^{-1}, 10^{-3}, 2^{5})$ & $(10^{-5}, 10^{-5}, 2^{5})$ \\ \hline
 {\begin{tabular}[c]{@{}c@{}}spectf\\ (267 x 45)\end{tabular}} & 0 \% & 80.25 & 80.25 & 79.42 & 85.19 & 80.25 & 80.25 \\
 &  & $(10^{-5}, 2^{-5})$ & $(10^{-5}, 2^{5})$ & $(10^{-1}, 10^{-1}, 2^{-5})$ & $(10^{-1}, 10^{-3}, 2^{4})$ & $(10^{-5}, 10^{-2}, 2^{2})$ & $(10^{-4}, 10^{-3}, 2^{3})$ \\
 & 5 \% & 80.25 & 75.31 & 86.42 & 82.72 & 80.25 & 80.25 \\
 &  & $(10^{-5}, 2^{-5})$ & $(10^{2}, 2^{5})$ & $(10^{-1}, 10^{-1}, 2^{-5})$ & $(10^{-3}, 10^{-5}, 2^{5})$ & $(10^{-1}, 10^{-3}, 2^{4})$ & $(10^{-4}, 10^{-3}, 2^{3})$ \\
 & 10 \% & 80.25 & 80.25 & 82.42 & 85.19 & 86.25 & 80.25 \\
 &  & $(10^{-5}, 2^{-5})$ & $(10^{5}, 2^{5})$ & $(10^{-1}, 10^{-1}, 2^{-5})$ & $(10^{-1}, 10^{-3}, 2^{4})$ & $(10^{-5}, 10^{-2}, 2^{3})$ & $(10^{-4}, 10^{-3}, 2^{3})$ \\
 & 15 \% & 80.25 & 80.25 & 86.42 & 85.19 & 80.25 & 80.25 \\
 &  & $(10^{-5}, 2^{-5})$ & $(10^{-5}, 2^{3})$ & $(1, 10^{-1}, 2^{-5})$ & $(10^{-1}, 10^{-1}, 2^{3})$ & $(1, 10^{-4}, 2^{5})$ & $(10^{-4}, 10^{-3}, 2^{3})$ \\
 & 20 \% & 80.25 & 81.48 & 85.19 & 87.9 & 80.25 & 80.25 \\
 &  & $(10^{-5}, 2^{-5})$ & $(1, 2^{3})$ & $(1, 10^{-1}, 2^{-5})$ & $(10^{-4}, 10^{-5}, 2^{5})$ & $(10^{-1}, 10^{-3}, 2^{3})$ & $(10^{-4}, 10^{-3}, 2^{3})$ \\ \hline
{\begin{tabular}[c]{@{}c@{}}tic\_tac\_toe\\ (958 x 10)\end{tabular}} & 0 \% & 66.32 & 95 & 95 & 98.96 & 96.32 & 96.32 \\
 &  & $(10^{-5}, 2^{-5})$ & $(10^{5}, 2^{5})$ & $(10^{-1}, 10^{-2}, 2^{-2})$ & $(10^{1}, 10^{1}, 2^{4})$ & $(10^{-3}, 10^{-5}, 2^{4})$ & $(10^{-5}, 10^{-5}, 2^{5})$ \\
 & 5 \% & 66.32 & 92.89 & 97.57 & 97.92 & 96.32 & 96.32 \\
 &  & $(10^{-5}, 2^{-5})$ & $(10^{5}, 2^{5})$ & $(10^{-1}, 10^{-2}, 2^{-5})$ & $(10^{1}, 10^{1}, 2^{4})$ & $(10^{5}, 10^{-3}, 2^{3})$ & $(10^{-5}, 10^{-5}, 2^{5})$ \\
 & 10 \% & 66.32 & 66.67 & 97.92 & 98.96 & 66.32 & 66.32 \\
 &  & $(10^{-5}, 2^{-5})$ & $(10^{5}, 2^{5})$ & $(10^{-5}, 10^{-5}, 2^{-5})$ & $(10^{1}, 10^{1}, 2^{4})$ & $(10^{-3}, 1, 2^{4})$ & $(10^{-5}, 10^{-5}, 2^{5})$ \\
 & 15 \% & 66.32 & 90.76 & 93.4 & 94.44 & 96.32 & 96.32 \\
 &  & $(10^{-5}, 2^{-5})$ & $(10^{5}, 2^{5})$ & $(10^{-5}, 10^{-5}, 2^{-5})$ & $(10^{1}, 10^{1}, 2^{4})$ & $(10^{1}, 10^{-3}, 2^{4})$ & $(10^{-4}, 10^{-5}, 2^{5})$ \\
 & 20 \% & 86.32 & 87.57 & 92.71 & 97.22 & 93.68 & 96.32 \\
 &  & $(10^{-5}, 2^{-5})$ & $(10^{5}, 2^{5})$ & $(10^{-2}, 10^{-2}, 2^{-2})$ & $(10^{1}, 10^{1}, 2^{4})$ & $(10^{-3}, 10^{-5}, 2^{3})$ & $(10^{-5}, 10^{-3}, 2^{5})$ \\ \hline
{\begin{tabular}[c]{@{}c@{}}vehicle1\\ (846 x 19)\end{tabular}} & 0 \% & 75.98 & 73.62 & 80.31 & 81.5 & 76.38 & 76.38 \\
 &  & $(10^{-5}, 2^{-5})$ & $(10^{4}, 2^{5})$ & $(10^{-2}, 10^{-2}, 2^{-5})$ & $(1, 1, 2^{5})$ & $(10^{-5}, 10^{-5}, 2^{5})$ & $(10^{-5}, 10^{-3}, 2^{2})$ \\
 & 5 \% & 76.38 & 80.89 & 76.77 & 77.95 & 76.38 & 76.38 \\
 &  & $(10^{-5}, 2^{-5})$ & $(10^{5}, 2^{5})$ & $(1, 1, 2^{-5})$ & $(10^{1}, 10^{1}, 2^{5})$ & $(10^{-5}, 10^{-5}, 2^{5})$ & $(10^{-5}, 10^{-3}, 2^{3})$ \\
 & 10 \% & 76.38 & 72.87 & 77.17 & 81.5 & 76.38 & 76.38 \\
 &  & $(10^{-5}, 2^{-5})$ & $(10^{5}, 2^{5})$ & $(10^{-5}, 10^{-5}, 2^{-5})$ & $(1, 1, 2^{5})$ & $(10^{-5}, 10^{-5}, 2^{5})$ & $(10^{-3}, 10^{-2}, 2^{3})$ \\
 & 15 \% & 76.38 & 70 & 79.13 & 76.77 & 71.65 & 76.38 \\
 &  & $(10^{-5}, 2^{-5})$ & $(10^{5}, 2^{5})$ & $(1, 10^{-2}, 2^{-4})$ & $(10^{-4}, 10^{-5}, 2^{5})$ & $(10^{-4}, 10^{-2}, 2^{5})$ & $(10^{-5}, 10^{-5}, 2^{5})$ \\
 & 20 \% & 76.38 & 76.38 & 78.35 & 81.89 & 74.02 & 76.38 \\
 &  & $(10^{-5}, 2^{-5})$ & $(10^{5}, 2^{5})$ & $(1, 10^{-2}, 2^{-4})$ & $(1, 1, 2^{5})$ & $(10^{-2}, 10^{1}, 2^{4})$ & $(10^{-5}, 10^{-5}, 2^{5})$ \\ \hline
{\begin{tabular}[c]{@{}c@{}}vehicle2\\ (846 x 19)\end{tabular}} & 0 \% & 77.95 & 50.39 & 76.46 & 94.49 & 72.05 & 72.05 \\
 &  & $(1, 1)$ & $(10^{4}, 2^{3})$ & $(10^{-2}, 10^{-1}, 2^{-2})$ & $(10^{-1}, 10^{-1}, 2^{5})$ & $(10^{-4}, 10^{-3}, 2^{2})$ & $(10^{-3}, 10^{-3}, 2^{4})$ \\
 & 5 \% & 80.31 & 72.05 & 94.49 & 95.67 & 72.05 & 72.05 \\
 &  & $(10^{-1}, 1)$ & $(10^{5}, 2^{2})$ & $(10^{-2}, 10^{-1}, 2^{-3})$ & $(1, 1, 2^{5})$ & $(10^{1}, 10^{-4}, 2^{5})$ & $(10^{-4}, 10^{-4}, 2^{3})$ \\
 & 10 \% & 81.5 & 72.05 & 89.76 & 94.49 & 92.05 & 92.05 \\
 &  & $(10^{-1}, 1)$ & $(10^{5}, 2^{2})$ & $(1, 1, 2^{-5})$ & $(10^{-1}, 10^{-1}, 2^{5})$ & $(10^{-5}, 10^{-2}, 2^{5})$ & $(10^{-5}, 10^{-5}, 2^{5})$ \\
 & 15 \% & 81.89 & 90.87 & 91.73 & 92.91 & 92.05 & 92.05 \\
 &  & $(10^{-1}, 1)$ & $(10^{5}, 2^{5})$ & $(10^{-2}, 10^{-2}, 2^{-5})$ & $(10^{-1}, 10^{-1}, 2^{5})$ & $(10^{-5}, 10^{-5}, 2^{5})$ & $(10^{-5}, 10^{-5}, 2^{5})$ \\
 & 20 \% & 71.02 & 85.67 & 88.98 & 93.31 & 82.05 & 82.05 \\
 &  & $(10^{-1}, 1)$ & $(10^{5}, 2^{5})$ & $(10^{-5}, 10^{-5}, 2^{-5})$ & $(1, 1, 2^{5})$ & $(10^{-5}, 10^{-5}, 2^{5})$ & $(10^{-5}, 10^{-5}, 2^{5})$ \\ \hline
 {\begin{tabular}[c]{@{}c@{}}vertebral\_column\_2clases\\ (310 x 7)\end{tabular}} & 0 \% & 69.89 & 75.56 & 89.25 & 88.17 & 79.89 & 79.89 \\
 &  & $(10^{-5}, 2^{-5})$ & $(10^{5}, 2^{5})$ & $(10^{-2}, 10^{-2}, 2^{-5})$ & $(10^{-1}, 10^{-1}, 2^{4})$ & $(10^{-4}, 10^{-3}, 2^{4})$ & $(10^{-5}, 10^{-5}, 2^{5})$ \\
 & 5 \% & 69.89 & 70.11 & 89.25 & 91.4 & 90.89 & 90.89 \\
 &  & $(10^{-5}, 2^{-5})$ & $(10^{5}, 2^{5})$ & $(10^{-1}, 10^{-1}, 2^{-5})$ & $(10^{-1}, 10^{-1}, 2^{4})$ & $(10^{-4}, 10^{-4}, 2^{3})$ & $(10^{-5}, 10^{-5}, 2^{5})$ \\
 & 10 \% & 69.89 & 80.78 & 86.02 & 88.17 & 88.89 & 88.89 \\
 &  & $(10^{-5}, 2^{-5})$ & $(10^{5}, 2^{5})$ & $(10^{-1}, 10^{-1}, 2^{-5})$ & $(10^{-1}, 10^{-1}, 2^{4})$ & $(10^{-1}, 10^{-4}, 2^{3})$ & $(10^{-5}, 10^{-5}, 2^{5})$ \\
 & 15 \% & 69.89 & 78.65 & 82.8 & 86.02 & 80.11 & 79.89 \\
 &  & $(10^{-5}, 2^{-5})$ & $(10^{5}, 2^{5})$ & $(1, 10^{-1}, 2^{-5})$ & $(10^{1}, 1, 2^{2})$ & $(10^{-5}, 10^{-5}, 2^{5})$ & $(10^{-5}, 10^{-5}, 2^{5})$ \\
 & 20 \% & 76.34 & 75.45 & 79.57 & 68.82 & 75.11 & 69.89 \\
 &  & $(10^{1}, 2^{-1})$ & $(10^{5}, 2^{5})$ & $(10^{-1}, 10^{-2}, 2^{-2})$ & $(10^{-1}, 10^{1}, 2^{5})$ & $(10^{-2}, 10^{-5}, 2^{3})$ & $(10^{-5}, 10^{-5}, 2^{5})$ \\ \hline
{\begin{tabular}[c]{@{}c@{}}wpbc\\ (194 x 34)\end{tabular}} & 0 \% & 77.97 & 77.97 & 77.97 & 76.27 & 77.12 & 77.97 \\
 &  & $(10^{-5}, 2^{-5})$ & $(10^{-5}, 2^{5})$ & $(10^{-5}, 10^{-5}, 2^{-5})$ & $(10^{-5}, 10^{-5}, 2^{5})$ & $(10^{-5}, 10^{-5}, 2^{5})$ & $(10^{-5}, 10^{-5}, 2^{5})$ \\
 & 5 \% & 77.97 & 90 & 77.97 & 69.49 & 77.12 & 77.97 \\
 &  & $(10^{-5}, 2^{-5})$ & $(10^{5}, 2^{5})$ & $(10^{-5}, 10^{-5}, 2^{-5})$ & $(1, 10^{1}, 2^{5})$ & $(10^{-5}, 10^{-3}, 2^{4})$ & $(10^{-5}, 10^{-5}, 2^{5})$ \\
 & 10 \% & 77.97 & 77.97 & 77.97 & 77.97 & 77.97 & 77.97 \\
 &  & $(10^{-5}, 2^{-5})$ & $(10^{5}, 2^{5})$ & $(10^{-5}, 10^{-5}, 2^{-5})$ & $(10^{1}, 1, 2^{5})$ & $(10^{-5}, 10^{-2}, 2^{4})$ & $(10^{-5}, 10^{-5}, 2^{5})$ \\
 & 15 \% & 77.97 & 75.78 & 77.97 & 77.97 & 77.97 & 77.97 \\
 &  & $(10^{-5}, 2^{-5})$ & $(10^{5}, 2^{5})$ & $(10^{-5}, 10^{-5}, 2^{-5})$ & $(10^{-2}, 10^{-5}, 2^{4})$ & $(10^{-5}, 10^{-5}, 2^{4})$ & $(10^{-5}, 10^{-5}, 2^{5})$ \\
 & 20 \% & 69.49 & 77.97 & 77.97 & 77.97 & 77.97 & 77.97 \\
 &  & $(10^{-2}, 2^{2})$ & $(10^{5}, 2^{5})$ & $(10^{-5}, 10^{-5}, 2^{-5})$ & $(10^{-1}, 10^{-3}, 2^{3})$ & $(10^{-5}, 10^{-5}, 2^{5})$ & $(10^{-5}, 10^{-5}, 2^{5})$ \\ \hline
 {\begin{tabular}[c]{@{}c@{}}yeast-0-2-5-6\_vs\_3-7-8-9\\ (1004 x 9)\end{tabular}} & 0 \% & 91.39 & 100 & 84.04 & 94.04 & 91.39 & 91.39 \\
 &  & $(10^{-5}, 2^{-5})$ & $(10^{5}, 2^{5})$ & $(10^{-1}, 10^{-1}, 2^{1})$ & $(1, 1, 2^{5})$ & $(10^{-5}, 10^{-5}, 2^{5})$ & $(10^{-4}, 10^{-4}, 2^{3})$ \\
 & 5 \% & 91.39 & 90.89 & 93.38 & 93.05 & 91.39 & 91.39 \\
 &  & $(10^{-5}, 2^{-5})$ & $(10^{5}, 2^{5})$ & $(1, 10^{-1}, 2^{-3})$ & $(10^{-2}, 10^{-1}, 2^{5})$ & $(10^{-5}, 10^{-2}, 2^{4})$ & $(10^{-3}, 10^{-2}, 2^{3})$ \\
 & 10 \% & 91.39 & 90.47 & 92.38 & 94.04 & 91.39 & 91.39 \\
 &  & $(10^{-5}, 2^{-5})$ & $(10^{5}, 2^{5})$ & $(1, 10^{-1}, 2^{-3})$ & $(10^{-1}, 10^{-1}, 2^{5})$ & $(10^{-2}, 10^{-5}, 2^{4})$ & $(10^{-3}, 10^{-2}, 2^{3})$ \\
 & 15 \% & 91.39 & 90.78 & 91.39 & 92.72 & 91.39 & 91.39 \\
 &  & $(10^{-5}, 2^{-5})$ & $(10^{5}, 2^{5})$ & $(1, 10^{-1}, 2^{-3})$ & $(10^{1}, 1, 2^{4})$ & $(10^{-4}, 10^{-5}, 2^{5})$ & $(10^{-3}, 10^{-3}, 2^{5})$ \\
 & 20 \% & 91.39 & 90.87 & 91.39 & 92.72 & 91.39 & 91.39 \\
 &  & $(10^{-5}, 2^{-5})$ & $(10^{5}, 2^{5})$ & $(10^{1}, 10^{-2}, 2^{-3})$ & $(10^{1}, 1, 2^{4})$ & $(10^{-5}, 10^{-5}, 2^{5})$ & $(10^{-5}, 10^{-5}, 2^{5})$ \\ \hline
 {\begin{tabular}[c]{@{}c@{}}yeast-0-2-5-7-9\_vs\_3-6-8\\ (1004 x 9)\end{tabular}} & 0 \% & 87.5 & 69.48 & 53.15 & 90.79 & 90.73 & 90.73 \\
 &  & $(10^{-5}, 2^{-5})$ & $(10^{5}, 2^{5})$ & $(10^{-1}, 10^{-5}, 2^{-5})$ & $(10^{-5}, 10^{-5}, 2^{5})$ & $(10^{-1}, 10^{-4}, 2^{4})$ & $(10^{-2}, 10^{-2}, 2^{3})$ \\
 & 5 \% & 87.5 & 100 & 87.5 & 90.79 & 90.73 & 87.5 \\
 &  & $(10^{-5}, 2^{-5})$ & $(10^{5}, 2^{5})$ & $(10^{-5}, 10^{-5}, 2^{-5})$ & $(10^{-5}, 10^{-5}, 2^{5})$ & $(10^{-3}, 10^{-4}, 2^{2})$ & $(10^{-5}, 10^{-5}, 2^{5})$ \\
 & 10 \% & 87.5 & 95.03 & 86.18 & 87.5 & 87.5 & 90.73 \\
 &  & $(10^{-5}, 2^{-5})$ & $(10^{5}, 2^{5})$ & $(10^{-3}, 10^{-3}, 2^{-5})$ & $(10^{-2}, 10^{-5}, 2^{3})$ & $(10^{-5}, 10^{-2}, 2^{3})$ & $(10^{-5}, 10^{-5}, 2^{5})$ \\
 & 15 \% & 90.73 & 85.67 & 87.5 & 98.34 & 90.73 & 90.27 \\
 &  & $(10^{-5}, 2^{-5})$ & $(10^{5}, 2^{5})$ & $(10^{-2}, 10^{-3}, 2^{-5})$ & $(1, 1, 2^{5})$ & $(10^{-5}, 10^{-5}, 2^{5})$ & $(10^{-5}, 10^{-5}, 2^{5})$ \\
 & 20 \% & 90.73 & 89.76 & 87.5 & 90.13 & 90.73 & 87.5 \\
 &  & $(10^{-5}, 2^{-5})$ & $(10^{5}, 2^{4})$ & $(10^{-2}, 10^{-5}, 2^{-5})$ & $(10^{-5}, 10^{-5}, 2^{5})$ & $(10^{-5}, 10^{-5}, 2^{5})$ & $(10^{-5}, 10^{-5}, 2^{5})$ \\ \hline
 \multicolumn{8}{l}{$^{\dagger}$ represents the proposed models. ACC represents the accuracy metric.}
\end{tabular}}
\end{table*}

\begin{table*}[htp]
\ContinuedFloat
\centering
    \caption{(Continued)}
    \resizebox{0.8\textwidth}{!}{                                                                                          
    \begin{tabular}{cccccccc}
\hline
Dataset & Noise & \begin{tabular}[c]{@{}c@{}}SVM \cite{cortes1995support} \\ ACC (\%)\\ ($d_1$, $\sigma$)\end{tabular} & \begin{tabular}[c]{@{}c@{}}GBSVM (PSO) \cite{xia2022gbsvm}\\ ACC (\%)\\ ($d_1$, $\sigma$)\end{tabular} & \begin{tabular}[c]{@{}c@{}}TSVM \cite{khemchandani2007twin} \\ ACC (\%)\\ ($d_1$, $d_2$, $\sigma$)\end{tabular} & \begin{tabular}[c]{@{}c@{}}GBTSVM$^{\dagger}$\\ ACC (\%)\\ ($d_1$, $d_2$, $\sigma$)\end{tabular} & \begin{tabular}[c]{@{}c@{}}LS-GBTSVM$^{\dagger}$\\ ACC (\%)\\ ($d_1$, $d_3$, $\sigma$)\end{tabular} & \begin{tabular}[c]{@{}c@{}}LS-GBSVM (SMO)$^{\dagger}$\\ ACC (\%)\\ ($d_1$, $d_3$, $\sigma$)\end{tabular} \\ \hline
 {\begin{tabular}[c]{@{}c@{}}yeast-0-5-6-7-9\_vs\_4 \\ (528 x 9)\end{tabular}} & 0 \% & 91.19 & 100 & 82.45 & 91.19 & 91.19 & 91.19 \\
 &  & $(10^{-5}, 2^{-5})$ & $(10^{3}, 2^{1})$ & $(10^{-3}, 10^{-2}, 2^{-4})$ & $(10^{-2}, 10^{-2}, 2^{5})$ & $(10^{-1}, 10^{-4}, 2^{4})$ & $(10^{-5}, 10^{-5}, 2^{5})$ \\
 & 5 \% & 91.19 & 89.57 & 88.68 & 91.19 & 91.19 & 91.19 \\
 &  & $(10^{-5}, 2^{-5})$ & $(10^{3}, 2^{1})$ & $(10^{-1}, 10^{-1}, 2^{-5})$ & $(1, 10^{-1}, 2^{2})$ & $(10^{-4}, 10^{-2}, 2^{4})$ & $(10^{-5}, 10^{-5}, 2^{5})$ \\
 & 10 \% & 91.19 & 80.54 & 79.31 & 91.19 & 91.19 & 91.19 \\
 &  & $(10^{-5}, 2^{-5})$ & $(10^{3}, 2^{1})$ & $(10^{-1}, 10^{-2}, 2^{-3})$ & $(10^{-2}, 10^{-2}, 2^{3})$ & $(10^{-5}, 10^{-5}, 2^{5})$ & $(10^{-5}, 10^{-5}, 2^{5})$ \\
 & 15 \% & 91.19 & 79.34 & 90.57 & 91.19 & 91.19 & 91.19 \\
 &  & $(10^{-5}, 2^{-5})$ & $(10^{3}, 2^{2})$ & $(10^{-4}, 10^{-5}, 2^{-5})$ & $(10^{-4}, 10^{-5}, 2^{3})$ & $(10^{-5}, 10^{-5}, 2^{5})$ & $(10^{-5}, 10^{-5}, 2^{5})$ \\
 & 20 \% & 91.19 & 89.43 & 90.57 & 87.42 & 91.19 & 91.19 \\
 &  & $(10^{-5}, 2^{-5})$ & $(10^{3}, 2^{2})$ & $(10^{-4}, 10^{-5}, 2^{-5})$ & $(10^{-5}, 10^{-4}, 2^{3})$ & $(10^{-5}, 10^{-5}, 2^{5})$ & $(10^{-5}, 10^{-5}, 2^{5})$ \\ \hline
{\begin{tabular}[c]{@{}c@{}}yeast-2\_vs\_4\\ (514 x 9)\end{tabular}} & 0 \% & 85.81 & 100 & 94.19 & 97.42 & 94.19 & 85.81 \\
 &  & $(10^{-5}, 2^{-5})$ & $(10^{5}, 2^{5})$ & $(10^{-1}, 10^{-1}, 2^{-5})$ & $(10^{-1}, 10^{-1}, 2^{5})$ & $(10^{-5}, 10^{-3}, 2^{5})$ & $(10^{-4}, 10^{-4}, 2^{3})$ \\
 & 5 \% & 85.81 & 95.48 & 94.19 & 93.55 & 94.19 & 85.81 \\
 &  & $(10^{-5}, 2^{-3})$ & $(10^{5}, 2^{5})$ & $(10^{-3}, 10^{-2}, 2^{-3})$ & $(10^{-1}, 1, 2^{3})$ & $(10^{-5}, 10^{-2}, 2^{5})$ & $(10^{-2}, 10^{-2}, 2^{4})$ \\
 & 10 \% & 85.81 & 90.67 & 82.26 & 97.42 & 85.81 & 85.81 \\
 &  & $(10^{-5}, 2^{-3})$ & $(10^{4}, 2^{5})$ & $(10^{-1}, 10^{-2}, 2^{-5})$ & $(10^{-1}, 10^{-1}, 2^{4})$ & $(10^{-5}, 10^{-2}, 2^{3})$ & $(10^{-4}, 10^{-4}, 2^{2})$ \\
 & 15 \% & 85.81 & 100 & 90.32 & 88.39 & 85.81 & 85.81 \\
 &  & $(10^{-5}, 2^{-3})$ & $(10^{4}, 2^{5})$ & $(1, 10^{-2}, 2^{-3})$ & $(1, 1, 2^{4})$ & $(10^{-5}, 10^{-2}, 2^{3})$ & $(10^{-4}, 10^{-1}, 2^{3})$ \\
 & 20 \% & 85.81 & 82.78 & 89.68 & 92.26 & 84.52 & 85.81 \\
 &  & $(10^{-5}, 2^{-3})$ & $(10^{5}, 2^{5})$ & $(10^{-1}, 10^{-2}, 2^{-5})$ & $(1, 1, 2^{4})$ & $(10^{-5}, 10^{-2}, 2^{3})$ & $(10^{-4}, 10^{-2}, 2^{5})$ \\ \hline
{\begin{tabular}[c]{@{}c@{}}yeast3\\ (1484 x 9)\end{tabular}} & 0 \% & 89.24 & 60.84 & 92.38 & 92.83 & 88.12 & 90.56 \\
 &  & $(10^{-5}, 2^{-5})$ & $(10^{5}, 2^{5})$ & $(10^{-2}, 10^{-1}, 2^{-3})$ & $(1, 10^{2}, 2^{5})$ & $(10^{-5}, 10^{-2}, 2^{5})$ & $(10^{-5}, 10^{-5}, 2^{5})$ \\
 & 5 \% & 88.12 & 89.42 & 90.13 & 94.62 & 96.12 & 96.12 \\
 &  & $(10^{-5}, 2^{-5})$ & $(10^{5}, 2^{1})$ & $(1, 10^{-1}, 2^{-5})$ & $(10^{-1}, 1, 2^{5})$ & $(10^{-5}, 10^{-2}, 2^{5})$ & $(10^{-5}, 10^{-5}, 2^{5})$ \\
 & 10 \% & 88.12 & 90 & 80.81 & 91.93 & 87.44 & 88.12 \\
 &  & $(10^{-5}, 2^{-5})$ & $(10^{5}, 2^{1})$ & $(10^{-1}, 10^{-1}, 2^{-5})$ & $(10^{-3}, 10^{-1}, 2^{5})$ & $(10^{-5}, 10^{-5}, 2^{5})$ & $(10^{-5}, 10^{-5}, 2^{2})$ \\
 & 15 \% & 88.12 & 100 & 87.89 & 88.12 & 88.12 & 88.12 \\
 &  & $(10^{-5}, 2^{-5})$ & $(10^{3}, 2^{1})$ & $(10^{-1}, 10^{-1}, 2^{-5})$ & $(10^{-5}, 10^{-5}, 2^{5})$ & $(10^{-5}, 10^{-5}, 2^{5})$ & $(10^{-5}, 10^{-4}, 2^{5})$ \\
 & 20 \% & 88.12 & 88.87 & 90.13 & 94.17 & 90.67 & 88.12 \\
 &  & $(10^{-5}, 2^{-5})$ & $(10^{5}, 2^{1})$ & $(10^{-1}, 10^{-1}, 2^{-5})$ & $(1, 1, 2^{5})$ & $(10^{-3}, 10^{-3}, 2^{5})$ & $(10^{-5}, 10^{-5}, 2^{5})$ \\ \hline
{Average ACC} & 0 \% & 76.27 & 79.43 & 84.83 & \textbf{88.74} & 85.85 & 84.93 \\
 & 5 \% & 77.86 & 80.7 & 85.13 & \textbf{88.61} & 86.95 & 84.57 \\
 & 10 \% & 77.19 & 84.05 & 84.51 & \textbf{89.25} & 84.79 & 84.68 \\
 & 15 \% & 79.3 & 83.24 & 85.37 & \textbf{88.49} & 85.32 & 85.48 \\
 & 20 \% & 81.08 & 81.76 & 84.6 & \textbf{86.97} & 85.39 & 85.2 \\ \hline
{Average Rank} & 0 \% & 4.42 & 4.03 & 3.5 & 2.17 & 3.31 & 3.58 \\
 & 5 \% & 4.53 & 4.38 & 3.56 & 2.21 & 2.86 & 3.47 \\
 & 10 \% & 4.43 & 3.9 & 3.81 & 2.1 & 3.33 & 3.43 \\
 & 15 \% & 4.24 & 4.5 & 3.28 & 2.43 & 3.21 & 3.35 \\
 & 20 \% & 4.18 & 4.47 & 3.44 & 2.54 & 3.1 & 3.26 \\ \hline
 \multicolumn{8}{l}{$^{\dagger}$ represents the proposed models. ACC represents the accuracy metric.} \\
 \multicolumn{8}{l}{Bold text denotes the model with the highest average ACC.}
\end{tabular}}
\end{table*}

\end{document}